\newtheorem{proposition}{Proposition}
\newtheorem{theorem}{Theorem}
\theoremstyle{definition}
\theoremstyle{remark}
\newtheorem{remark}{Remark}
\newtheorem{example}{Example}
\newcommand{\R}{\ensuremath{\mathbb{R}}}
\newcommand{\Z}{\ensuremath{\mathbb{Z}}}
\newcommand{\conv}{\ensuremath{\overline{\text{conv}}}}
\newcommand{\be}[1]{\begin{equation}\label{#1}}
\newcommand{\ee}{\end{equation}}
\newcommand{\persp}{\texttt{persp}\ }
\newcommand{\decomp}{\texttt{decomp}\ }
\newcommand{\pairwise}{\texttt{pairwise}\ }
\newcommand{\rev}[1]{#1}
\newcommand{\new}[1]{}
\newcommand{\lasso}{\rev{\texttt{$\ell_1$-approx}}\ }
\newcommand{\del}[1]{{\color{blue}  #1}}
\def\DoubleSpacedXI{\linespread{1.5}}
\title[Sparse and smooth signal estimation]{Sparse and Smooth Signal Estimation:\\ Convexification of L0 Formulations}
\author{Alper Atamt\"urk, Andr\'{e}s G\'{o}mez and Shaoning Han}
\thanks{ \noindent \hskip -5mm
A. Atamt\"urk: Department of Industrial Engineering \& Operations Research, University of California, Berkeley, CA 94720.
\texttt{atamturk@berkeley.edu}   \\
A. G\'{o}mez, S. Han: \rev{Daniel J. Epstein Department of Industrial \& Systems Engineering,  University of Southern California, CA 90089. \texttt{gomezand@usc.edu}, \texttt{shaoning@usc.edu}}
}
\begin{document}
	\maketitle
	
	\BCOLReport{18.05}
	
	\begin{abstract}
		\vskip 3mm
		\noindent Signal estimation problems with smoothness and sparsity priors can be naturally modeled as quadratic optimization with $\ell_0$-``norm" constraints. Since such problems are non-convex and hard-to-solve, the standard approach is, instead, to tackle their convex surrogates based on $\ell_1$-norm relaxations. 
		 In this paper, we propose new iterative \rev{(convex)} 
		 conic quadratic relaxations that
		 exploit not only the $\ell_0$-``norm" terms, but also the fitness and smoothness functions. The iterative convexification approach substantially closes the gap between the $\ell_0$-``norm" and its $\ell_1$ surrogate. 
		 \rev{These stronger relaxations lead to} significantly better estimators than $\ell_1$-norm approaches \rev{and also allow one to utilize affine sparsity priors.} In addition, the parameters of the model and the resulting estimators are easily interpretable.
		 Experiments 
		 \rev{with a tailored Lagrangian decomposition method}
		 indicate that the proposed iterative convex relaxations \rev{yield solutions within 1\% of the exact $\ell_0$ approach, and can tackle instances with up to 100,000 variables under one minute.}  \\
		 
		 \noindent
		\textbf{Keywords} Mixed-integer quadratic optimization, conic quadratic optimization, perspective formulation,  sparsity. \\
	\end{abstract}

\begin{center}
	November 2018\rev{; January 2020}\new{; August 2020}
\end{center}

\pagebreak

\section{Introduction}

Given nonnegative data $y\in \R_+^n$ corresponding to a noisy realization of an underlying signal, we consider the problem of removing the noise and recovering the original, uncorrupted signal $y^*$. A successful recovery of the signal requires exploiting \emph{prior} knowledge on the structure and characteristics of the signal effectively.

A common prior knowledge on the underlying signal is \emph{smoothness}. Smoothing considerations can be incorporated in denoising problems through quadratic penalties for deviations in successive estimates \citep{poggio1985computational}. In particular, denoising of a smooth signal can be done by solving an optimization problem of the form  
\begin{equation}
\label{eq:smooth}
\min_{x\in \R_+^n}\|y-x\|_2^2+\lambda\|Px\|_2^2,
\end{equation}
where $x$ corresponds to the estimation for $y^*$, $\lambda>0$ is a smoothing regularization parameter, $P\in \R^{m\times n}$ is a linear operator, the estimation error term $\|y-x\|_2^2$ measures the \emph{fitness} to data, and the quadratic penalty term $\|Px\|_2^2$ models the smoothness considerations. In its simplest form
\begin{equation}
\label{eq:mrf}
\|Px\|_2^2=\sum_{\rev{\{i,j\}}\in A}(x_i-x_j)^2,
\end{equation}
where $A$ encodes the notion of adjacency, e.g., consecutive observations in a time series or adjacent pixels in an image. If $P$ is given according to \eqref{eq:mrf}, then problem \eqref{eq:smooth} is a convex \emph{Markov Random Fields} problem \citep{hochbaum2001efficient} or \emph{metric labeling problem} \citep{kleinberg2002approximation}, commonly used in the image segmentation context \citep{boykov2001fast,kolmogorov2004energy} \new{and in clustering \citep{hochbaum2018adjacency},} for which efficient combinatorial algorithms exist.
\ignore{ Another well-known special case of \eqref{eq:smooth}, arising in time-series analysis, corresponds to
\begin{equation}
\label{eq:hodrickPrescott}
\|Px\|_2^2=\sum_{i=2}^{n-1}(x_{i-1}-2x_i+x_{i+1})^2;
\end{equation}
in this case, \eqref{eq:smooth} corresponds to the \emph{Hodrick-Prescott filter} \citep{baxter1999measuring,hodrick1997postwar,leser1961simple}, popular in the economics literature, supported by standard packages for data analysis such as \texttt{SAS} and \texttt{R}, and solvable in $O(n)$ operations.} Even in its general form, \eqref{eq:smooth} is a convex quadratic optimization, for which a plethora of efficient algorithms exist.

Another naturally occurring signal characteristic is \emph{sparsity}, i.e., the underlying signal differs from a base value in only a small proportion of the indexes. Sparsity arises in diverse application domains including medical imaging \citep{lustig2007sparse}, genomic studies \citep{huang2008adaptive}, face recognition \citep{yang2010fast}, and is at the core of \emph{compressed sensing} methods \citep{donoho2006compressed}. In fact, the ``bet on sparsity" principle \citep{hastie2001elements} calls for systematically assuming sparsity in high-dimensional statistical inference problems. 
Sparsity constraints can be modeled using the $\ell_0$-``norm"\footnote{The so-called $\ell_0$-``norm" is not a proper norm as it violates homogeneity.}, leading to estimation problems of the form
\begin{equation}
\label{eq:denoising}
\min_{x\in\R_+^n}\|y-x\|_2^2+\lambda\sum_{\rev{\{i,j\}}\in A}(x_i-x_j)^2 \text{ subject to }\|x\|_0\leq k,
\end{equation}
where $k\in \Z_+$ is a target sparsity and $\|x\|_0=\sum_{i=1}^n\mathbbm{1}_{x_i\neq 0}$, where $\mathbbm{1}_{(\cdot)}$ is the indicator function equal to $1$ if $(\cdot)$ is true and equal to $0$ otherwise. 
\rev{In addition, the indicators can also be used to model \emph{affine sparsity constraints} \cite{dong2019integer,dong2019structural}, enforcing more sophisticated priors than simple sparsity; see Section~\ref{sec:syntData} for an illustration. }

Unlike \eqref{eq:smooth}, problem \eqref{eq:denoising} is \emph{non-convex} and hard-to-solve exactly. 
The regularized version of \eqref{eq:denoising}, given by 
\begin{equation}
\label{eq:lagrangean}
\min_{x\in\R_+^n}\|y-x\|_2^2+\lambda\sum_{\rev{\{i,j\}}\in A}(x_i-x_j)^2 +\mu \|x\|_0
\end{equation}
with $\mu\geq 0$, has received (slightly) more attention. 
Problem \eqref{eq:lagrangean} corresponds to a Markov Random Fields problem with non-convex deviation functions \citep[see][]{ahuja2004cut,hochbaum2013multi}, for which a pseudo-polynomial combinatorial algorithm of complexity $O\left(\frac{|A|n}{\epsilon^2}\log\left(\frac{n^2}{\epsilon |A|}\right)\right)$ exists, where $\epsilon$ is a precision parameter \rev{and $|A|$ is the cardinality of set $A$}; to the best of our knowledge, this algorithm has not been implemented to date. More recently, in the context of signal denoising, \citet{bach2016submodular} proposed another pseudo-polynomial algorithm of complexity $O\left(\left(\frac{n}{\epsilon}\right)^3\log\left(\frac{n}{\epsilon} \right)\right)$, and demonstrated its performance for instances with $n=50$. The aforementioned algorithms rely on a discretization of the $x$ variables, and their performance depends on how precise the discretization (given by the parameter $\epsilon$) is. 
Finally, a recent result of \citet{atamturk2018strong} on quadratic optimization with M-matrices and indicators imply that \eqref{eq:lagrangean} is equivalent to a submodular minimization problem, which leads to a strongly polynomial-time algorithm of complexity $O(n^7)$. The high complexity by a blackbox submodular minimization algorithm precludes its use except for small instances. No polynomial-time algorithm is known for the constrained problem \eqref{eq:denoising}. 

In fact, problems \eqref{eq:denoising} and \eqref{eq:lagrangean} are rarely tackled directly. One of the most popular techniques used to tackle signal estimation problems with sparsity consists of replacing the non-convex term $\|x\|_0$ with the convex $\ell_1$-norm, $\|x\|_1=\sum_{i=1}^n|x_i|$, see Section~\ref{sec:l1norm} for details. The resulting optimization problems with the $\ell_1$-norm can be solved very efficiently, even for large instances; however, the $\ell_1$ problems are often weak relaxations of the exact $\ell_0$ problem \eqref{eq:denoising}, and the estimators obtained may be poor, as a consequence. Alternatively, there is a increasing effort for solving the mixed-integer optimization (MIO) \eqref{eq:denoising} exactly using enumerative techniques, see Section~\ref{sec:MIO}. While the recovered signals are indeed high quality, \rev{exact} MIO approaches to-date \rev{require at least a few days to solve instances with $n\geq 1,000$}, and are inadequate to tackle many realistic instances as a consequence.

\subsection*{Contributions and outline} In this paper, we discuss how to bridge the gap between the easy-to-solve $\ell_1$ approximations and the often intractable $\ell_0$ problems in a convex optimization framework. Specifically, we construct a set of \textit{iterative convex relaxations} for problems \eqref{eq:denoising} and \eqref{eq:lagrangean} with increasing strength. These convex relaxations are considerably stronger than the $\ell_1$ relaxation, and also significantly improve and generalize other existing convex relations in the literature, including the \emph{perspective relaxation} (see Section~\ref{sec:perspective}) and recent convex relaxations obtained from simple pairwise quadratic terms (see Section~\ref{sec:strong2var}). The strong convex relaxations can be used 
to obtain high quality, if not optimal, solutions for \eqref{eq:denoising}--\eqref{eq:lagrangean}\rev{, resulting} in better performance than the existing methods\rev{; in our computations, solutions to instances with $n=1,$$000$ are obtained with off-the-shelf convex solvers within seconds}. \rev{For additional scalability, we give an easy-to-parallelize tailored Lagrangian decomposition method that solves instances with $n=100,$$000$ under one minute.} Finally, the proposed formulations are amenable to \emph{conic quadratic} optimization techniques, thus can be tackled using off-the-shelf solvers, resulting in several \rev{advantages}: \emph{(i)}~\rev{the methods described here will benefit from the continuous improvements of conic quadratic optimization solvers}; \emph{(ii)}~the proposed approach is flexible, as it can be used to tackle either \eqref{eq:denoising} or \eqref{eq:lagrangean}, \rev{as well as general affine sparsity constraints, by simply changing the objective or adding constraints}.

 Figure~\ref{fig:nonnegSynt10Intro} illustrates the performance of the $\ell_1$-norm estimator and the proposed \rev{strong convex} estimators \rev{for an instance with $n=1,$$000$}. \rev{The new convex estimator, depicted in Figure~\ref{fig:nonnegSynt10Intro}(C), requires only one second to solve; the convex estimator enhanced with additional priors in Figure~\ref{fig:nonnegSynt10Intro}(D) is solved under five seconds.}
 
 \ignore{
 \begin{figure}[h!]
 	\subfloat[True signal (gray) and noisy observations .]{\includegraphics[width=0.50\textwidth,trim={10.5cm 6cm 10.5cm 6cm},clip]{./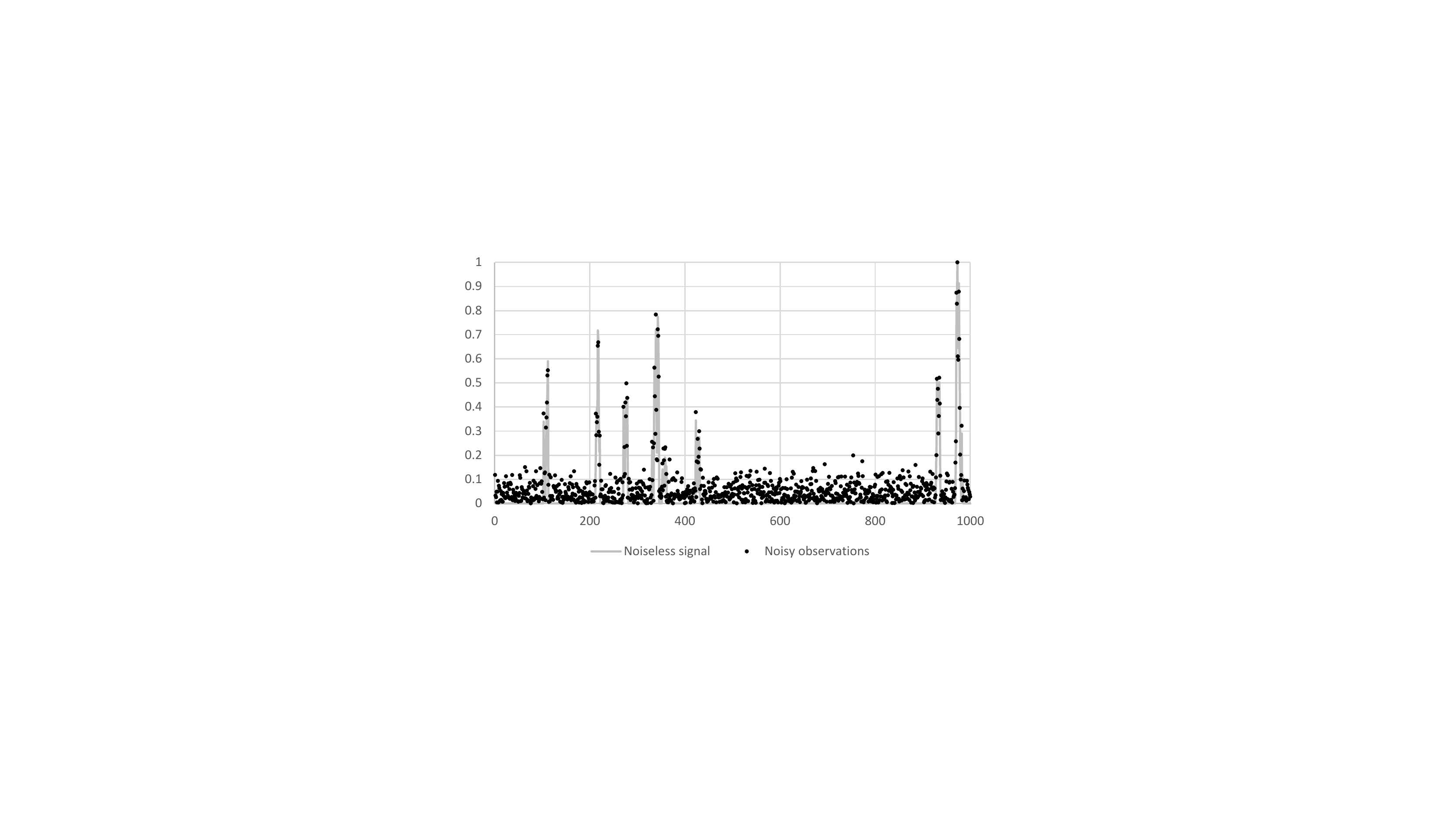}}\hfill
 	\subfloat[\lasso results in dense and shrunk signals with many ``false positives".]{\includegraphics[width=0.50\textwidth,trim={10.5cm 6cm 10.5cm 6cm},clip]{./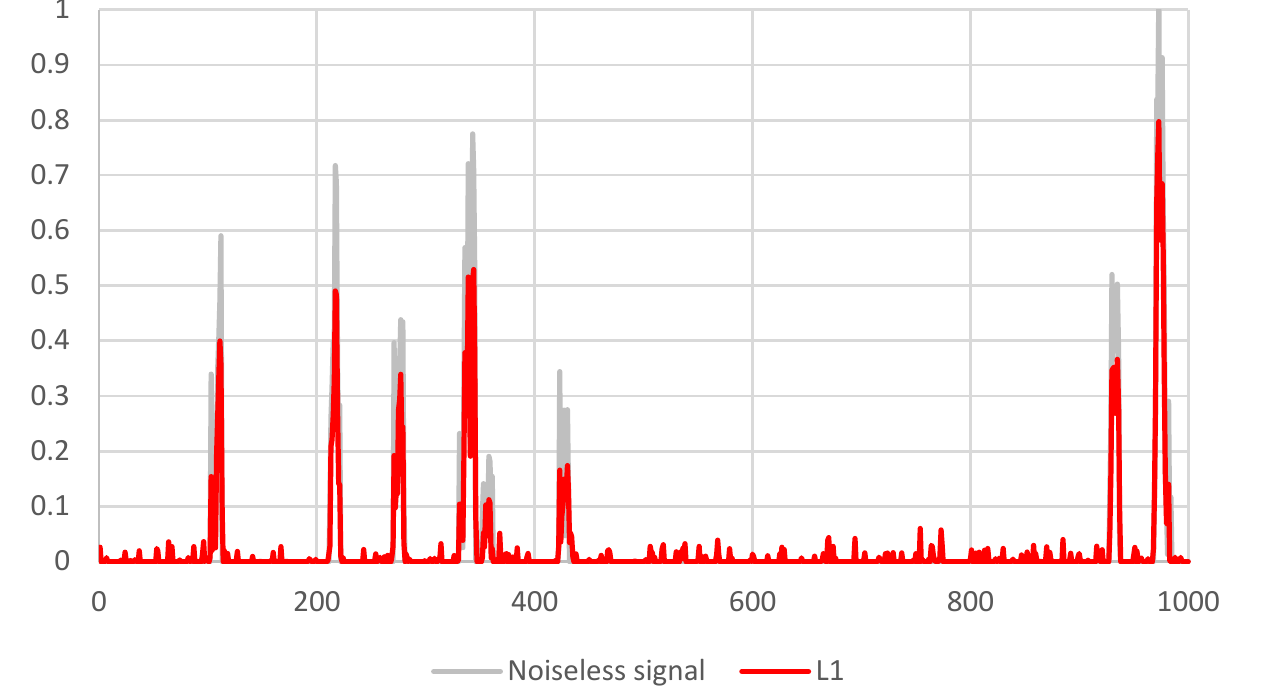}} \newline
 	\subfloat[New convex relaxation of \eqref{eq:denoising} yields a better sparse estimator, but has q few ``false positives." ]{\includegraphics[width=0.50\textwidth,trim={10.5cm 6cm 10.5cm 6cm},clip]{./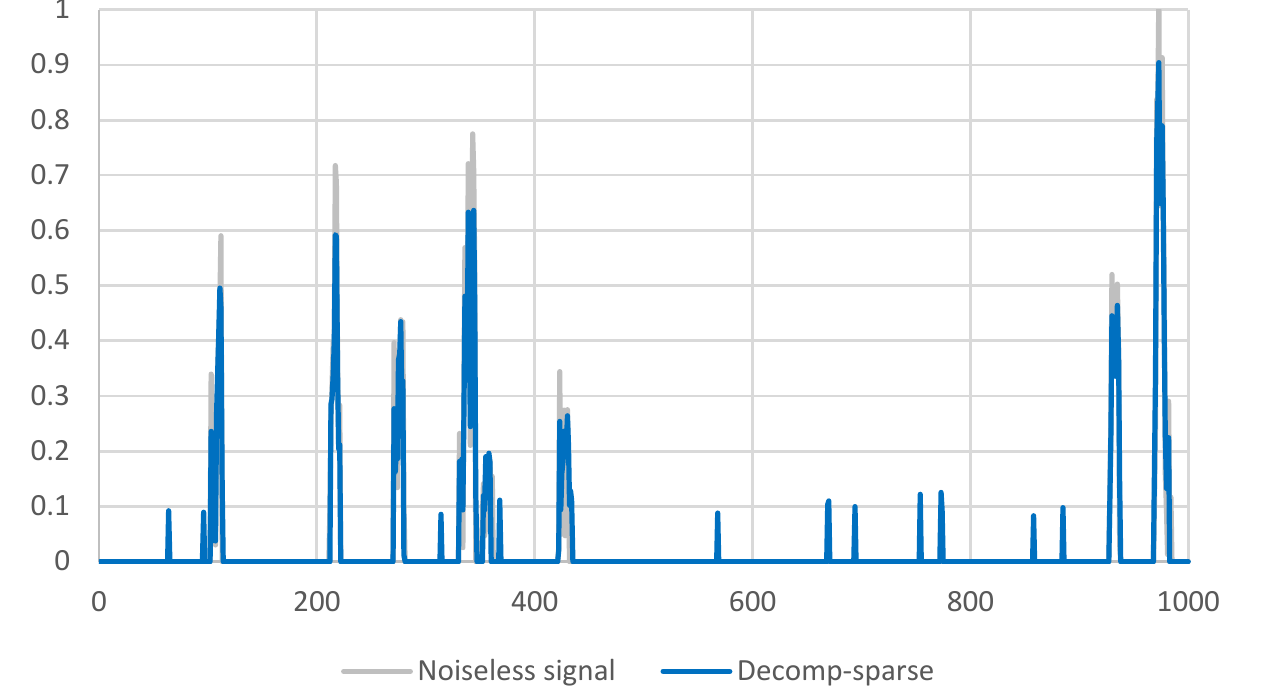}}\hfill
 	\subfloat[Priors can easily be incorporated into proposed relaxations, yielding a near-perfect estimator in this case.]{\includegraphics[width=0.50\textwidth,trim={10.5cm 6cm 10.5cm 6cm},clip]{./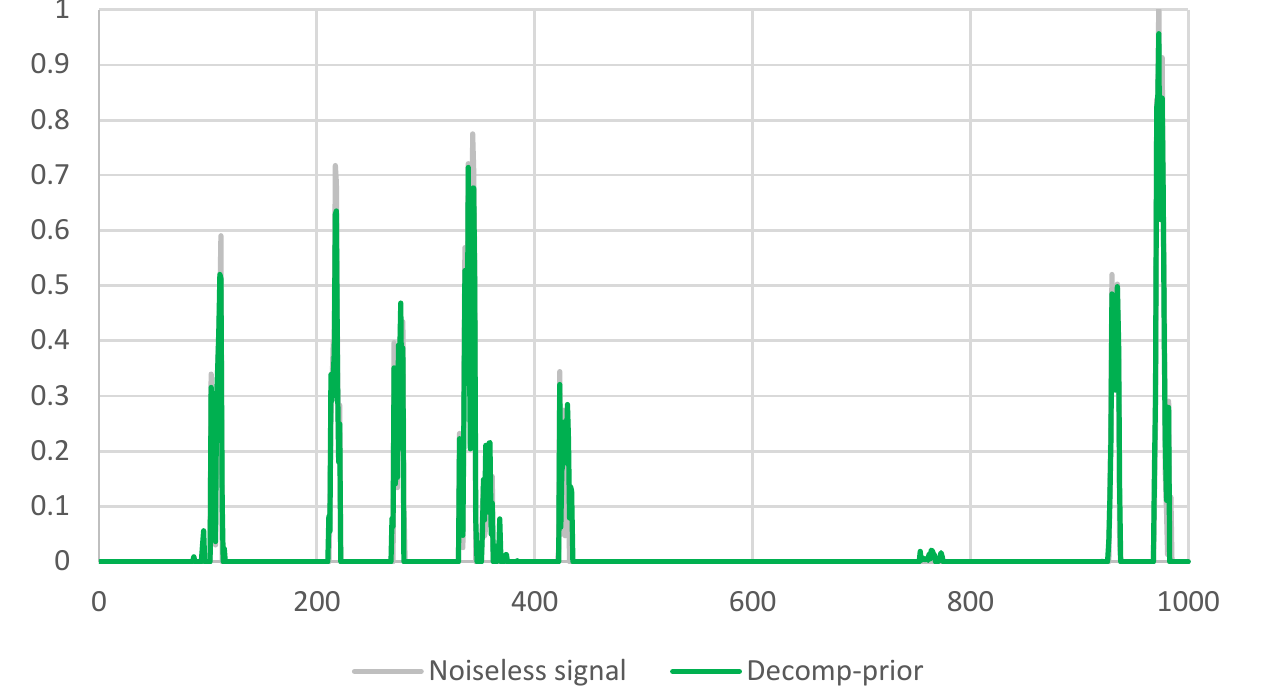}}
 \end{figure}
}
 \begin{figure}[h!]
 	\subfloat[True signal and noisy observations.]{\includegraphics[width=0.50\textwidth,trim={10.5cm 6cm 10.5cm 6cm},clip]{./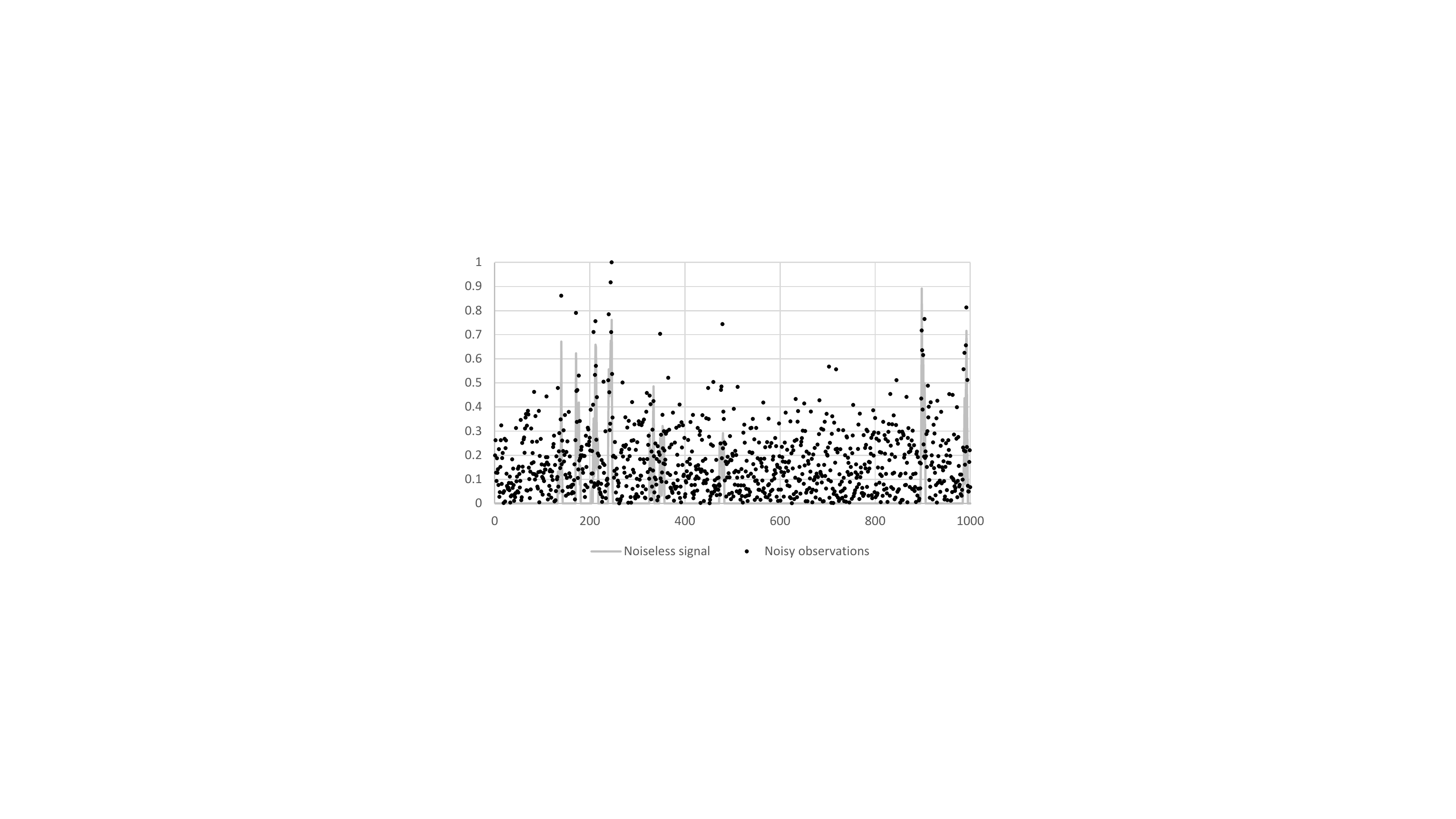}}\hfill
 	\subfloat[\lasso results in dense and shrunk estimators with many ``false positives."]{\includegraphics[width=0.50\textwidth,trim={10.5cm 6cm 10.5cm 6cm},clip]{./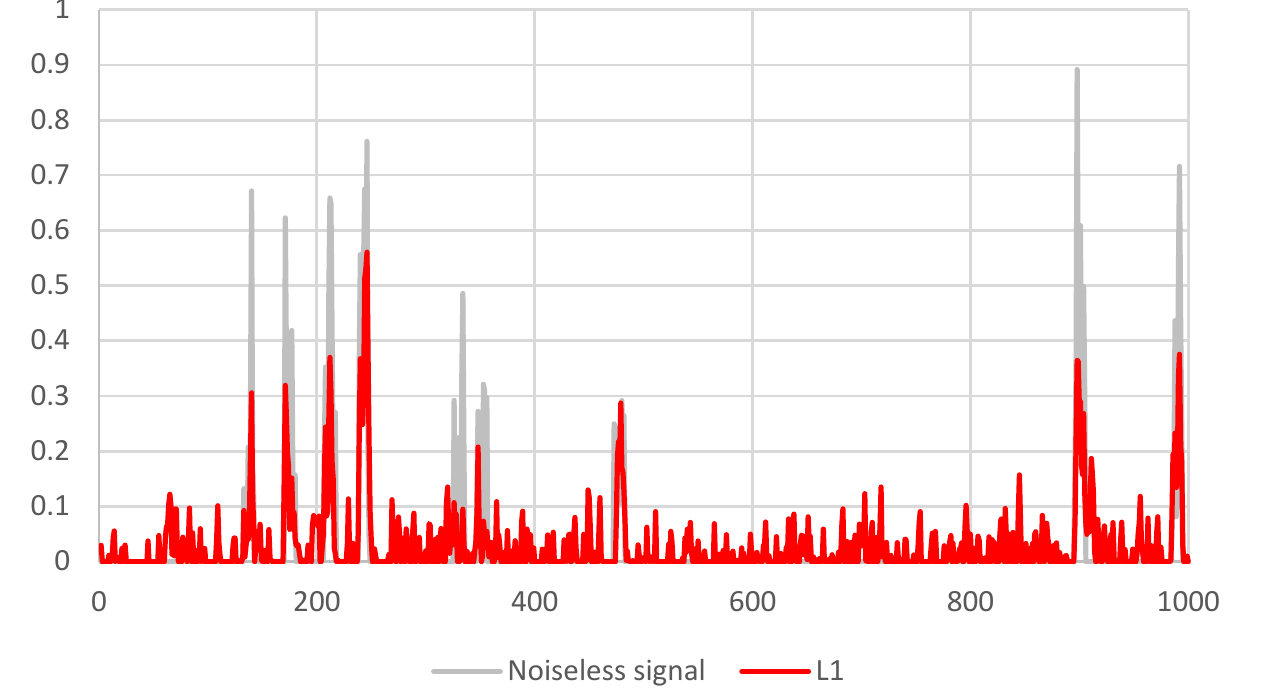}} \newline
 	\subfloat[New strong convex formulation yields better sparse estimators with few ``false positives." ]{\includegraphics[width=0.50\textwidth,trim={10.5cm 6cm 10.5cm 6cm},clip]{./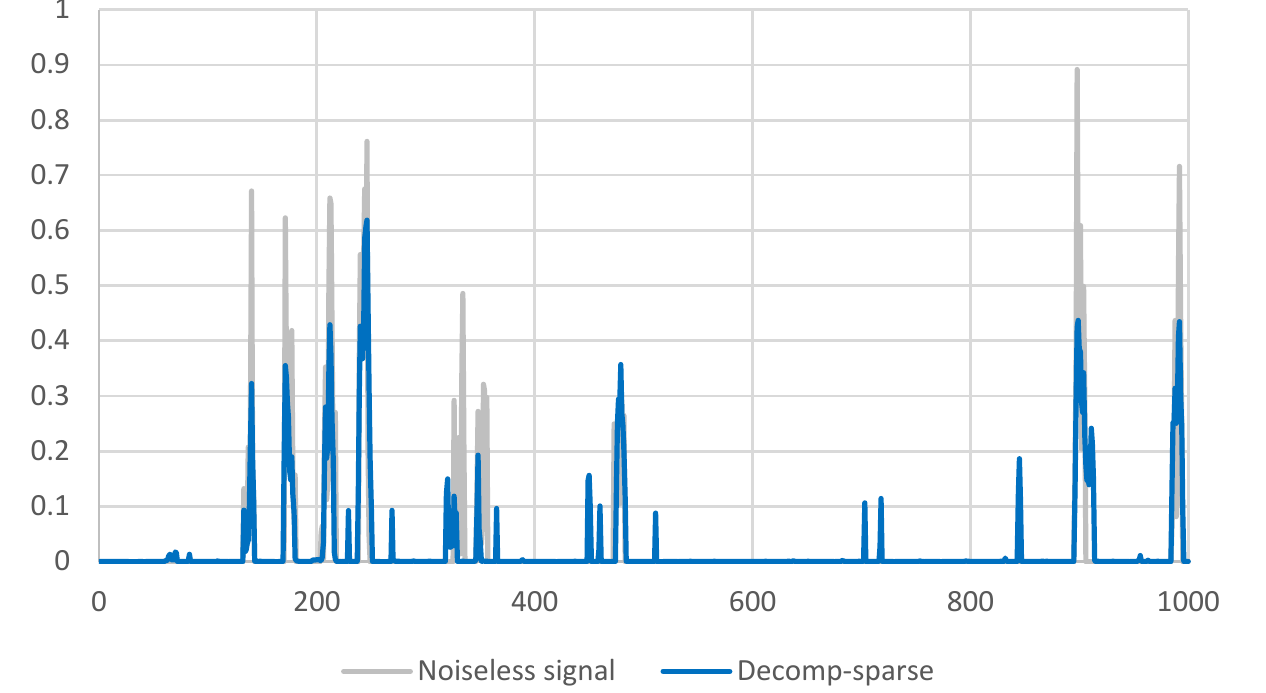}}\hfill
 	\subfloat[Incoporating additional priors further improves the estimators, matching the sparsity pattern of the signal.]{\includegraphics[width=0.50\textwidth,trim={10.5cm 6cm 10.5cm 6cm},clip]{./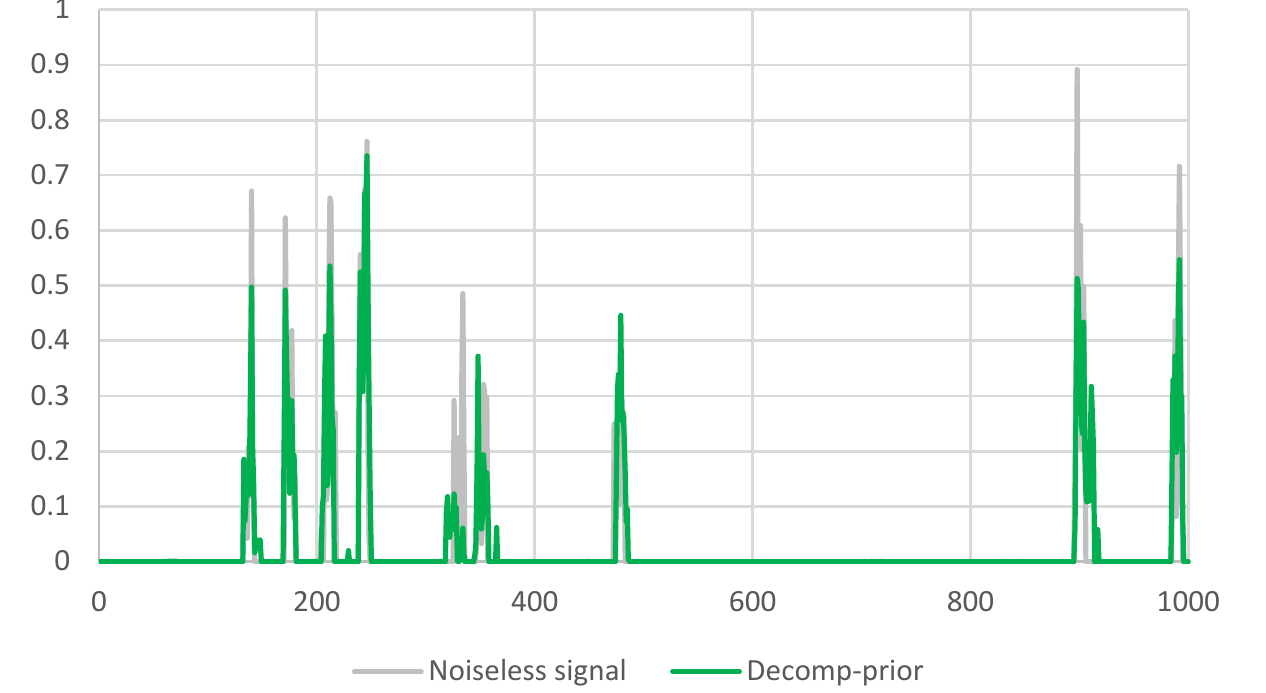}}
 	\caption{Estimators from \rev{$\ell_1$-approximation} and the new strong convex formulations (decomp) for signal denoising.}
 	\label{fig:nonnegSynt10Intro}
 \end{figure}

The rest of the paper is organized as follows. In Section~\ref{sec:background} we review the relevant background for the paper. In Section~\ref{sec:Mmatrix} we introduce the strong iterative convex formulations for \eqref{eq:denoising}--\eqref{eq:lagrangean}. In Section~\ref{sec:implementation} we \rev{give conic quadratic extended reformulation of the model and describe a scalable
Lagrangian decomposition method to solve it.}
 In Section~\ref{sec:computations} we test the performance of the methods \rev{from a computational and statistical perspective}, and in Section~\ref{sec:conclusions} we conclude the paper with a few final remarks.

\subsection*{Notation} Throughout the paper, we adopt the following convention for division by $0$: given $a\geq 0$, $a/0=\infty$ if $a>0$ and $a/0=0$ if $a=0$. For a set $X\subseteq \R^n$, \rev{let $\text{conv}(X)$ denote the convex hull of $X$ and  $\conv(X)$ the closure of $\text{conv}(X)$}. \rev{Given two matrices $Q$, $R$ of the same dimensions, we denote by $\langle Q,R\rangle$ the inner product of $Q$ and $R$.}

\section{Background}
\label{sec:background}
In this section, we review formulations relevant to our discussion.
First we review the usual $\ell_1$-norm approximation (Section~\ref{sec:l1norm}), next we discuss MIO formulations (Section~\ref{sec:MIO}), then we review the perspective reformulation, a standard technique in the MIO literature, (Section~\ref{sec:perspective}), and finally pairwise convex relaxations that were recently proposed (Section~\ref{sec:strong2var}). 
\subsection{L1-norm approximations}
\label{sec:l1norm}
A standard technique for signal estimation problems with sparsity is to replace the $\ell_0$-norm with the $\ell_1$-norm in \eqref{eq:denoising}, leading to the convex optimization problem
\begin{equation}
\label{eq:lasso}
(\rev{\texttt{$\ell_1$-approx}}) \  \ \ \ \ \ \ \min_{x\in \R_+^n}\|y-x\|_2^2+\lambda\new{\sum_{\{i,j\}\in A}} (x_i-x_j)^2\text{ subject to }\|x\|_1\leq k.
\end{equation}
The $\ell_1$-norm approximation was proposed by \citet{tibshirani1996regression} in the context of sparse linear regression, and is often referred to as lasso. The main motivation for the $\ell_1$-approximation is that the $\ell_1$-norm is the convex $p$-norm \new{closest} to the $\ell_0$-norm. In fact, for $L=\left\{x\in [0,1]^n:\|x\|_0\leq 1\right\}$, it is easy to show that $\text{conv}(L)= \left\{x\in [0,1]^n:\|x\|_1\leq 1\right\}$; therefore, the $\ell_1$-norm approximation is considered to be the best possible convex relaxation of the $\ell_0$-norm. 

\rev{The} $\ell_1$-approximation is currently the most commonly used approach for sparsity \citep{hastie2015statistical}. It has been applied to a variety of signal estimation problems including signal decomposition and spike detection \citep[e.g.,][]{chen2001atomic,friedrich2017fast,vogelstein2010fast,lin2014}, and pervasive
 in the compressed sensing literature \citep{candes2008introduction,candes2008enhancing,donoho2006stable}. A common variant is the fused lasso \citep{tibshirani2005sparsity}, which involves a sparsity-inducing term of the form $\sum_{i=1}^{n-1}|x_{i+1}-x_i|$; the fused lasso was further studied in the context of signal estimation \cite{rinaldo2009properties}, and is often used for digital imaging processing under the name of \emph{total variation denoising} \citep{rudin1992nonlinear,vogel1996iterative,padilla2017}. Several other generalizations of the $\ell_1$-approximation exist \citep{tibshirani2011regression}, including the elastic net \citep{zou2005regularization,nevo2017}, the adaptive lasso \citep{zou2006adaptive}, the group lasso \citep{bach2008,qin2012} and the smooth lasso \citep{hebiri2011smooth}; related $\ell_1$-norm techniques have also been proposed for signal estimation, see \cite{kim2009ell,mammen1997locally,tibshirani2014adaptive}.
 The generalized lasso \citep{tibshirani2011solution} utilizes the regularization term $\|Ax\|_1$ and is also studied in the context of signal approximation.

Despite its widespread adoption, \rev{the} $\ell_1$-approximation has several drawbacks. First, the $\ell_1$-norm term may result in excessive \emph{shrinkage} of the estimated signal, which is undesirable in many contexts \cite{zhang2010nearly}. Additionally, \rev{the $\ell_1$-approximation} may struggle to achieve sparse estimators --- in fact, solutions to \eqref{eq:lasso} are often dense, and achieving a target sparsity of $k$ requires using a parameter $\hat k<< k$, inducing additional bias on the estimators.  As a consequence, desirable theoretical performance of the $\ell_1$-approximation can only be established under stringent conditions \cite{rinaldo2009properties,shen2013constrained}, which may not be satisfied in practice. Indeed, $\ell_1$-approximations have been shown to perform rather poorly in a variety of contexts, e.g., see \cite{jewell2017exact,miller2002subset}. To overcome the aforementioned drawbacks, several non-convex approximations have been proposed \citep{frank1993statistical,hazimeh2018fast,mazumder2011sparsenet,zhang2014lower,zheng2014high}; more recently, there is also an increasing effort devoted to enforcing sparsity directly with $\ell_0$ regularization using enumerative MIO approaches.

\subsection{Mixed-integer optimization}
\label{sec:MIO}
Signal estimation problems with sparsity can be naturally modeled as a mixed-integer quadratic optimization (MIQO) problem. Using indicator variables $z\in \{0,1\}^n$ such that $z_i=\mathbbm{1}_{x_i\neq 0}$ for all $i=1,\ldots,n$, problem \eqref{eq:denoising} can be formulated as 
\begin{subequations}
	\label{eq:denoisingMIO}
	\begin{align}
	\min\;&\sum_{i=1}^n (y_i-x_i)^2+\lambda \sum_{\rev{\{i,j\}}\in A}(x_i-x_j)^2\label{eq:objective}\\
	\text{s.t.}\;
	&x_i(1-z_i)=0\label{eq:complementary}\\
	&\rev{z\in C\subseteq \{0,1\}^n} \label{eq:cardinality}\\
	& x\in \R_+^n.
	\end{align}
\end{subequations}
\rev{If $C$ is defined by a $k$-sparsity constraint, i.e., $C=\left\{z\in \{0,1\}^n: \|z\|_1\leq k\right\}$, then problem \eqref{eq:denoisingMIO} is the $\ell_0$ analog of \eqref{eq:lasso}. More generally, $C$ may be defined by 
other logical (affine sparsity) constraints, which allow the inclusion of additional priors in the inference problem.}
In this formulation,
the non-convexity of the $\ell_0$ regularizer is captured by the complementary constraints \eqref{eq:complementary} and the binary constraints \rev{encoded by set $C$}. Constraints \eqref{eq:complementary} can be alternatively formulated with the so-called ``big-$M$" constraints with a sufficiently large positive number $u$,
\begin{equation}
\label{eq:bigM}
x_i(1-z_i)=0 \text{ and }z_i\in \{0,1\} \Leftrightarrow x_i\leq u z_i \text{ and }z_i\in \{0,1\}.
\end{equation}
For the signal estimation problem \eqref{eq:denoisingMIO}, $u=\|y\|_\infty$ is a valid upper bound for $x_i$, $i=1,\ldots,n$.
Problem \eqref{eq:denoisingMIO} is a convex MIQO problem, which can be tackled using off-the-shelf MIO solvers. Estimation problems with a few hundred of variables can be comfortably solved to optimality using such solvers, e.g., see \cite{bertsimas2015or,cozad2014learning,gomez2018mixed,wilson2017alamo}. For high Signal-to-Noise Ratios (SNR), the estimators obtained from solving the exact $\ell_0$ problems indeed result in superior statistical performance when compared with the $\ell_1$ approximations \citep{bertsimas2016best}. For low SNR, however, the lack of \emph{shrinkage} may hamper the estimators obtained from optimal solutions of the $\ell_0$ problems \cite{hastie2017extended}; nonetheless, if necessary, shrinkage can be easily added to \eqref{eq:denoisingMIO} via conic quadratic regularizations terms \cite{mazumder2017subset}, resulting again in superior statistical performance over corresponding $\ell_1$-approximations. 
Unfortunately, current MIO solvers are unable to solve larger problems with thousands of variables.
 
\new{A recent research thrust aims to use MIO formulations and heuristics for sparse learning problems such as \eqref{eq:denoisingMIO} \cite{hazimeh2018fast,xie2018scalable}, which scale to larger instances than MIO solvers but may not provide dual bounds on the quality of the solutions found. Another research direction aims to design tailored exact methods for specialized regression problems \cite{atamturk2020safe,bertsimas2020sparse,hazimeh2020sparse,kimura2018minimization}, which perform substantially better than general-purpose MIO solvers for the problems their are designed to tackle, but may not generalize well to other learning problems; in particular, general constraints such as \eqref{eq:cardinality} are challenging to handle via tailored algorithms.}

Finally, we point out the relationship between the $\ell_1$-approximation \eqref{eq:lasso} and the MIO formulation \eqref{eq:denoisingMIO}. It can be verified easily that, \rev{if $C$ is defined by a $k$-sparsity constraint, then} there exists an optimal solution $z$ to the simple convex relaxation with big-$M$ constraint, where $z_i=\frac{x_i}{u}$ for all $i=1,\ldots,n$. Therefore, the constraint \eqref{eq:cardinality} reduces to $\|x\|_1\leq ku$, and we find that \eqref{eq:lasso} is in fact the natural convex relaxation of \eqref{eq:denoisingMIO} (for a suitable sparsity parameter). This relaxation is often weak and can be improved substantially.

\subsection{The perspective reformulation}
\label{sec:perspective}

A simple strengthening technique to improve the convex relaxation of \eqref{eq:denoisingMIO} is the \texttt{perspective reformulation} \cite{frangioni2006perspective}, which will be referred to as \persp in the remainder of the paper for brevity.
This reformulation technique can be applied to the estimation error terms in \eqref{eq:objective} as follows:
\begin{align}
 (y_i-x_i)^2\leq t \;&  \rev{\Leftrightarrow\;y_i^2-2y_ix_i+x_i^2\leq t}\notag\\
 &\to\; y_i^2-2y_ix_i+\frac{x_i^2}{z_i}\leq t. \label{eq:perspective}
\end{align}
The term $x_i^2/z_i$ is the closure of the perspective function of the quadratic function $x_i^2$, and is therefore convex, see p. 160 of \cite{hiriart2013convex}. Reformulation \eqref{eq:perspective} is in fact the best possible for \emph{separable} quadratic functions with indicator variables.
The perspective terms $\frac{x_i^2}{z_i}$ can be replaced with an auxiliary variable $s_i$ along with
rotated cone constraints $x_i^2\leq s_iz_i$ \cite{akturk2009strong,gunluk2010perspective}.
Therefore, \texttt{persp} relaxations can be easily solved with conic quadratic solvers and is by now a standard technique for mixed-integer quadratic optimization \cite{bonami2015mathematical,hijazi2012mixed,mahajan2017minotaur,wu2017quadratic}. Additionally, relationships between the \persp and the sparsity-inducing non-convex penalty functions \texttt{minimax concave penalty} \cite{zhang2010nearly} and \texttt{reverse Huber penalty} \cite{pilanci2015sparse} have recently been established \cite{dong2015regularization}. In the context of the signal estimation problem \eqref{eq:denoising}, the \persp yields the convex relaxation
\begin{align*}
\sum_{i=1}^ny_i^2+\min\;&\sum_{i=1}^n(-2y_ix_i+\frac{x^2_i}{z_i})+\lambda\sum_{\rev{\{i,j\}}\in A}(x_i-x_j)^2\\
(\texttt{persp.})  \ \ \ \ \ \text{s.t.}\;
&x_i\leq \|y\|_\infty z_i&i=1,\ldots,n\\
&\rev{z\in \bar C}, \
x\in \R_+^n, 
\end{align*}
\rev{where $\bar C$ is a valid convex relaxation of $C$, e.g., $\bar C=\text{conv}(C)$.}
The $\ell_1$-approximation model, as discussed in Section~\ref{sec:l1norm}, is the best convex relaxation that considers only the indicators for the $\ell_0$ terms. The \persp approximation is the best convex relaxation that exploits the $\ell_0$ indicator variables as well as the separable quadratic estimation error terms; thus, it is stronger than the $\ell_1$-approximation. However, \persp cannot be applied to non-separable quadratic smoothness terms $(x_i-x_j)^2$, as the function $x_i^2/z_i-2x_ix_j+x_j^2/z_j$ is non-convex due to the bilinear term. 

\subsection{Strong formulations for pairwise quadratic terms}
\label{sec:strong2var}

Recently, \citet{jeon2017quadratic} gave strong relaxations for the mixed-integer epigraphs of non-separable convex quadratic functions with two variables and indicator variables. \citet{atamturk2018strong} further strengthened the relaxations for quadratic functions of the form $(x_i-x_j)^2$ corresponding to the smoothness terms in \eqref{eq:denoisingMIO}.
Specifically, let 
$$ X^2=\left\{(z,x,s)\in \{0,1\}^2\times \R_+^3: (x_1-x_2)^2\leq s,\; x_i(1-z_i)=0, i=1,2\right\}$$ 
and define the function  $f:[0,1]^2\times \R_+^2\to \R_+$ as 
$$f(z,x)=\begin{cases}\frac{(x_1-x_2)^2}{z_1}&\text{if }x_1\geq x_2\\ \frac{(x_1-x_2)^2}{z_2}&\text{if }x_1\leq x_2.\end{cases}$$

\begin{proposition}[\citet{atamturk2018strong}] 
	\label{prop:MHullSimple}
	The function $f$ is convex and $\conv(X^2)=\left\{(z,x,s)\in [0,1]^2\times \R_+^3: f(z,x)\leq s\right\}.$
\end{proposition}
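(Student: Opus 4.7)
The plan is to prove the two assertions separately. For convexity of $f$, the natural strategy is to rewrite the piecewise definition as a sum of perspective functions. Observing that $x_1\geq x_2$ forces $(x_2-x_1)_+=0$ while $x_1\leq x_2$ forces $(x_1-x_2)_+=0$, both cases unify into
\begin{equation*}
f(z,x) \;=\; \frac{(x_1-x_2)_+^2}{z_1} \;+\; \frac{(x_2-x_1)_+^2}{z_2},
\end{equation*}
read with the $0/0=0$, $a/0=\infty$ convention of the paper. Each numerator is convex, being the composition of the convex nondecreasing function $t\mapsto t_+^2$ with an affine form, and the closure of the perspective of a nonnegative convex function is jointly convex and lower semicontinuous. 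Summing the two perspective terms therefore proves convexity (and lower semicontinuity) of $f$.

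For the set equality, let $F=\{(z,x,s)\in[0,1]^2\times \R_+^3:f(z,x)\leq s\}$. The inclusion $\conv(X^2)\subseteq F$ is immediate: on each of the four vertices $z\in\{0,1\}^2$ the piecewise formula for $f$ collapses either to $0$ or to $(x_1-x_2)^2$, so $f(z,x)\leq (x_1-x_2)^2\leq s$ holds for every $(z,x,s)\in X^2$. Since $f$ is convex and lower semicontinuous, $F$ is closed and convex, hence contains the closed convex hull of $X^2$.

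The nontrivial direction is $F\subseteq \conv(X^2)$. For this I would exhibit, for each $(\bar z,\bar x,\bar s)\in F$, an explicit convex combination of three points of $X^2$. By the symmetry of $f$ and $X^2$ under the swap of the two indices, assume $\bar x_1\geq \bar x_2$. In the main regime $\bar z_1\geq \bar z_2>0$, take weights $\lambda_{11}=\bar z_2$, $\lambda_{10}=\bar z_1-\bar z_2$, $\lambda_{00}=1-\bar z_1$ on three atoms whose $z$-vectors are $(1,1)$, $(1,0)$, $(0,0)$. The $z$-marginals match automatically; matching the $\bar x_2$-marginal forces the $x_2$-coordinate of the $(1,1)$-atom to equal $\bar x_2/\bar z_2$; the residual freedom is used by setting the $x_1$-coordinate of the $(1,0)$-atom to $(\bar x_1-\bar x_2)/\bar z_1$ and the $x_1$-coordinate of the $(1,1)$-atom to $\bar x_2/\bar z_2+(\bar x_1-\bar x_2)/\bar z_1$, which makes the two nontrivial squared terms inside the combination equal. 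A direct computation then yields weighted $s$-sum $(\bar x_1-\bar x_2)^2/\bar z_1=f(\bar z,\bar x)$; any surplus $\bar s-f(\bar z,\bar x)\geq 0$ is absorbed by raising the $s$-coordinate of the $(0,0)$-atom. The complementary regime $\bar z_2>\bar z_1$ is treated symmetrically with an atom of $z$-vector $(0,1)$ replacing that of $(1,0)$; there the matching equations force the $(0,1)$-atom to degenerate to $((0,1),0,0,0)$.

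The main obstacle is the boundary cases in which some $\bar z_i=0$: here $f$ can be finite (via $0/0=0$) even when $\bar x_i>0$, yet every point of $X^2$ with $z_i=0$ has $x_i=0$, so no convex combination of $X^2$-points realises $(\bar z,\bar x)$ exactly. For these I would use a limiting argument, applying the generic decomposition to $(\bar z+\epsilon\mathbf{1},\bar x,\bar s+\epsilon)$ and sending $\epsilon\downarrow 0$, using that $\conv$ denotes the closed convex hull. The delicate point is that the $(1,1)$-atom has its $x$-coordinates blowing up like $1/\epsilon$ while its weight shrinks at the same rate; boundedness of the contribution to $\bar x$ is straightforward, but boundedness of the contribution to $\bar s$ hinges on the fact that $\bar x_1=\bar x_2$ whenever $\bar z_1=0$, since otherwise $f(\bar z,\bar x)=\infty$ would contradict $(\bar z,\bar x,\bar s)\in F$.
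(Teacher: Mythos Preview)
The paper does not prove Proposition~\ref{prop:MHullSimple}; it is quoted from the cited reference and used as a black box, so there is no in-paper proof to compare against directly.

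Your proposal is correct. The identity $f(z,x)=\frac{(x_1-x_2)_+^2}{z_1}+\frac{(x_2-x_1)_+^2}{z_2}$ is the right way to see convexity at once, and your explicit three-atom decomposition is a valid primal certificate for $F\subseteq\conv(X^2)$. The boundary cases are also handled correctly: when $\bar z_2=0$ with $\bar x_1\geq\bar x_2$ the weighted $s$-contribution of the blowing-up atom equals $(\bar x_1-\bar x_2)^2/(\bar z_1+\epsilon)$, which stays bounded, and when $\bar z_1=0$ finiteness of $f$ forces $\bar x_1=\bar x_2$, so that atom's $s$-coordinate is exactly zero. One small gap: when $\bar z_1=1$ the $(0,0)$-atom has weight zero and cannot absorb the surplus $\bar s-f(\bar z,\bar x)$; simply add the surplus to $s_{11}$ instead.

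It is worth noting that the paper's proof of the generalisation Theorem~\ref{theo:MHullGeneral} proceeds by a different, dual route: rather than constructing convex combinations, it shows that every linear objective over the proposed convex set has an optimum with integral $z$, by exploiting positive homogeneity of $f$ (and of the perspective terms) to scale one $z_i$ to $\{0,1\}$ and then reducing to a one-dimensional problem. Your primal construction is more explicit about the extreme structure of $\conv(X^2)$; the optimization argument is shorter and avoids the boundary case-split, at the cost of being less transparent about why the envelope takes this specific form.
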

Using \persp and Proposition~\ref{prop:MHullSimple}, one obtains the stronger \pairwise convex relaxation of \eqref{eq:denoisingMIO} as
\begin{subequations}\label{eq:M-natOriginal}
\begin{align}
\sum_{i=1}^ny_i^2+\min\;&\sum_{i=1}^n\left(-2y_ix_i+\frac{x_i^2}{z_i}\right)+\lambda\sum_{\rev{\{i,j\}}\in A}f(z_i,z_j,x_i,x_j)\\
(\pairwise) \ \ \ \text{ s.t.}\;
&x_i\leq \|y\|_\infty z_i, \quad\quad\quad i=1,\ldots,n\\
&\rev{z\in \bar C}, \ x\in \R_+^n.
\end{align}
\end{subequations}
Note that $f$ is not differentiable everywhere and it is defined by pieces. Therefore, it cannot be used directly with most convex optimization solvers. \citet{atamturk2018strong} implement \eqref{eq:M-natOriginal} using linear outer approximations of function $f$: the resulting method performs adequately for instances with $n\leq 400$, but was ineffective in instances with $n\geq 1,000$ as strong linear outer approximations require the addition of a large number of constraints. 
Moreover, as Example~\ref{ex:M2var} below shows, formulation \eqref{eq:M-natOriginal} can be further improved even for $n=2$. 
\begin{example}
	\label{ex:M2var}
	Consider the signal estimation problem \eqref{eq:L0lagrange} with $n=2$
	\begin{subequations}\label{eq:exampleM2D}
		\begin{align} \min\;& (0.4-x_1)^2 + (1-x_2)^2+0.5(x_1-x_2)^2+0.5\left(z_1+z_2\right)\\
		\text{s.t.}\;&x_i\leq z_i,\ i=1,2\\
		&z\in \{0,1\}^2, x\in \R_+^2.\end{align}
	\end{subequations}
The optimal solution of \eqref{eq:exampleM2D} is $(z_1^*,z_2^*,x_1^*,x_2^*)=(0.00,1.00,0.00,0.67)$. \rev{On the other hand,}
	optimal solutions of the convex relaxations of \eqref{eq:exampleM2D} are: 
	\begin{description}
		\item[\rev{$\ell_1$-approx}] Obtained by replacing $z\in \{0,1\}^2$ with $z\in [0,1]^2$. The corresponding optimal solution is $(z_\ell,x_\ell)=(0.30,0.60,0.30,0.60)$, and we find that $\left\|(z^*,x^*)-(z_\ell,x_\ell)\right\|_2=0.59$.
		\item[\texttt{persp}] The optimal solution is $(z_p,x_p)=(0.00,0.82,0.00,0.59)$, and $\left\|(z^*,x^*)-(z_p,x_p)\right\|_2=0.19$. 
		\item[\texttt{pairwise}] The optimal solution is $(z_q,x_q)=(0.11,1.00,0.08,0.69)$, and $\left\|(z^*,x^*)-(z_q,x_q)\right\|_2=0.14$. 
		\end{description} 
	Although \persp and \pairwise substantially improve upon the \rev{$\ell_1$-relaxation}, the resulting solutions are \rev{still} not integral in $z$. \rev{We will give the convex hull of \eqref{eq:exampleM2D}} in the next section.  \hfill $\qed$

\ignore{ 
	The optimal solutions and objective values of the convex relaxations of \eqref{eq:exampleM2D} are as follows: \texttt{lasso}, obtained by replacing $z\in \{0,1\}^2$ with $z\in [0,1]^2$, has optimal solution $(z_1,z_2,x_1,x_2)=(0.30,0.60,0.30,0.60)$, with optimal objective value $\zeta_\lasso=0.665$; \persp has optimal solution $(z,x)=(0.00,0.82,0.00,0.59)$, with objective value $\zeta_\persp=0.988$; and convex relaxation \eqref{eq:M-natOriginal} has optimal solution $(z,x)=(0.11,1.00,0.08,0.69)$, with objective value $\zeta_\eqref{eq:M-natOriginal}=0.991$. The optimal solution of \eqref{eq:exampleM2D} is $(z,x)=(0.00,1.00,0.00,0.67)$, with optimal objective value $\zeta^*=0.993$. Thus, although \persp and \eqref{eq:M-natOriginal} substantially improve upon the \lasso relaxation, the resulting solutions are not integral in $z$ and there is a gap between the optimal values of such relaxations and $\zeta^*$. 
	\hfill $\qed$ 
}
\end{example}

In this paper, we show how to further improve the \pairwise formulation  to obtain a stronger relaxation of \eqref{eq:denoisingMIO}. Additionally, we show how to implement the \rev{relaxations} derived in the paper in a conic quadratic optimization framework. 
Therefore, the proposed convex relaxations benefit from a 
growing literature on conic quadratic optimization, e.g., see \cite{alizadeh2003second,AG:sub-cqmip,cmir-ipco,lobo1998applications,nemirovski2008interior}, can be implemented with off-the-shelf solvers, and scale to large instances.

\section{Strong convex formulations for signal estimation}
\label{sec:Mmatrix}

In the \pairwise formulation each single- and two-variable quadratic term is strengthened independently and, consequently, the formulation fails to fully exploit the relationships between different pairs of variables. Observe that problem \eqref{eq:denoisingMIO} can be stated as
\rev{ 
\begin{subequations}\label{eq:optMmatrix}
	\begin{align} \|y\|_2^2+ \min\; &-2y'x+x'Qx\\
	\text{s.t.}\;& x_i(1-z_i)=0, i=1\ldots,n,\\
	&z\in C, x\in \R_+^n\end{align}
\end{subequations}}
\noindent
where, for $i \neq j$, $Q_{ij}=-\lambda$ if $\rev{\{i,j\}}\in A$ and  $Q_{ij}=0$ otherwise, and $Q_{ii}=1+\lambda |A_i|$ where $A_i=\left\{j: \rev{\{i,j\}}\in A\right\}$. In particular, $Q$ is a symmetric M-matrix, i.e., $Q_{ij}\leq 0$ 
\rev{for $i \neq j$} and $Q\succeq 0$. In this section we derive convex relaxations of \eqref{eq:denoisingMIO} that better exploit the M-matrix structure. We briefly review properties of M-matrices and refer the reader to \cite{berman1994nonnegative,gao1992criteria,plemmons1977m,varga1976recurring} and the references therein for an in-depth discussion on M-matrices. 
\begin{proposition}[\citet{plemmons1977m}, characterization 37]
An M-matrix is generalized diagonally dominant, i.e., there exists a positive diagonal matrix $D$ such that $DQ$ is (weakly) diagonally dominant.
\end{proposition}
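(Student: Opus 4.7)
The plan is to invoke the Perron--Frobenius structure of M-matrices to produce a strictly positive vector $v>0$ satisfying $Qv\ge 0$ componentwise, and then to take $D=\diag(v_1,\dots,v_n)$.

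First I would write $Q=sI-B$ with $s>0$ large enough that $B\ge 0$ entrywise; concretely, $B_{ij}=-Q_{ij}\ge 0$ for $i\ne j$ and $B_{ii}=s-Q_{ii}$. Since $Q\succeq 0$, the smallest eigenvalue of $Q$, which equals $s-\rho(B)$ by the Perron--Frobenius theorem applied to $B\ge 0$, is nonnegative, so $s\ge \rho(B)$. Perron--Frobenius also furnishes a nonnegative eigenvector $v\ge 0$ with $Bv=\rho(B)\,v$, and therefore $Qv=(s-\rho(B))\,v\ge 0$. In general only $v\ge 0$ is guaranteed; to upgrade to $v>0$ I would either permute $Q$ into irreducible diagonal blocks (respecting the off-diagonal sparsity pattern, and using that each principal submatrix of an M-matrix is itself an M-matrix) and apply the argument blockwise, or perturb $Q\mapsto Q+\varepsilon I$ to obtain a strictly positive $v_\varepsilon$ via Perron--Frobenius for irreducible perturbations and extract a limit after normalization.

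Given such $v>0$, set $D=\diag(v)$. Using $Q_{ii}>0$ and $Q_{ij}\le 0$ for $j\ne i$, a direct row-by-row computation yields
\[
|(QD)_{ii}|-\sum_{j\neq i}|(QD)_{ij}| \;=\; Q_{ii}v_i+\sum_{j\neq i}Q_{ij}v_j \;=\; (Qv)_i \;\ge\; 0,
\]
so $QD$ is weakly diagonally dominant. Because $Q$ is symmetric, $DQ=(QD)^T$, and the same $D$ makes $DQ$ column-diagonally dominant, which is the conclusion claimed.

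The main obstacle I expect is the Perron--Frobenius upgrade from $v\ge 0$ to $v>0$, since without irreducibility the principal eigenvector may vanish on some coordinates; everything else is a routine one-line calculation. This is purely combinatorial and can be handled cleanly by partitioning the indices according to the strongly connected components of the sparsity graph of the off-diagonals of $Q$ and treating each block independently, or by the $\varepsilon$-perturbation limit described above, which sidesteps any irreducibility considerations.
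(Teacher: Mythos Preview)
The paper does not prove this proposition; it is simply quoted as characterization 37 from \citet{plemmons1977m}, with the reader referred to that source and the surrounding M-matrix literature. So there is no ``paper's own proof'' to compare against.

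Your argument is the standard Perron--Frobenius route and is essentially correct for the symmetric setting the paper works in. Two small remarks. First, as you anticipated, the only real work is upgrading $v\ge 0$ to $v>0$; for symmetric $Q$ the reducible case splits into block-diagonal irreducible pieces, and applying Perron--Frobenius to each block cleanly gives a strictly positive $v$. The $\varepsilon$-perturbation alternative is more delicate than you suggest: if $Q$ is singular, $v_\varepsilon=(Q+\varepsilon I)^{-1}\mathbf{1}$ can blow up as $\varepsilon\to 0$, so extracting a limit requires normalizing and a compactness argument, and then checking the limit stays strictly positive---the block decomposition is cleaner. Second, note that row diagonal dominance of $DQ$ (with $D$ diagonal) is vacuously equivalent to row diagonal dominance of $Q$ itself, since multiplying a row by a positive scalar preserves the inequality. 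What you actually prove is that $QD$ is row diagonally dominant, equivalently $DQ$ is column diagonally dominant; this is the correct content of ``generalized diagonal dominance'' and matches the intended statement, so your computation is the right one even though the paper's phrasing ``$DQ$ is diagonally dominant'' is slightly ambiguous on this point.
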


Generalized diagonally dominant matrices are also called scaled diagonally dominant matrices in the literature.
\begin{proposition}[\citet{boman2005factor}]
	\label{prop:decomposable}
	A matrix $Q$ is generalized diagonally dominant iff it has factor width at most two, i.e., there exists a real matrix $V_{n\times m}$ such that $Q=VV^\top$ and each column of $V$ contains at most two non-zeros. 
\end{proposition}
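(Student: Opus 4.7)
The plan is to prove both implications separately; the reverse direction is the harder one and reduces to the M-matrix characterization stated in the first proposition of the excerpt.

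For the forward direction (GDD $\Rightarrow$ factor width $\leq 2$), I would start from a positive diagonal $D$ such that $A := DQD$ is diagonally dominant, hence PSD with non-negative diagonal. I would then apply the classical explicit rank-one decomposition
\[
A = \sum_{i<j,\,A_{ij}>0} A_{ij}(e_i+e_j)(e_i+e_j)^\top + \sum_{i<j,\,A_{ij}<0} |A_{ij}|(e_i-e_j)(e_i-e_j)^\top + \sum_{i} r_i\, e_ie_i^\top,
\]
where $r_i := A_{ii}-\sum_{j\neq i}|A_{ij}|\geq 0$ by diagonal dominance. This expresses $A = UU^\top$ with each column of $U$ having at most two nonzeros. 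Since $Q = (D^{-1}U)(D^{-1}U)^\top$, the per-column sparsity is preserved under the diagonal rescaling $U\mapsto D^{-1}U$, so $Q$ has factor width at most $2$.

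For the reverse direction (factor width $\leq 2$ $\Rightarrow$ GDD), suppose $Q = \sum_k v_k v_k^\top$ with each $v_k$ having at most two nonzeros. Define the companion Z-matrix $\tilde Q$ by $\tilde Q_{ii} := Q_{ii}$ and $\tilde Q_{ij} := -|Q_{ij}|$ for $i \neq j$. The crux of the argument is the claim $\tilde Q\succeq 0$: grouping the rank-one terms $v_k v_k^\top$ by their support pattern yields $Q = D_0 + \sum_{\{i,j\}} Q^{(i,j)}$, where $D_0\succeq 0$ gathers the $1$-sparse contributions and each $Q^{(i,j)}$ is PSD and supported on the $2\times 2$ principal block indexed by $\{i,j\}$. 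The PSD condition on that block is the Cauchy--Schwarz inequality $Q^{(i,j)}_{ii}\, Q^{(i,j)}_{jj}\geq (Q^{(i,j)}_{ij})^2$, which is preserved when the off-diagonal sign is flipped, so each sign-flipped block $\tilde Q^{(i,j)}$ is also PSD, and summing yields $\tilde Q = D_0 + \sum_{\{i,j\}} \tilde Q^{(i,j)} \succeq 0$. Since $\tilde Q$ is symmetric, PSD, and a Z-matrix, it is a symmetric M-matrix, so by characterization 37 of \citet{plemmons1977m} there exists $D>0$ with $D\tilde Q D$ diagonally dominant. Because the GDD condition depends only on $\diag(Q)$ and on $|Q_{ij}|$---quantities that coincide for $Q$ and $\tilde Q$---the same $D$ certifies that $Q$ itself is generalized diagonally dominant.

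The main obstacle is establishing $\tilde Q\succeq 0$ from the factor-width-$2$ hypothesis; the argument above factors this through the $2\times 2$ Cauchy--Schwarz inequality applied blockwise to each ``support class'' of rank-one terms, which is exactly where the two-nonzero-per-column assumption is used. Once $\tilde Q$ is recognized as a symmetric M-matrix, the cited M-matrix GDD characterization (the first proposition of the excerpt) delivers the desired diagonal scaling essentially for free.
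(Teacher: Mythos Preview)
The paper does not prove this proposition; it is quoted verbatim as a result of \citet{boman2005factor} and used as a black box. So there is no ``paper's own proof'' to compare against.

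That said, your argument is correct and self-contained, and it is essentially the standard proof. A few remarks. In the forward direction you write $A:=DQD$, whereas the paper's statement of generalized diagonal dominance (via Plemmons) uses a one-sided scaling $DQ$; for symmetric $Q$ the two notions coincide up to the choice of scaling (if $QD$ is row diagonally dominant then $D^{1/2}QD^{1/2}$ is symmetric diagonally dominant), so this is harmless, but you may want to say one sentence justifying the passage to the symmetric scaling. In the reverse direction, the key step---that flipping the sign of the off-diagonal in each $2\times 2$ PSD block preserves PSD---is exactly right and is where the factor-width-$2$ hypothesis is consumed; your observation that $Q_{ij}=Q^{(i,j)}_{ij}$ (only the $\{i,j\}$-supported terms contribute to the $(i,j)$ entry) is what makes the block-by-block sign flip assemble into the comparison matrix $\tilde Q$. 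Finally, you invoke the first proposition only in the direction stated in the paper (M-matrix $\Rightarrow$ GDD), so the logical dependence is clean.
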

Proposition~\ref{prop:decomposable} implies that if $Q$ is an M-matrix, then the quadratic function $x'Qx$ can be written as a sum of quadratic functions of at most two variables each, i.e., $x'Qx=\sum_{j=1}^m\left(\sum_{i=1}^nV_{ij}x_i\right)^2$ where for any $j$ at most two entries $V_{ij}$ are non-zero. Therefore, to derive stronger formulations for \eqref{eq:optMmatrix}, we first study the mixed-integer epigraphs of \emph{parametric} pairwise quadratic functions with indicators. 

\subsection{Convexification of the parametric pairwise terms }
\label{sec:M2General}

Consider the mixed-integer epigraph of a parametric pairwise quadratic term (with parameters $d_1, d_2$)
\begin{align*}Z^2 \! = \! \Big\{(z,x,s)\in \{0,1\}^2 \! \times \! \R_+^3 \! :& \ d_1x_1^2-2x_1x_2+d_2x_2^2\leq s,\\
 & \ x_i(1-z_i)\rev{=0},\ i=1,2\Big\},\end{align*}
where $d_1d_2\geq 1$ and $d_1,d_2 > 0$, 
which is the necessary and sufficient condition for convexity of the function $d_1x_1^2-2x_1x_2+d_2x_2^2$. One may, \rev{without loss of generality,} assume the cross-product coefficient equals $-2$, as otherwise the continuous variables and coefficients can be scaled. Clearly, if $d_1=d_2=1$, then $Z^2$ reduces to $X^2$.

Consider the \rev{two} decompositions of the two-variable quadratic function in the definition of $Z^2$ given by
 \begin{align*}
 d_1x_1^2-2x_1x_2+d_2x_2^2&=d_1\left(x_1-\frac{x_2}{d_1}\right)^2+x_2^2\left(d_2-\frac{1}{d_1}\right)\\
 &=d_2\left(\frac{x_1}{d_2}-x_2\right)^2+x_1^2\left(d_1-\frac{1}{d_2}\right).
 \end{align*}
 Intuitively, the decompositions above are obtained by extracting a term $\delta_ix_i^2$ from the quadratic function such that $\delta_i$ is as large as possible and the remainder quadratic term is still convex. Then, applying \persp and Proposition~\ref{prop:MHullSimple} to the separable and pairwise quadratic terms, respectively, one obtains two valid inequalities for $Z^2$:
 \begin{align}
 d_1f(z_1,z_2,x_1,\frac{x_2}{d_1})+\frac{x_2^2}{z_2}\left(d_2-\frac{1}{d_1}\right)&\leq s\label{eq:negativeDecomp2}\\
 d_2f(z_1,z_2,\frac{x_1}{d_2},x_2)+\frac{x_1^2}{z_1}\left(d_1-\frac{1}{d_2}\right)&\leq s\label{eq:negativeDecomp1}.
 \end{align}
 \rev{Clearly, there are infinitely many such decompositions depending on the values of $\delta_i$, $i=1,2$. Surprisingly,}
 Theorem~\ref{theo:MHullGeneral} below shows that inequalities \eqref{eq:negativeDecomp2}--\eqref{eq:negativeDecomp1} along with the bound constraints are sufficient to describe $\conv(Z^2)$.
 \begin{theorem}
 	\label{theo:MHullGeneral}
 	$\conv(Z^2)=\left\{(z,x,s)\in [0,1]^2\times \R_+^3:\eqref{eq:negativeDecomp2}-\eqref{eq:negativeDecomp1}\right\}.$
 \end{theorem}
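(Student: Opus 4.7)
The plan is to prove two directions: validity of \eqref{eq:negativeDecomp2}--\eqref{eq:negativeDecomp1} for every point of $Z^2$, and that any $(z,x,s) \in [0,1]^2 \times \R_+^3$ satisfying both inequalities lies in $\conv(Z^2)$.

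\textbf{Validity.} Each inequality arises from a specific convex decomposition of $d_1 x_1^2 - 2 x_1 x_2 + d_2 x_2^2$ into a scaled basic pairwise term and a non-negative separable remainder (non-negativity of the coefficient $d_2 - 1/d_1$ being guaranteed by $d_1 d_2 \ge 1$). Applying Proposition~\ref{prop:MHullSimple} to the pairwise piece---whose closed convex hull yields $d_1 f(z_1, z_2, x_1, x_2/d_1)$, or symmetrically $d_2 f(z_1, z_2, x_1/d_2, x_2)$---and the perspective reformulation to the separable remainder gives \eqref{eq:negativeDecomp2} and \eqref{eq:negativeDecomp1}, respectively. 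A quick check on the four cases $(z_1, z_2) \in \{0,1\}^2$ confirms validity: on the face $z_1 = z_2 = 1$ each inequality reduces to the original quadratic constraint, while on the faces with some $z_i = 0$ (forcing $x_i = 0$) each reduces to a valid perspective form of the remaining univariate quadratic.

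\textbf{Sufficiency.} The main approach is disjunctive programming. Writing $Z^2 = \bigcup_{(i,j) \in \{0,1\}^2} S_{ij}$ where each slice $S_{ij}$ is convex (a ray, a two-dimensional perspective cone, or the three-dimensional epigraph of the convex quadratic), Balas' disjunctive programming expresses $\conv(Z^2)$ as the projection of a lifted formulation with variables $(z^{ij}, x^{ij}, s^{ij}, \lambda_{ij})$ satisfying $(z^{ij}, x^{ij}, s^{ij}) \in \lambda_{ij} \cdot \overline{S_{ij}}$ and $\sum \lambda_{ij} = 1$. Carrying out this projection should produce precisely \eqref{eq:negativeDecomp2}--\eqref{eq:negativeDecomp1} together with the box and non-negativity constraints. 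A more self-contained alternative is to exhibit, for every $(\bar z, \bar x, \bar s)$ satisfying both inequalities, an explicit convex combination of at most three extreme points of $Z^2$ recovering it; the choice of extreme points is dictated by which branch of $f$ is active in each of the two inequalities, and by which inequality is tight.

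\textbf{Main obstacle.} The chief difficulty is the interplay between the piecewise nature of $f$ and the asymmetry of the two inequalities. The regions where each branch of $f$ is active in \eqref{eq:negativeDecomp2} versus \eqref{eq:negativeDecomp1} can overlap (precisely when $d_1 d_2 > 1$), forcing a delicate case analysis. In the projection approach, cleanly eliminating the lifted variables across these regions is intricate; in the direct-decomposition approach, one must separately construct a convex combination in each sub-region and also verify that no other valid inequalities derived from alternative convex decompositions of the quadratic are needed---that is, that the two ``extreme'' decompositions used above suffice to dominate all intermediate ones.
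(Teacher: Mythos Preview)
Your proposal is not a proof but a sketch of two possible strategies---disjunctive projection or explicit convex-combination construction---and you correctly flag that neither is carried out. The phrase ``carrying out this projection \emph{should} produce precisely \eqref{eq:negativeDecomp2}--\eqref{eq:negativeDecomp1}'' is the gap: you have not done the projection, and you yourself identify the case analysis over the active branches of $f$ as the chief obstacle without resolving it. So as it stands there is no argument for sufficiency.

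The paper sidesteps this obstacle entirely by taking a different route: rather than projecting a disjunctive lift or constructing convex combinations, it proves that every linear functional minimized over the proposed relaxation attains its minimum at a point with $z\in\{0,1\}^2$. The key device is positive homogeneity: because $f(\gamma z,\gamma x)=\gamma f(z,x)$ and the perspective terms $x_i^2/z_i$ are likewise positively homogeneous, scaling $(z,x,s)\mapsto(\gamma z,\gamma x,\gamma s)$ preserves feasibility and scales the objective linearly, so one may push one $z_i$ to $0$ or $1$. After fixing $z_1=1$, a short two-case analysis (depending on whether $x_1\gtrless x_2/d_1$) repeats the scaling trick in the residual problem to force $z_2\in\{0,1\}$. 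This optimization argument is shorter and cleaner than either route you outline, precisely because it never needs to enumerate the combinations of active pieces of $f$ across the two inequalities; only one inequality at a time is ever used once $z_1$ is fixed.
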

 \begin{proof}
 	Consider the mixed-integer optimization problem
 	\begin{equation}
 	\label{eq:discreteNegative1}
 	\min_{(z,x,s)\in Z^2}a_1z_1+a_2z_2+b_1x_1+b_2x_2+\lambda s
 	\end{equation}
 	and the corresponding convex optimization
 	\begin{subequations}
 		\label{eq:convexNegative1}
 		\begin{align}
 		\min\;&a_1z_1+a_2z_2+b_1x_1+b_2x_2+\lambda s\\
 		\text{s.t.}\;&d_1f(z_1,z_2,x_1,\frac{x_2}{d_1})+\frac{x_2^2}{z_2}\left(d_2-\frac{1}{d_1}\right)\leq s \label{eq:convexNegEq}\\
 		&d_2f(z_1,z_2,\frac{x_1}{d_2},x_2)+\frac{x_1^2}{z_1}\left(d_1-\frac{1}{d_2}\right)\leq s\label{eq:convexNegDropped}\\
 		&z\in [0,1]^2,\; x\in \R_+^2,\; s\in \R_+.
 		\end{align}
 	\end{subequations}
To prove the result it suffices to show that, for any value of $(a,b,\lambda)$, either \eqref{eq:discreteNegative1} and \eqref{eq:convexNegative1} are both unbounded, or that \eqref{eq:convexNegative1} has an optimal solution that is also optimal for \eqref{eq:discreteNegative1}. We assume, without loss of generality, that $d_1d_2>1$ (if $d_1d_2=1$, the result follows from Proposition~\ref{prop:MHullSimple} by scaling), $\lambda>0$ (if $\lambda <0$, both problems are unbounded by letting $s\to\infty$, and if $\lambda=0$, problem \eqref{eq:convexNegative1} reduces to linear optimization over a integral polytope and optimal solutions are integral in $z$), and $\lambda =1$ (by scaling). Moreover, since $d_1d_2>1$, there exists an optimal solution for both \eqref{eq:discreteNegative1} and \eqref{eq:convexNegative1}.

Let $(z^*,x^*, s^*)$ be an optimal solution of \eqref{eq:convexNegative1}; we show how to construct from $(z^*,x^*, s^*)$ a feasible solution for \eqref{eq:discreteNegative1} with same objective value, thus optimal for both problems. Observe that for $\gamma\geq 0$, $f(\gamma z_1,\gamma z_2,\gamma x_1,\gamma x_2)=\gamma f(z_1,z_2,x_1,x_2)$. Thus, if $z_1^*,z_2^*<1$, then  $(\gamma z^*,\gamma x^*, \gamma s^*)$ is also feasible for \eqref{eq:convexNegative1} 
with objective value $\gamma\left(a_1z_1^*+a_2z_2^*+b_1x_1^*+b_2x_2^*+s^*\right)$. In particular, either there exists an (integral) optimal solution with $z^*=x^*=0$ by setting $\gamma=0$, or there exists an optimal solution with one of the $z$ variables equal to one  by increasing $\gamma$. Thus, assume without loss of generality that $z_1^*=1$. Now consider the optimization problem
 	\begin{subequations}
 		\label{eq:convexNegative2}
 		\begin{align}
 		\min\;&a_2z_2+b_1x_1+b_2x_2+d_1f(1,z_2,x_1,\frac{x_2}{d_1})+\frac{x_2^2}{z_2}\left(d_2-\frac{1}{d_1}\right)\\
 		&z_2\in [0,1],\; x\in \R_+^2,
 		\end{align}
 	\end{subequations}
 	obtained from \eqref{eq:convexNegative1} by fixing $z_1=1$, dropping constraint \eqref{eq:convexNegDropped}, and eliminating variable $s$ since \eqref{eq:convexNegEq} holds at equality in optimal solutions. 
 	An integer optimal solution for \eqref{eq:convexNegative2} is also optimal for \eqref{eq:discreteNegative1} and \eqref{eq:convexNegative1}. Let $(\hat z, \hat x)$ be an optimal solution for \eqref{eq:convexNegative2}, and consider the two cases: 
 	
    \noindent \textit{Case 1}:
 	 $\hat x_1\leq \hat x_2/d_1$: If $0<\hat z_2<1$, then the point $(\gamma \hat z_2, \gamma \hat x_1, 
 	\gamma \hat x_2)$ 
 	with $0 \le \gamma \hat z_2 \le 1 $ is feasible for \eqref{eq:convexNegative2} 
 	with objective value $$\gamma\left(a_2\hat z_2+b_1\hat x_1+b_2\hat x_2+d_1f(1,\hat z_2,\hat x_1,\frac{\hat x_2}{d_1})+\frac{\hat x_2^2}{\hat z_2}\left(d_2-\frac{1}{d_1}\right)\right).$$
 		Therefore, there exists an optimal solution where $\hat z_2\in \{0,1\}$. \hfill $\qed$

     \noindent \textit{Case 2}:	   
     $\hat x_1> \hat x_2/d_1$: In this case, $(\hat z_2,\hat x_1, \hat x_2)$ is an optimal solution of 
 		\begin{subequations}\label{eq:intermediate1}
 			\begin{align}
 			\min\;&a_2z_2+b_1x_1+b_2x_2+d_1\left(x_1-\frac{x_2}{d_1}\right)^2+\frac{x_2^2}{z_2}\left(d_2-\frac{1}{d_1}\right)\\
 			&z_2\in [0,1],\; x\in \R_+^2.
 			\end{align}
 		\end{subequations}
 	The condition  $\hat x_1> \hat x_2/d_1$ implies that $\hat x_1>0$, thus the optimal value of $x_1$ can be found by taking derivatives and setting to $0$. We find
 	$$\hat x_1=-\frac{b_1}{2d_1}+\frac{x_2}{d_1} \cdot $$
 	Replacing $x_1$ with his optimal value in \eqref{eq:intermediate1} and removing constant terms, we find that \eqref{eq:intermediate1} is equivalent to 
 		\begin{subequations}\label{eq:intermediate2}
 		\begin{align}
 		\min\;&a_2z_2+\left(\frac{b_1}{d_1}+b_2\right)x_2+\frac{x_2^2}{z_2}\left(d_2-\frac{1}{d_1}\right)\\
 		&z_2\in [0,1],\; x_2\in \R_+.
 		\end{align}
 	\end{subequations}
 If $0<\hat z_2<1$, then the point $(\gamma \hat z_2, \gamma \hat x_2)$ with $0 \le \gamma \hat z_2 \le 1 $ is feasible for \eqref{eq:intermediate2} with objective value $$\gamma\left(a_2\hat z_2+\left(\frac{b_1}{d_1}+b_2\right)\hat x_2+\frac{\hat x_2^2}{\hat z_2}\left(d_2-\frac{1}{d_1}\right)\right).$$
 Therefore, there exists an optimal solution where $\hat z_2\in \{0,1\}$. \hfill $\qed$
 		
 In both cases we find an optimal solution with $z_2\in \{0,1\}$. Thus, problem \eqref{eq:convexNegative1} has an optimal solution integral in both $z_1$ and $z_2$, which is also optimal for \eqref{eq:discreteNegative1}.
 \end{proof}
\setcounter{example}{0}
\begin{example}[continued]
The relaxation of \eqref{eq:exampleM2D} with only inequality \eqref{eq:negativeDecomp1}:
\begin{align*}1.16+\min\;& -0.8x_1 -2x_2+0.5\left(z_1+z_2\right)+0.5s\\
\text{s.t.}\;
&3f(z_1,z_2,\frac{x_1}{3},x_2)+\frac{x_1^2}{z_1}\left(3-\frac{1}{3}\right)\leq s\\
&z\in [0,1]^2, x\in \R_+^2,\end{align*}
is sufficient to obtain the integral optimal solution. Note that the big-$M$ constraints $x_i\leq z_i$ are not needed. \hfill $\qed$
\end{example}

Given $d_1,d_2\in \R_+$, define the function $g:[0,1]^2\times \R_+^2\to \R_+$ as \begin{align}
\label{eq:hConvex}g(z_1,z_2,x_1,x_2;d_1,d_2)=\max\Bigg\{&d_1f(z_1,z_2,x_1,\frac{x_2}{d_1})+\frac{x_2^2}{z_2}\left(d_2-\frac{1}{d_1}\right),\notag\\
&d_2f(z_1,z_2,\frac{x_1}{d_2},x_2)+\frac{x_1^2}{z_1}\left(d_1-\frac{1}{d_2}\right)\Bigg\} \cdot
\end{align}
For any $d_1, d_2>0$ with $d_1d_2\geq 1$, function $g$ is the point-wise maximum of two convex functions and is therefore convex. Using the convex function $g$,
Theorem~\ref{theo:MHullGeneral} can be restated as $$\conv(Z^2)=\left\{(z,x,s)\in [0,1]^2\times \R_+^3: g(z_1,z_2,x_1,x_2;d_1,d_2)\leq s\right\}.$$
Finally, it is easy to \rev{verify} that if $z_1\geq z_2$, then the maximum in \eqref{eq:hConvex} corresponds to the first term; if $z_1\leq z_2$, the maximum corresponds to the second term. Thus, an explicit expression of $g$ is 
$$g(z,x;d)=\begin{cases}\frac{d_1x_1^2-2x_1x_2+x_2^2/d_1}{z_1}+\frac{x_2^2}{z_2}\left(d_2-\frac{1}{d_1}\right)&\text{if }z_1\geq z_2\text{ and }d_1x_1\geq x_2\\
\frac{d_1x_1^2-2x_1x_2+d_2x_2^2}{z_2}&\text{if }z_1\geq z_2\text{ and }d_1x_1\leq x_2\\
\frac{d_1x_1^2-2x_1x_2+d_2x_2^2}{z_1}&\text{if }z_1\leq z_2\text{ and }x_1\geq d_2x_2\\
\frac{x_1^2/d_2-2x_1x_2+d_2x_2^2}{z_2}+\frac{x_1^2}{z_1}\left(d_1-\frac{1}{d_2}\right)&\text{if }z_1\le z_2\text{ and }x_1\leq d_2x_2.\end{cases}$$ 

\subsection{Convex relaxations for general M-matrices} 
\label{sec:Mseparation}
Consider the set 
$$Z^n=\left\{(z,x,t)\in \{0,1\}^n\times \R_+^{n+1}: x'Qx\leq t,\; x_i(1-z_i)\rev{=0}, i=1,\ldots,n\right\},$$
where $Q$ is an M-matrix. In this section,
we will show how the convex hull descriptions for $Z^2$ can be used to construct strong convex relaxations for $Z^n$. We start with the following motivating example.
\begin{example}
	\label{ex:Mdecomp}
Consider the signal estimation in regularized form with $n=3$, $(y_1,y_2,y_3)=(0.3,0.7,1.0)$, $\lambda=1$ and $\mu=0.5$,
\begin{subequations}\label{eq:exampleM3D}
\begin{align}
\zeta=\rev{1.58}+\min\;& -0.6x_1-1.4x_2-2.0x_3+t+0.5\left(z_1+z_2+z_3\right)\\
\text{s.t.}\;&x_1^2+x_2^2+x_3^2+(x_1-x_2)^2+(x_2-x_3)^2\leq t\label{eq:quadratic}\\
&x_i\leq z_i,\quad\quad\quad i=1,2,3\\
&z\in \{0,1\}^3,\; x\in \R_+^3.
\end{align}
\end{subequations}
The optimal solution of \eqref{eq:exampleM3D} is $(z^*,x^*)=(0.00,1.00,1.00,0.00,0.48,0.74)$ with objective value $\zeta^*=1.504$. 
The optimal solutions and the corresponding objective values of the convex relaxations of \eqref{eq:exampleM3D} are as follows:
\begin{description} 
	\item[\texttt{$\ell_1$-approx}] The opt. solution is $(z_\ell,x_\ell)=(0.24,0.43,0.59,0.24,0.43,0.59)$ with value $\zeta_\lasso=0.936$, and $\left\|(z^*,x^*)-(z_\ell,x_\ell)\right\|_2=0.80$.
	\item[\texttt{persp}] The opt. solution is $(z_p,x_p)=(0.00,0.40,0.82,0.00,0.29,0.58)$ with value $\zeta_\persp=1.413$, and $\left\|(z^*,x^*)-(z_p,x_p)\right\|_2=0.67$.
	\item [\texttt{pairwise}] The opt. solution $(z_{q},x_{q})=(0.18,0.74,1.00,0.13,0.43,0.71)$ with value $\zeta_\pairwise=1.488$, and $\left\|(z^*,x^*)-(z_{q},x_{q})\right\|_2=0.35$.
	\item[\texttt{decomp.1}]The quadratic constraint \eqref{eq:quadratic} can be decomposed and strengthened as follows:
\begin{align*}
\left(2x_1^2-2x_1x_2+x_2^2\right)+\left(2x_2^2-2x_2x_3+2x_3^2\right)&\leq t\\
\to\;g(z_1,z_2,x_1,x_2;2,1)+g(z_2,z_3,x_2,x_3;2,2)&\leq t;
\end{align*}
leading to solution is $(z_{d},x_{d})=(0.17,1.00,0.93,0.12,0.53,0.73)$ with value $\zeta_{\texttt{decomp.1}}=1.495$, and $\left\|(z^*,x^*)-(z_{d},x_{d})\right\|_2=0.23$. 
\item[\texttt{decomp.2}]Alternatively, constraint \eqref{eq:quadratic} can also be formulated as $g(z_1,z_2,x_1,x_2;2,2)+g(z_2,z_3,x_2,x_3;1,2)\leq t$, and the resulting convex relaxation has solution $(z^*,x^*)=(0.00,1.00,1.00,0.00,0.48,0.74)$, corresponding to the optimal solution of \eqref{eq:exampleM3D}.
\hfill $\qed$
\end{description}
\end{example}
As Example~\ref{ex:Mdecomp} shows, strong convex relaxations of $Z^n$ can be obtained by decomposing $x'Qx$ into sums of two-variable quadratic terms (as $Q$ is an M-matrix) and convexifying each term. However, such a decomposition is not unique and the strength of the relaxation depends on the decomposition chosen. We now discuss how to \rev{optimally} decompose the matrix $Q$ to derive the strongest lower bound possible \rev{for a fixed value of $(z,x,t)$. Then, we show how this decomposition procedure can be embedded in a cutting surface algorithm to obtain a strong convex relaxation of \eqref{eq:optMmatrix}.}   

Consider the \emph{separation problem}: given a point $(z,x,t)\in [0,1]^n\times \R_+^{n+1}$, find a decomposition of $Q$ such that, after strengthening each two-variable term, results in a most violated inequality, which is formulated as follows:
\begin{subequations}\label{eq:Mseparation} \small
\begin{align}
\theta(z,x)=\max_{d}\;&\sum_{i=1}^n\sum_{j=i+1}^n|Q_{ij}|g(z_i,z_j,x_i,x_j; d_{ij}^i,d_{ij}^j)\label{eq:MsepObj}\\
\text{s.t.}\;& \sum_{j<i}|Q_{ji}|d_{ji}^i+\sum_{j>i}|Q_{ij}|d_{ij}^i =Q_{ii}\quad &\forall i=1,\ldots,n \label{eq:MsepCorrect}\\
&d_{ij}^id_{ij}^j\geq 1,\; d_{ij}^i\geq 0,\; d_{ij}^j\geq 0\quad &\forall i<j. \label{eq:MsepConvex}
\end{align}
\end{subequations} \normalsize
Observe that the variables of the \rev{separation} problem \eqref{eq:Mseparation} are the parameters $d$, and the variables of the estimation problem $(z,x)$ are fixed in the separation problem. 
In formulation \eqref{eq:Mseparation} for each (negative) entry $Q_{ij}$, $i<j$, there is a two-variable quadratic term of the form $|Q_{ij}|\left(d_{ij}^ix_i^2-2x_ix_j+d_{ij}^jx_j^2\right)$; after convexifying each such term, one obtains the objective \eqref{eq:MsepObj}. Constraints \eqref{eq:MsepCorrect} ensure that the decomposition indeed corresponds to the original matrix $Q$ by ensuring that the diagonal elements coincide, and constraints \eqref{eq:MsepConvex} ensure that each quadratic term is convex. From Proposition~\ref{prop:decomposable}, problem \eqref{eq:Mseparation} is feasible for any M-matrix $Q$. 

For any feasible value of $d$, the objective \eqref{eq:MsepObj} is convex in $(z,x)$; thus the function $\theta:[0,1]^n\times \R_+^n\to \R_+$ defined in \eqref{eq:Mseparation} is a supremum of convex functions and is convex itself. Moreover, the constraints \eqref{eq:MsepCorrect} and \eqref{eq:MsepConvex} are linear or rotated cone constraints, thus, are convex in $d$. As we now show, the objective function \eqref{eq:MsepObj} is concave in $d$, thus \eqref{eq:Mseparation} is a convex optimization.

Index the variables such that $z_1\geq z_2\geq\ldots\geq z_n$. Then, each term in the objective \eqref{eq:MsepObj} reduces to
\small
\begin{multline}
g(z_i,z_j,x_i,x_j; d_{ij}^i,d_{ij}^j)=\begin{cases}\frac{d_{ij}^ix_i^2-2x_ix_j+x_j^2/d_{ij}^i}{z_i}+\frac{x_j^2}{z_j}\left(d_{ij}^j-\frac{1}{d_{ij}^i}\right)&\text{if }d_{ij}^ix_i\geq x_j\\
\frac{d_{ij}^ix_i^2-2x_ix_j+d_{ij}^jx_j^2}{z_j}&\text{if }d_{ij}^ix_i\leq x_j
\end{cases}\notag\\
=d_{ij}^i\frac{x_i^2}{z_i}\!+\!d_{ij}^j\frac{x_j^2}{z_j} \!+ \! \! \begin{cases}\frac{-2x_ix_j}{z_i}-\frac{x_j^2}{d_{ij}^i}\! \left(\frac{1}{z_j}\!-\!\frac{1}{z_i}\right)\! \! & \! \! \text{if }d_{ij}^ix_i\geq x_j\\
\frac{-2x_ix_j}{z_j}+d_{ij}^ix_i^2 \!\left(\frac{1}{z_j}\!-\!\frac{1}{z_i}\right) \! \! & \! \! \text{if }d_{ij}^ix_i\leq x_j.\end{cases}\label{eq:hSepForm}
\end{multline}\normalsize
Thus, $g(z,x;d)$ is separable in $d_{ij}^i$ and $d_{ij}^j$, is linear in $d_{ij}^j$; and, it is linear in $d_{ij}^i$ for $d_{ij}^i \le x_j/x_i$, and concave for $d_{ij}^i \ge  x_j/x_i$. Moreover, it is easily shown that it is continuous and differentiable (i.e., the derivatives of both pieces of $g$ with respect to $d_{ij}^i$ coincide if $d_{ij}^ix_i=x_j$). Therefore, the separation problem \eqref{eq:Mseparation} can be solved in polynomial time by first sorting the variables $z_i$ and then by solving a convex optimization problem. 

\rev{
The separation procedure can be embedded in an algorithm that iteratively constructs stronger relaxations of problem \eqref{eq:optMmatrix}.

\noindent\texttt{Simple cutting surface algorithm}:
\begin{enumerate}
	\item[1.] Solve a valid convex relaxation. 
	\item[2.] Solve separation problem \eqref{eq:Mseparation} using a convex optimization method.
	\item[3.] Add the inequality obtained from solving the separation problem to the formulation, strengthening the relaxation, and go to step~1.
\end{enumerate}
Below, we illustrate the \texttt{simple cutting surface algorithm}.
}

\setcounter{example}{1}
\begin{example}[Continued]
	Consider the \persp relaxation
	\begin{subequations}\label{eq:formulationExample}
	\begin{align}
	\zeta_1=\rev{1.58}+\min\;& -0.6x_1-1.4x_2-2.0x_3+t+0.5\left(z_1+z_2+z_3\right)\\
	\text{s.t.}\;&\frac{x_1^2}{z_1}+\frac{x_2^2}{z_2}+\frac{x_3^2}{z_3}+(x_1-x_2)^2+(x_2-x_3)^2\leq t\\
	&x_i\leq z_i,\quad i=1,2,3\\
	&z\in [0,1]^3,\; x\in \R_+^3.
	\end{align}
	\end{subequations}
with optimal solution $(z, x)_1=(0.00,0.40,0.82,0.00,0.29,0.58)$ with $\zeta_1=1.413$ and $\left\|(z^*,x^*)-(z,x)_1\right\|_2=0.67$. This relaxation can be improved by solving the separation problem \eqref{eq:Mseparation} at $( z,x)_1$ to obtain the optimal parameters $d_{12}^1=2.00$, $d_{12}^2=0.51$, $d_{23}^2=2.49$ and $d_{23}^3=2.00$, leading to the decomposition and the constraint
$$g(z_1,z_2,x_1,x_2;2.00,0.51)+g(z_2,z_3,x_2,x_3;2.49,2.00)\leq t.$$
Adding this constraint to \eqref{eq:formulationExample} and resolving gives the improved solution 
$(z, x)_2=(0.15,0.70,1.00,0.12,0.43,0.71)$.
This process can be repeated iteratively, resulting in the sequence of solutions
\begin{description}
	\item[\texttt{iter.2}] $(z, x)_2=(0.15,0.70,1.00,0.12,0.43,0.71)$ with $\zeta_2=1.452$ and $\left\|(z^*,x^*)-(z,x)_2\right\|_2=0.36$. The corresponding separation problem has solution $(d_{12}^1,d_{12}^2,d_{23}^2,d_{23}^3)=(2,1.06,1.94,2)$.
	\item[\texttt{iter.3}] $(z, x)_3=(0.14,1.00,1.00,0.10,0.52,0.75)$ with $\zeta_3=1.499$ and $\left\|(z^*,x^*)-(z,x)_3\right\|_2=0.18$. The corresponding separation problem has solution $(d_{12}^1,d_{12}^2,d_{23}^2,d_{23}^3)=(2,2.5,0.5,2)$.
	\item[\texttt{iter.4}] $(z, x)_4=(0.00,1.00,1.00,0.00,0.48,0.74)$ with $\zeta_3=1.504$. The solution is integral and optimal for \eqref{eq:exampleM3D}.\hfill $\qed$
\end{description}

\end{example}

The iterative separation procedure outlined above ensures that $(z,x,t)$ satisfies the convex relaxation
$$\Theta=\left\{(z,x,t)\in [0,1]^n\times \R_+^{n+1}:\theta(z,x)\leq t\right\}$$ of $Z^n$ that dominates the \rev{\texttt{$\ell_1$-approx}}, \texttt{persp}, and \texttt{pairwise} \rev{and gives the strong relaxation of problem \eqref{eq:optMmatrix}, based on the optimal \texttt{decomposition} of matrix $Q$, given by 
	\small\begin{align*}
(\texttt{decomp})  \ \ \ \ \	\|y\|_2^2+ \! \! \min_{(z,x)\in [0,1]^n\times \R_+^n} \! \! \! -2y'x+\theta(z,x)\text{:}\ z\in \bar C,\; x_i\leq \|y\|_\infty z_i,\ i=1\ldots,n.\end{align*}\normalsize
In Section~\ref{sec:implementation} we discuss the efficient implementation of \decomp in a conic quadratic optimization framework.
}

\section{Conic quadratic representation and Lagrangian decomposition}
\label{sec:implementation}

\rev{Relaxation~\decomp simultaneously exploits sparsity, fitness and smoothness terms in \eqref{eq:denoising}
	and, therefore, dominates all of the relaxations discussed in Section~\ref{sec:background}.
However, the convex functions $f$ and $g$ can be pathological, as they are defined by pieces and are not differentiable everywhere. Handling function $\theta$ is challenging as it is non-differentiable, but also it is not given in closed form and requires solving optimization problem \eqref{eq:Mseparation} to evaluate.

In this section, we first show how to tackle \decomp effectively by formulating it as a conic quadratic optimization problem in an extended space. We then give a tailored Lagrangian decomposition method, which is amenable to parallel computing and highly scalable.}

\rev{
	\subsection{Extended formulations} 

The \texttt{simple cutting surface algorithm} to solve \texttt{decomp}, illustrated in Example~\ref{ex:Mdecomp}, is computationally cumbersome since: \emph{(i)} the separation problem (step 2) requires solving a constrained convex optimization problem; \emph{(ii)} each cut added (step 3) is dense (and thus problematic for optimization software); \emph{(iii)} a single cut is generated at each iteration; consequently, the method may require many iterations to converge.

\new{In this section, we show how to address these shortcomings with a conic quadratic extended formulation with auxiliary variables. In particular:
\begin{itemize}
	\item[\emph{(i)}] The separation problem can be solved in closed form (Proposition~\ref{prop:optimalD})-- eliminating the need to solve auxiliary optimization problems.
	\item[\emph{(ii)}] Cuts \eqref{eq:cut} can be added as inequalities with at most four variables. Most conic quadratic optimization solvers are designed to exploit sparsity to improve performance and numerical stability.
	\item[\emph{(iii)}] The method may add up to $\mathcal{O}(n^2)$ cuts per round, decreasing the total number of rounds and re-optimizations. More importantly, simple cuts (e.g., sparse or linear cuts) in this extended space may translate into highly nonlinear cuts when projected into the original space of variables, often resulting in additional strength. Indeed, MIO often relies on extended formulations as such formulations lead to substaintial improvements compared to working in the original space of variables \cite{A:robobj,atamturk2010conic,tawarmalani2005polyhedral,vielma2017extended}.
\end{itemize}
The proposed extended formulation leads to a method at least two orders-of-magnitude faster than the \texttt{simple cutting surface algorithm.}}

	
	Define additional variables $\Gamma\in \R^{n\times n}$ such that $\Gamma_{ij}=\Gamma_{ji}$; intuitively, variable $\Gamma_{ij}$ represents the product $x_ix_j$. Given an M-matrix $Q$, consider the convex optimization problem
	\small \begin{subequations}\label{eq:relaxF}
		\begin{align}
		\min_{(z,x,\Gamma)}\;&\|y\|_2^2-2y'x+\langle \Gamma, Q\rangle\label{eq:relaxF_obj}\\
		\text{s.t.}\;
		& \Gamma_{ii}z_i\geq x_i^2&\hspace{-6cm}\forall i=1,\ldots,n\hfill \label{eq:relaxF_persp}\\
		& 0\geq \max_{d_{ij}> 0}d_{ij}f(z_i,z_j,x_i,\frac{x_j}{d_{ij}})-\left(d_{ij} \Gamma_{ii}-2\Gamma_{ij}+\frac{1}{d_{ij}}\Gamma_{jj}\right)&\forall i<j\hfill\label{eq:relaxF_cuts}\\
		& 0\leq x_i\leq \|y\|_\infty z_i&\hspace{-3cm}i=1,\ldots,n\hfill\\
		&z\in \bar C,\; x\in \R_+^n,\; \Gamma\in \R^{n\times n}.
		\end{align}
	\end{subequations}
	\normalsize
	
We will show in this section that problem \eqref{eq:relaxF} is equivalent to \decomp under mild conditions, and can be implemented efficiently via conic quadratic optimization. In order to prove this result, we introduce the auxiliary formulation:
	\small \begin{subequations}\label{eq:relaxG}
		\begin{align}
		\min_{(z,x,\Gamma)}\;&\|y\|_2^2-2y'x+\langle \Gamma, Q\rangle\label{eq:relaxG_obj}\\
		\text{s.t.}\;
		& 0\geq \max_{\substack{d_{ij}^id_{ij}^j\geq 1\\d_{ij}^i,d_{ij}^j\geq 0}}g(z_i,z_j,x_i,x_j; d_{ij}^i,d_{ij}^j)-\left(d_{ij}^i \Gamma_{ii}-2\Gamma_{ij}+d_{ij}^j\Gamma_{jj}\right)&\hspace{-0.1cm}\forall i<j\hfill\label{eq:relaxG_pair}\\
		& 0\leq x_i\leq \|y\|_\infty z_i&\hspace{-3cm}i=1,\ldots,n\quad\hfill\\
		&z\in \bar C,\; x\in \R_+^n,\; \Gamma\in \R^{n\times n}\label{eq:relaxG_const}.
		\end{align}
		\normalsize\end{subequations}
	\normalsize
We first prove that \eqref{eq:relaxG} is equivalent to \decomp (Proposition~\ref{prop:equivalentG}), and then show that \eqref{eq:relaxF} and \eqref{eq:relaxG} are equivalent (Proposition~\ref{prop:equivalent}). Before doing so, let us verify that \eqref{eq:relaxF}--\eqref{eq:relaxG} are indeed relaxations of \eqref{eq:optMmatrix}.
	
	\begin{proposition}\label{prop:valid}
		Problems \eqref{eq:relaxF}--\eqref{eq:relaxG} are valid convex relaxations of \eqref{eq:optMmatrix}.
	\end{proposition}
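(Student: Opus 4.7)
The plan is to establish validity via a standard lifting argument: for each feasible $(z,x)$ of \eqref{eq:optMmatrix}, I will exhibit a $\Gamma$ such that $(z,x,\Gamma)$ is feasible for \eqref{eq:relaxF} (resp. \eqref{eq:relaxG}) with the same objective value, and then separately verify convexity of each formulation. Concretely, given a feasible $(z,x)$, I set $\Gamma_{ij}=x_ix_j$ so that $\langle \Gamma,Q\rangle=\sum_{i,j}Q_{ij}x_ix_j=x'Qx$, matching the objective term in \eqref{eq:relaxF_obj} and \eqref{eq:relaxG_obj}.

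Next I will check the constraints for this lift. Constraint \eqref{eq:relaxF_persp} becomes $z_ix_i^2\geq x_i^2$, which holds since $z_i\in\{0,1\}$ together with $x_i(1-z_i)=0$ forces $x_i=0$ whenever $z_i=0$. For \eqref{eq:relaxF_cuts}, I will invoke Proposition~\ref{prop:MHullSimple}: since $(z_i,z_j,x_i,x_j/d_{ij})$ satisfies the binary and complementarity constraints defining $X^2$ with $s=(x_i-x_j/d_{ij})^2$, we have $f(z_i,z_j,x_i,x_j/d_{ij})\leq (x_i-x_j/d_{ij})^2$. Multiplying by $d_{ij}>0$ and expanding the square yields
\[d_{ij}f\!\left(z_i,z_j,x_i,\tfrac{x_j}{d_{ij}}\right)\leq d_{ij}x_i^2-2x_ix_j+\tfrac{1}{d_{ij}}x_j^2=d_{ij}\Gamma_{ii}-2\Gamma_{ij}+\tfrac{1}{d_{ij}}\Gamma_{jj},\]
uniformly in $d_{ij}>0$, giving \eqref{eq:relaxF_cuts}. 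For \eqref{eq:relaxG_pair}, the argument is analogous but appeals to Theorem~\ref{theo:MHullGeneral} instead: for any $d_{ij}^i,d_{ij}^j\geq 0$ with $d_{ij}^id_{ij}^j\geq 1$, the function $g$ is a valid underestimator of $d_{ij}^ix_i^2-2x_ix_j+d_{ij}^jx_j^2$ on the mixed-integer set $Z^2$, and substituting $\Gamma_{ii}=x_i^2$, $\Gamma_{ij}=x_ix_j$, $\Gamma_{jj}=x_j^2$ produces the desired inequality.

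For convexity, the objectives are linear in $(z,x,\Gamma)$. Constraint \eqref{eq:relaxF_persp} is a rotated second-order cone constraint. For the parametric constraints \eqref{eq:relaxF_cuts} and \eqref{eq:relaxG_pair}, for each fixed parameter vector the expression inside the maximum is convex in $(z,x)$ (convexity of $f$ and $g$ established in Proposition~\ref{prop:MHullSimple} and the discussion following \eqref{eq:hConvex}) and linear in $\Gamma$; the constraint then requires that the pointwise supremum of these convex functions over the parameter set be nonpositive, which is itself a convex constraint since a supremum of convex functions is convex. Hence both feasible regions are convex, completing the validity argument.

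I do not anticipate a substantive obstacle here — the only point that requires care is recognizing that the parameter $d$ appearing inside the maxima in \eqref{eq:relaxF_cuts}--\eqref{eq:relaxG_pair} should be read as a universal quantifier (the constraint must hold for every $d$), which is precisely what makes the lifted $\Gamma_{ij}=x_ix_j$ feasible via the upper-bound inequalities of Proposition~\ref{prop:MHullSimple} and Theorem~\ref{theo:MHullGeneral}. The heavier lifting — namely, proving the converse direction that \eqref{eq:relaxF} and \eqref{eq:relaxG} are equivalent to \texttt{decomp} — is deferred to the subsequent Propositions~\ref{prop:equivalentG} and \ref{prop:equivalent}.
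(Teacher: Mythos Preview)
Your proposal is correct and follows essentially the same approach as the paper: lift via $\Gamma_{ij}=x_ix_j$ so that $\langle\Gamma,Q\rangle=x'Qx$, verify the parametric constraints using the validity direction of Proposition~\ref{prop:MHullSimple}/Theorem~\ref{theo:MHullGeneral}, and argue convexity from the fact that the right-hand sides are suprema of convex functions. The paper only writes out the argument for \eqref{eq:relaxG} and omits \eqref{eq:relaxF} ``for brevity,'' whereas you spell out both (including the check of \eqref{eq:relaxF_persp}), but the substance is identical.
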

\begin{proof}
	We only prove this result for \eqref{eq:relaxG}; the proof for \eqref{eq:relaxF} follows from identical arguments and is omitted for brevity.
	
	First we argue convexity of \eqref{eq:relaxG}. Clearly, the objective \eqref{eq:relaxG_obj} is linear and constraints \eqref{eq:relaxG_const} are convex. Moreover, the right hand sides of constraints \eqref{eq:relaxG_pair} are supremum of convex functions, thus convex. 
	
	Now we argue that \eqref{eq:relaxG} is indeed a relaxation of \eqref{eq:optMmatrix}. Suppose that constraints $\Gamma_{ij}=x_ix_j$ and $z\in C$ are added to \eqref{eq:relaxG}: then $\langle \Gamma, Q\rangle=x'Qx$ and the objective functions of \eqref{eq:optMmatrix} and \eqref{eq:relaxG} coincide. Moreover, for any nonnegative $d_{ij}^i, d_{ij}^j$ such that $d_{ij}^i d_{ij}^j\geq 1$, we see that 
	\begin{align*}
	g(z_i,z_j,x_i,x_j; d_{ij}^i,d_{ij}^j)&\leq d_{ij}^i x_i^2-2x_{i}x_j+d_{ij}^jx_j^2 \tag{Theorem~\ref{theo:MHullGeneral} -- validity}\\
	&=d_{ij}^i \Gamma_{ii}-2\Gamma_{ij}+d_{ij}^j\Gamma_{jj}\tag{$\Gamma_{ij}=x_ix_j$},
	\end{align*}
thus inequalities \eqref{eq:relaxG_pair} are satisfied. So, if constraints $\Gamma_{ij}=x_ix_j$ and $z\in C$ are added, 
\eqref{eq:relaxG}  is equivalent to \eqref{eq:optMmatrix}. Hence,
\eqref{eq:relaxG} is a relaxation of \eqref{eq:optMmatrix}.
\end{proof}

	\begin{proposition} \label{prop:equivalentG}If $Q$ is a positive definite M-matrix, then problems 
		\decomp  and \eqref{eq:relaxG} are equivalent.
	\end{proposition}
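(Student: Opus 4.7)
My plan is to decouple \eqref{eq:relaxG} in $\Gamma$ and show that the inner minimization over $\Gamma$ equals $\theta(z,x)$ for every $(z,x)$ satisfying the common outer constraints $z\in\bar C$ and $0\le x_i\le\|y\|_\infty z_i$. Since both \decomp and \eqref{eq:relaxG} share the leading objective term $\|y\|_2^2-2y^\top x$ and the same outer feasible region, such an identity yields equivalence of optimal values and of their $(z,x)$-projections. The identity is itself a semi-infinite LP duality statement in $\Gamma$, with the separation problem \eqref{eq:Mseparation} playing the role of the dual.

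For the easy direction $\min_\Gamma\langle\Gamma,Q\rangle\ge\theta(z,x)$, I would take an arbitrary feasible $\Gamma$ together with an arbitrary $d$ feasible for \eqref{eq:Mseparation}. Using $Q_{ij}=-|Q_{ij}|$ for $i\ne j$ together with the balance constraint \eqref{eq:MsepCorrect} to regroup the diagonal contribution, the inner product rewrites as
\[
\langle\Gamma,Q\rangle=\sum_{i<j}|Q_{ij}|\bigl(d_{ij}^{i}\Gamma_{ii}+d_{ij}^{j}\Gamma_{jj}-2\Gamma_{ij}\bigr),
\]
and each bracket is bounded below by $g(z_i,z_j,x_i,x_j;d_{ij}^{i},d_{ij}^{j})$ thanks to the cut \eqref{eq:relaxG_pair} evaluated at the chosen $d$. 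Summing with weights $|Q_{ij}|$ and then maximizing over $d$ yields $\langle\Gamma,Q\rangle\ge\theta(z,x)$.

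The nontrivial direction requires exhibiting a $\Gamma^{\ast}$ feasible for \eqref{eq:relaxG_pair} with $\langle\Gamma^{\ast},Q\rangle=\theta(z,x)$. I would extract $\Gamma^{\ast}$ from the KKT data of \eqref{eq:Mseparation} at an optimal $d^{\ast}$: set $\Gamma^{\ast}_{ii}$ equal to the multiplier $\lambda_i$ of the $i$-th balance constraint \eqref{eq:MsepCorrect}, and choose $\Gamma^{\ast}_{ij}$ so that cut $(i,j)$ is tight at $d^{\ast}$. The identity from the easy direction then forces $\langle\Gamma^{\ast},Q\rangle=\theta(z,x)$ automatically. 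After substituting $\Gamma^{\ast}_{ij}$, feasibility of $\Gamma^{\ast}$ reduces, for every pair $(i,j)$ and every $(d^i,d^j)$ with $d^id^j\ge 1$, to the supporting-hyperplane inequality
\[
g(z_i,z_j,x_i,x_j;d^i,d^j)-g(z_i,z_j,x_i,x_j;d_{ij}^{i\ast},d_{ij}^{j\ast})\le (d^i-d_{ij}^{i\ast})\Gamma^{\ast}_{ii}+(d^j-d_{ij}^{j\ast})\Gamma^{\ast}_{jj},
\]
which I would obtain by combining concavity of $g$ in $d$ (established in Section~\ref{sec:Mseparation}) with the KKT stationarity relations tying $\lambda_i$ to $\partial g/\partial d^i$ at $d^{\ast}$ and to the multiplier of the constraint $d^id^j\ge 1$.

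The main obstacle I foresee is verifying this supporting-hyperplane property cleanly in the presence of two nonsmooth features: the piecewise definition of $g$ across the seam $d^ix_i=x_j$, and the possible activity of the product constraint $d^id^j\ge 1$. I would handle the former by recording that $g$ is continuously differentiable across the seam (as noted in Section~\ref{sec:Mseparation}), so that classical KKT conditions apply, and the latter by parameterizing the active boundary as $d^j=1/d^i$ and appealing to concavity of the resulting one-variable restriction. Positive definiteness of $Q$ enters at exactly one place: it guarantees that \eqref{eq:Mseparation} attains a finite optimum with bounded KKT multipliers, thereby validating the above construction and precluding any duality gap.
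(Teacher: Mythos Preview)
Your proposal is correct and rests on the same Lagrangian-duality idea as the paper, but the paper's execution is more economical. The paper first eliminates each off-diagonal $\Gamma_{ij}$ (since $\Gamma_{ij}$ appears only with nonpositive coefficient $Q_{ij}$ in the objective and in a single constraint \eqref{eq:relaxG_pair}, it binds at optimality), and then recognizes the resulting $\min_{\Gamma_{ii}}\max_d$ problem as \emph{exactly} the Lagrangian dual of the separation problem \eqref{eq:Mseparation}, with $\Gamma_{ii}$ the multiplier of the $i$-th balance constraint \eqref{eq:MsepCorrect}. Equivalence with \decomp then follows from strong duality for \eqref{eq:Mseparation}, which the paper secures via Slater's condition (positive definiteness gives $Q=\bar Q+\rho I$ with $\rho>0$ and $\bar Q$ an M-matrix, from which a strictly feasible $d$ is built). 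Your ``hard direction''---constructing $\Gamma^{\ast}$ from KKT data and then verifying the supporting-hyperplane inequality for every $(d^i,d^j)$---is precisely the content of that strong-duality step unpacked by hand; the paper's framing lets you bypass your self-identified ``main obstacle'' entirely by appealing to general convex duality rather than checking the concavity/KKT conditions piecewise across the seam and the active product constraint.
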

\begin{proof}
Consider the variable $\Gamma_{ij}$ in \eqref{eq:relaxG} for some pair $i<j$: observe that it only appears in the objective with coefficient $Q_{ij}\leq 0$, and a single constraint \eqref{eq:relaxG_pair}. It follows that in an optimal solution of \eqref{eq:relaxG}, variable $\Gamma_{ij}$ is as large as possible and the corresponding constraint \eqref{eq:relaxG_pair} is binding:
	\begin{equation*}
	2\Gamma_{ij}=\max_{\substack{d_{ij}^id_{ij}^j\geq 1\\d_{ij}^i,d_{ij}^j\geq 0}}g(z_i,z_j,x_i,x_j; d_{ij}^i,d_{ij}^j)-\left(d_{ij}^i \Gamma_{ii}+d_{ij}^j\Gamma_{jj}\right).
	\end{equation*}
	Therefore, we find that problem \eqref{eq:relaxG} is equivalent to 
	\small \begin{subequations}\label{eq:relaxG2}
		\begin{align}
		\min_{z\in \bar C,x\in \R_+^n,\Gamma\in \R^n}\max_{d}\;&\|y\|_2^2-2y'x+\sum_{i=1}^n Q_{ii}\Gamma_{ii}\notag\\
		+\sum_{i=1}^n&\sum_{j=i+1}^n |Q_{ij}|\Big(g(z_i,z_j,x_i,x_j; d_{ij}^i,d_{ij}^j)-d_{ij}^i \Gamma_{ii}-d_{ij}^j\Gamma_{jj}\Big)\label{eq:relaxG2_obj}\\
		\text{s.t.}\;
		&d_{ij}^id_{ij}^j\geq 1, d_{ij}^i\geq 0,\; d_{ij}^j\geq 0 \quad \forall i<j.
		\end{align}
		\end{subequations}\normalsize
	
Rearranging terms, we see that the objective of the inner maximization problem \eqref{eq:relaxG2_obj} is equal to
\small\begin{equation*}
\sum_{i=1}^n \left(Q_{ii}-\sum_{j<i}|Q_{ji}|d_{ji}^i-\sum_{j>i}|Q_{ij}|d_{ij}^i\right)\Gamma_{ii}+\sum_{i=1}^n\sum_{j=i+1}^n |Q_{ij}|g(z_i,z_j,x_i,x_j;d_{ij}^i,d_{ij}^j),
\end{equation*} \normalsize
where we ignored the constant (in $d$) term  $\|y\|_2^2-2y'x$. In particular, the inner maximization problem is precisely the Lagrangian relaxation of \eqref{eq:Mseparation}, where $\Gamma_{ii}$ are the dual variables associated with constraints \eqref{eq:MsepCorrect}. Therefore, if strong duality holds for problem \eqref{eq:Mseparation}, then problems \decomp and \eqref{eq:relaxG} are equivalent. 

Finally, we verify that Slater's condition and, thus, strong duality for \eqref{eq:Mseparation} hold for positive definite $Q$. Since $Q$ is positive definite, we have that $Q=\bar Q+\rho I$ for an M-matrix $\bar Q$ (with same off-diagonals) and some $\rho>0$ (e.g., let $\rho$ be the minimum eigenvalue of $Q$). Since $\bar Q$ is an $M$-matrix, there exists a vector $\delta$ satisfying 
\begin{align*}
& \sum_{j<i}|Q_{ji}| \delta_{ji}^i+\sum_{j>i}|Q_{ij}| \delta_{ij}^i =Q_{ii}-\rho<Q_{ii}\quad &\forall i=1,\ldots,n \\
& \delta_{ij}^i\delta_{ij}^j\geq 1,\; \delta_{ij}^i\geq 0,\; \delta_{ij}^j\geq 0\quad &\forall i<j.
\end{align*}
It follows that letting $d_{ij}^i=\delta_{ij}^i+\epsilon$ and $d_{ij}^j=\delta_{ij}^j+\epsilon$ for all $i<j$ and $\epsilon>0$ small enough, we find a vector $d$ such that $d_{ij}^id_{ij}^j>1$ and \begin{equation}\label{eq:correct}\sum_{j<i}|Q_{ji}| d_{ji}^i+\sum_{j>i}|Q_{ij}| d_{ij}^i \leq Q_{ii} \quad \forall i=1,\ldots,n.\end{equation}
After increasing additional entries of $d$ until all inequalities \eqref{eq:correct} are tight, we find an interior point of \eqref{eq:Mseparation}.
\end{proof}
\normalsize

For the signal estimation problem,  $Q$ is positive-definite. Nonetheless, if strong duality does not hold,  formulation \eqref{eq:relaxG} is still a convex relaxation of \eqref{eq:optMmatrix} that is at least as strong as \texttt{decomp.}

\begin{proposition}\label{prop:equivalent} Problems \eqref{eq:relaxF} and \eqref{eq:relaxG} are equivalent.
\end{proposition}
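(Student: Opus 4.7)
The plan is to show that \eqref{eq:relaxF} and \eqref{eq:relaxG} have identical feasible sets in the common variables $(z,x,\Gamma)$; since their objectives coincide, the equivalence of optimal values follows. The key structural observation is that $g$ in \eqref{eq:hConvex} is a pointwise maximum of two explicit pieces,
\begin{align*}
g_1(z_i,z_j,x_i,x_j;d^i,d^j)&=d^i f(z_i,z_j,x_i,x_j/d^i)+\tfrac{x_j^2}{z_j}\bigl(d^j-\tfrac{1}{d^i}\bigr),\\
g_2(z_i,z_j,x_i,x_j;d^i,d^j)&=d^j f(z_i,z_j,x_i/d^j,x_j)+\tfrac{x_i^2}{z_i}\bigl(d^i-\tfrac{1}{d^j}\bigr),
\end{align*}
so that \eqref{eq:relaxG_pair} is equivalent to the conjunction of two families of cuts, one per piece. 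The boundary surface $d^i d^j=1$ will play the central role in both directions: interior parameter choices yield cuts dominated by their boundary counterparts once the perspective inequalities $\Gamma_{ii}z_i\geq x_i^2$ are in force.

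For \eqref{eq:relaxG}\,$\Rightarrow$\,\eqref{eq:relaxF}, I would first derive \eqref{eq:relaxF_persp} from \eqref{eq:relaxG_pair} by a limiting argument: fix $i<j$, restrict to $d^j=1/d^i$, and send $d^i\to\infty$. Using the piecewise definition of $f$, one checks that both pieces of $g(z_i,z_j,x_i,x_j;d^i,1/d^i)$ grow asymptotically like $d^i x_i^2/z_i$; dividing \eqref{eq:relaxG_pair} by $d^i$ and taking the limit yields $\Gamma_{ii}\geq x_i^2/z_i$, and symmetrically $\Gamma_{jj}\geq x_j^2/z_j$. Restricting \eqref{eq:relaxG_pair} to the $g_1$ piece with $d^j=1/d^i$ then becomes $d^i\Gamma_{ii}-2\Gamma_{ij}+\Gamma_{jj}/d^i\geq d^i f(z_i,z_j,x_i,x_j/d^i)$, which is exactly \eqref{eq:relaxF_cuts} at $d_{ij}=d^i$.

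For the reverse direction \eqref{eq:relaxF}\,$\Rightarrow$\,\eqref{eq:relaxG}, I would rewrite each $g_1$ cut as
\begin{equation*}
\underbrace{\bigl[d^i\Gamma_{ii}-2\Gamma_{ij}+\tfrac{\Gamma_{jj}}{d^i}-d^i f(z_i,z_j,x_i,\tfrac{x_j}{d^i})\bigr]}_{\geq\,0\ \text{by }\eqref{eq:relaxF_cuts}}\;+\;\bigl(d^j-\tfrac{1}{d^i}\bigr)\underbrace{\bigl(\Gamma_{jj}-\tfrac{x_j^2}{z_j}\bigr)}_{\geq\,0\ \text{by }\eqref{eq:relaxF_persp}}\;\geq\;0,
\end{equation*}
where $d^j-1/d^i\geq 0$ since $d^i d^j\geq 1$. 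The $g_2$ family is handled analogously using the perspective inequality on $\Gamma_{ii}$, but requires the identity
\begin{equation*}
d\,f(z_i,z_j,x_i/d,x_j)=\tfrac{1}{d}f(z_i,z_j,x_i,d\,x_j),
\end{equation*}
which matches the $g_2$ boundary cut at parameter $d^j=d$ with the \eqref{eq:relaxF_cuts} cut at $d_{ij}=1/d$; one verifies it by checking that both sides equal $(x_i-d x_j)^2$ divided by $d\,z_i$ or $d\,z_j$ depending on the sign of $x_i-d x_j$. The main obstacle will be this change-of-variable identity together with the asymptotic analysis producing \eqref{eq:relaxF_persp}, since both require careful bookkeeping across the regions of the piecewise definition of $f$, although each case reduces to a routine algebraic check.
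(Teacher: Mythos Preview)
Your proposal is correct and follows essentially the same route as the paper. Both proofs split the $g$-constraint \eqref{eq:relaxG_pair} into its two pieces $g_1,g_2$, extract the perspective inequalities \eqref{eq:relaxF_persp} by sending one parameter to infinity along the curve $d^id^j=1$, observe that once \eqref{eq:relaxF_persp} holds the supremum in \eqref{eq:relaxG_pair} is attained on the boundary $d^id^j=1$, and finally use the scaling identity $d\,f(z_i,z_j,x_i/d,x_j)=(1/d)\,f(z_i,z_j,x_i,d x_j)$ to collapse the $g_2$ family onto \eqref{eq:relaxF_cuts}. The only cosmetic differences are that you organize the argument as two implications and write the reverse direction as an explicit nonnegative decomposition, whereas the paper phrases the same step as ``the inner maximum in $d^j$ is attained at $d^j=1/d^i$''; and the paper derives \eqref{eq:relaxF_persp} by an unboundedness contradiction rather than your divide-and-limit argument, which are equivalent.
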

\begin{proof}
For any $i<j$, we see from Theorem~\ref{theo:MHullGeneral} that constraint \eqref{eq:relaxG_pair} is equivalent to the pair of constraints:
\small\begin{align}0&\geq \max_{\substack{d_{ij}^id_{ij}^j\geq 1\\d_{ij}^i,d_{ij}^j\geq 0}}d_{ij}^if(z_i,z_j,x_i,\frac{x_j}{d_{ij}^i})+\frac{x_j^2}{z_j}\left(d_{ij}^j-\frac{1}{d_{ij}^i}\right)-d_{ij}^i \Gamma_{ii}+2\Gamma_{ij}-d_{ij}^j\Gamma_{jj}\label{eq:constJ}\\
0&\geq \max_{\substack{d_{ij}^id_{ij}^j\geq 1\\d_{ij}^i,d_{ij}^j\geq 0}}d_{ij}^jf(z_i,z_j,\frac{x_i}{d_{ij}^j},x_j)+\frac{x_i^2}{z_i}\left(d_{ij}^i-\frac{1}{d_{ij}^j}\right)-d_{ij}^i \Gamma_{ii}+2\Gamma_{ij}-d_{ij}^j\Gamma_{jj}.\label{eq:constI}
\end{align}\normalsize
Observe that $d_{ij}^if(z_i,z_j,x_i,\frac{x_j}{d_{ij}^i})\geq 0$ for any $d_{ij}^i>0$. Therefore, if $\frac{x_j^2}{z_j}>\Gamma_{jj}$, constraint \eqref{eq:constJ} is not satisfied since the right hand side can be made arbitrarily large by letting $d_{ij}^j\to \infty$ and $d_{ij}^i=1/d_{ij}^j$. Therefore, constraint \eqref{eq:constJ} implies that $\Gamma_{jj}z_j\geq x_j^2$. Similarly,  \eqref{eq:constI} implies that $\Gamma_{ii}z_i\geq x_i^2$.

Now assume that $\Gamma_{jj}z_j\geq x_j^2$ hold for all $j=1, \ldots, n$. In this case, for any optimal solution of the maximization problem \eqref{eq:constJ} we find that $d_{ij}^j$ is as small as possible; that is, $d_{ij}^j=1/d_{ij}^i$. Thus, if $\Gamma_{jj}z_j\geq x_j^2$ holds, then constraint \eqref{eq:constJ} reduces to
\begin{align}\label{eq:constJ2}0&\geq \max_{d_{ij}^i>0}d_{ij}^if(z_i,z_j,x_i,\frac{x_j}{d_{ij}^i})-d_{ij}^i \Gamma_{ii}+2\Gamma_{ij}-\frac{1}{d_{ij}^i}\Gamma_{jj},
\end{align}
which is precisely constraint \eqref{eq:relaxF_cuts}. Moreover, if $\Gamma_{ii}z_i\geq x_i^2$ holds, then constraint \eqref{eq:constI} reduces to
\begin{align}\label{eq:constI2}0&\geq \max_{d_{ij}^j>0}d_{ij}^jf(z_i,z_j,\frac{x_i}{d_{ij}^j},x_j)-\frac{1}{d_{ij}^j} \Gamma_{ii}+2\Gamma_{ij}-d_{ij}^j\Gamma_{jj}. 
\end{align}
After a change of variable $d_{ij}^i=1/d_{ij}^j$ and noting that $(1/d_{ij}^i)f(z_i,z_j,d_{ij}^ix_i,x_j)=d_{ij}^if(z_i,z_j,x_i,x_j/d_{ij}^i)$, we conclude that \eqref{eq:constI2} is equivalent to \eqref{eq:constJ2}.
\end{proof}

\begin{remark}
	Note that constraints \eqref{eq:relaxF_cuts}--\eqref{eq:relaxG_pair} are necessary only if $Q_{ij}\neq 0$. For the signal estimation problem \eqref{eq:denoising}, $Q_{ij}= 0$ for $\rev{\{i,j\}}\not\in A$. Thus, the methods developed here are particularly efficient when $Q$ is sparse. 
\end{remark}

	\subsection{Implementation via conic quadratic optimization}\label{sec:conic_quadratic} The objectives \eqref{eq:relaxF_obj} and \eqref{eq:relaxG_obj} are linear, and constraints \eqref{eq:relaxF_persp} are rotated cone constraints, and thus can be handled directly by conic quadratic optimization solvers. In Section~\ref{sec:socp_f}, we show how constraints \eqref{eq:relaxF_cuts} and \eqref{eq:relaxG_pair} can be reformulated as a conic constraints \emph{for a fixed value of $d_{ij}$}. Then we describe, in Section~\ref{sec:cutting}, a cutting plane method for implementing \eqref{eq:relaxF}.
	
	\subsubsection{Conic quadratic reformulation of functions $f$ and $g$}\label{sec:socp_f}
		
	We now show how to formulate convex models involving functions $f$ and $g$ as conic quadratic optimization problems. Specifically, we show how to model the epigraph of functions $f$ and $g$ in Propositions \ref{prop:cqExtendedXP} and \ref{prop:cqExtendedZP}, respectively.}
	\begin{proposition}[Extended formulation of $\conv(X^2)$]
		\label{prop:cqExtendedXP}
		A point $(z,x,s) \\ \in \conv(X^2)$ if and only if $(z,x,s)\in [0,1]^2\times \R_+^3$ and there exists $v,w\in \R$ such that the set of inequalities
		\begin{equation}
		\label{eq:systemXplus}
		v\geq x_1-x_2,\;v^2\leq sz_1,\;w\geq x_2-x_1,\; w^2\leq sz_2
		\end{equation}
		are satisfied.
	\end{proposition}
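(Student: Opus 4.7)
The plan is to invoke Proposition~\ref{prop:MHullSimple}, which gives the explicit description $\conv(X^2)=\{(z,x,s)\in[0,1]^2\times\R_+^3 : f(z,x)\leq s\}$, and show that the system \eqref{eq:systemXplus} is an extended (conic quadratic) formulation of the single scalar constraint $f(z,x)\leq s$. Since $f$ is defined piecewise according to the sign of $x_1-x_2$, the natural guess is that each of the two rotated cone inequalities, paired with its linear bound, encodes one piece: $v$ will play the role of $(x_1-x_2)^+$ (activating $v^2\leq sz_1$ when $x_1\geq x_2$) and $w$ will play the role of $(x_2-x_1)^+$ (activating $w^2\leq sz_2$ when $x_2\geq x_1$), with the inactive auxiliary variable set to zero.

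For the forward direction, I would take $(z,x,s)\in\conv(X^2)$ and, by the symmetric roles of the two coordinates, assume without loss of generality $x_1\geq x_2$, so $f(z,x)=(x_1-x_2)^2/z_1\leq s$. Setting $v=x_1-x_2\geq 0$ and $w=0$, the linear bounds $v\geq x_1-x_2$ and $w\geq x_2-x_1$ hold trivially. The rotated cone bound $v^2=(x_1-x_2)^2\leq sz_1$ follows by clearing the denominator when $z_1>0$, and in the degenerate case $z_1=0$ the division convention forces $(x_1-x_2)^2=0$, so both sides vanish; $w^2=0\leq sz_2$ is immediate.

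For the converse, given $(v,w)$ satisfying \eqref{eq:systemXplus}, assume again $x_1\geq x_2$. Then $v\geq x_1-x_2\geq 0$ yields $v^2\geq(x_1-x_2)^2$, which combined with $v^2\leq sz_1$ gives $(x_1-x_2)^2\leq sz_1$. If $z_1>0$, dividing produces $f(z,x)=(x_1-x_2)^2/z_1\leq s$; if $z_1=0$, the inequality forces $x_1=x_2$, so $f(z,x)=0\leq s$ via the convention $0/0=0$. Proposition~\ref{prop:MHullSimple} then places $(z,x,s)$ in $\conv(X^2)$, completing the equivalence. The only subtlety is the degenerate case $z_i=0$: the main obstacle is verifying that the extended system does not falsely certify points where the ``wrong'' piece of $f$ would be $+\infty$, but this is resolved by observing that $v^2\leq sz_1=0$ automatically forces $v=0$, which via $v\geq x_1-x_2$ enforces $x_1\leq x_2$ and rules out precisely the pathological configurations.
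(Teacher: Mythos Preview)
Your proof is correct and follows essentially the same approach as the paper's: assume without loss of generality $x_1\geq x_2$, set $v=x_1-x_2$ and $w=0$ for the forward direction, and for the converse combine $v\geq x_1-x_2\geq 0$ with $v^2\leq sz_1$ to recover $(x_1-x_2)^2/z_1\leq s$ and invoke Proposition~\ref{prop:MHullSimple}. The only difference is that the paper leaves the degenerate case $z_i=0$ implicit via the division convention stated in the Notation section, whereas you spell it out explicitly.
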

	\begin{proof}
		Suppose, without loss of generality, that $x_1\geq x_2$ and that $(z,x)$ satisfies the bound constraints. If $(z,x,s)\in \conv(X^2)$ then $\frac{(x_1-x_2)^2}{z_1}\leq s$; setting $v=x_1-x_2$ and $w=0$, we find a feasible solution for \eqref{eq:systemXplus}. Conversely, if \eqref{eq:systemXplus} is feasible, then $\frac{(x_1-x_2)^2}{z_1}\leq \frac{v^2}{z_1}\leq s$ and $(z,x,s)\in \conv(X^2)$. 
	\end{proof}
	
	\ignore{
		From Proposition~\ref{prop:cqExtendedXP}, 
		we see that the \texttt{pairwise} relaxation \eqref{eq:M-natOriginal} can be reformulated as
		\begin{align*}
		\sum_{i=1}^ny_i^2+\min\;&\sum_{i=1}^n\left(-2y_ix_i+s_i\right)+\lambda\sum_{\rev{\{i,j\}}\in A}s_{ij}\\
		\text{ s.t.}\;&\sum_{i=1}^nz_i\leq k\\
		(\texttt{M-nat}) \ \ \ \ \ \ \ \ \ \ &x_i^2\leq s_iz_i&i=1\ldots,n\\  
		&x_i\leq \|y\|_\infty z_i&i=1\ldots,n\\
		&v_{ij}\geq x_i-x_j,\; v_{ij}^2\leq s_{ij}z_i&\forall\rev{\{i,j\}}\in A \\
		&w_{ij}\geq x_j-x_i,\; w_{ij}^2\leq s_{ij}z_j&\forall\rev{\{i,j\}}\in A\\
		&z\in [0,1]^n,\; x\in \R_+^n,\; s\in \R_+^{n+|A|},\; v,w\in \R_+^{|A|}.
		\end{align*}
	}
	
	\begin{proposition}[Extended formulation of $\conv(Z^2)$]
		\label{prop:cqExtendedZP}
		A point $(z,x,s) \\ \in \conv(Z^2)$ if and only if $(z,x,s) \in [0,1]^2\times \R_+^3$ and there exists $s_1,s_2,q_1,q_2\in \R_+$ and $v_1,v_2,w_1,w_2\in \R_+$ such that the set of inequalities
		\begin{align*}
		x_1^2\leq s_1z_1,\; x_2^2\leq s_2z_2\tag{\persp}\\
		d_1v_1\geq d_1x_1-x_2,\; v_1^2\leq q_1z_1\tag{$z_1\geq z_2$ and $d_1x_1\geq x_2$}\\
		d_1v_2\geq -d_1x_1+x_2,\; v_2^2\leq q_1z_2\tag{$z_1\geq z_2$ and $d_1x_1\leq x_2$}\\
		d_1q_1+s_2\left(d_2-\frac{1}{d_1}\right)\leq s\tag{$z_1\geq z_2$}\\
		d_2w_1\geq x_1-d_2x_2,\; w_1^2\leq q_2z_1\tag{$z_1\leq z_2$ and $x_1\geq d_2x_2$}\\
		d_2w_2\geq -x_1+d_2x_2,\; w_2^2\leq q_2z_2\tag{$z_1\leq z_2$ and $x_1\leq d_2x_2$}\\
		d_2q_2+s_1\left(d_1-\frac{1}{d_2}\right)\leq s \tag{$z_1\leq z_2$}
		\end{align*}
		are satisfied.
	\end{proposition}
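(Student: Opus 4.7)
The plan is to derive the equivalence from Theorem~\ref{theo:MHullGeneral} combined with Proposition~\ref{prop:cqExtendedXP}. Theorem~\ref{theo:MHullGeneral} reduces membership in $\conv(Z^2)$ to the single inequality $g(z_1,z_2,x_1,x_2;d_1,d_2)\le s$, where $g=\max(A,B)$ with $A:=d_1f(z_1,z_2,x_1,x_2/d_1)+(x_2^2/z_2)(d_2-1/d_1)$ and $B:=d_2f(z_1,z_2,x_1/d_2,x_2)+(x_1^2/z_1)(d_1-1/d_2)$ from \eqref{eq:negativeDecomp2}--\eqref{eq:negativeDecomp1}. So it suffices to show that, within $[0,1]^2\times\R_+^3$, the listed conic quadratic system is feasible if and only if $A\le s$ and $B\le s$ simultaneously.

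For the sufficiency of the system, I will interpret each block in terms of the original variables. The \persp cone constraints $x_i^2\le s_iz_i$ yield $s_i\ge x_i^2/z_i$. The block on $(v_1,v_2,q_1)$ is the extended formulation of $\conv(X^2)$ from Proposition~\ref{prop:cqExtendedXP} applied to the pair $(x_1,x_2/d_1)$, after dividing the two linear inequalities by $d_1$; it therefore certifies $q_1\ge f(z_1,z_2,x_1,x_2/d_1)$. Symmetrically, the $(w_1,w_2,q_2)$ block gives $q_2\ge f(z_1,z_2,x_1/d_2,x_2)$. Because $d_1d_2\ge 1$, both $d_2-1/d_1$ and $d_1-1/d_2$ are nonnegative, so aggregating these bounds with the two remaining linear inequalities yields $A\le d_1q_1+s_2(d_2-1/d_1)\le s$ and $B\le d_2q_2+s_1(d_1-1/d_2)\le s$, hence $g\le s$.

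For necessity, I will exhibit an explicit feasible assignment. Given $g(z,x;d)\le s$, set $s_i:=x_i^2/z_i$, $q_1:=f(z_1,z_2,x_1,x_2/d_1)$, $q_2:=f(z_1,z_2,x_1/d_2,x_2)$, $v_1:=(x_1-x_2/d_1)^+$, $v_2:=(x_2/d_1-x_1)^+$, $w_1:=(x_1/d_2-x_2)^+$, $w_2:=(x_2-x_1/d_2)^+$. A short case analysis based on the definition of $f$ shows each $v_i^2/z_i$ (and similarly $w_i^2/z_i$) equals either $0$ or the corresponding active piece of $f$, so the sign, \persp and rotated cone inequalities all hold by construction, and the two remaining aggregated inequalities become exactly $A\le s$ and $B\le s$, which follow from $\max(A,B)=g\le s$.

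The main subtleties, which are bookkeeping rather than genuine conceptual obstacles, are the degenerate cases. When some $z_i=0$, the convention $0/0=0$ together with the rotated cone constraint forces the corresponding continuous numerator to vanish. When $d_1d_2=1$, both $d_2-1/d_1$ and $d_1-1/d_2$ vanish, so the $s_i$ terms drop out of the aggregated inequalities and the system collapses to the single-decomposition formulation implicit in Proposition~\ref{prop:MHullSimple}. Each is handled by direct verification.
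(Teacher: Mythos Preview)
Your proposal is correct and follows essentially the same approach as the paper: the paper's proof is a single line stating that the result ``follows from using the system \eqref{eq:systemXplus} with inequalities \eqref{eq:negativeDecomp2}--\eqref{eq:negativeDecomp1},'' and your argument spells this out in detail---applying the extended formulation of $\conv(X^2)$ from Proposition~\ref{prop:cqExtendedXP} to each of the two $f$-terms in \eqref{eq:negativeDecomp2}--\eqref{eq:negativeDecomp1}, then linearizing the perspective terms via $s_1,s_2$. The explicit necessity assignment and the discussion of degenerate cases are welcome additions that the paper omits.
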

	\begin{proof}
		Follows from using the system \eqref{eq:systemXplus} with inequalities \eqref{eq:negativeDecomp2}--\eqref{eq:negativeDecomp1}.
	\end{proof}

\rev{	
	\subsubsection{Improved cutting surface method}\label{sec:cutting}
	
	\normalsize
	
Our implementation of the strong relaxation \decomp is based on formulation \eqref{eq:relaxF}, implemented in a cutting surface method. Consider the relaxation of \eqref{eq:relaxF} given by 
\begin{subequations}\label{eq:relaxFfinite}
	\small 
	\begin{align}
	\min_{(z,x,\Gamma)}\;&\|y\|_2^2-2y'x+\langle \Gamma, Q\rangle\\
	\text{s.t.}\;
	& \Gamma_{ii}z_i\geq x_i^2&\hspace{-6cm}\forall i=1,\ldots,n\hfill \\
	& 0\geq d\cdot f(z_i,z_j,x_i,\frac{x_j}{d})-\left(d \Gamma_{ii}-2\Gamma_{ij}+\frac{1}{d}\Gamma_{jj}\right)&\forall i<j, \; \forall d\in \Delta_{ij} \hfill\label{eq:relaxFfinite_cuts}\\
	& 0\leq x_i\leq \|y\|_\infty z_i&\hspace{-3cm}i=1,\ldots,n\hfill\\
	&z\in \bar C,\; x\in \R_+^n,\; \Gamma\in \R^{n\times n},
	\end{align}
	\normalsize\end{subequations}
where each $\Delta_{ij}$ is a finite subset of $\R$. From Proposition~\ref{prop:cqExtendedXP}, each constraint \eqref{eq:relaxFfinite_cuts} can be formulated by introducing new variables $s,v,w\geq 0$ as the system
\begin{subequations}\label{eq:cut}
\begin{align} 
&0\geq d s-\left(d \Gamma_{ii}-2\Gamma_{ij}+\frac{1}{d}\Gamma_{jj}\right),\\
&v\geq x_1-\frac{x_2}{d},\;v^2\leq sz_1,\; w\geq \frac{x_2}{d}-x_1,\; w^2\leq sz_2.
\end{align} 
\end{subequations}
Therefore, relaxation \eqref{eq:relaxFfinite} can be solved using a conic quadratic solver. 

In the proposed cutting surface method, formulation \eqref{eq:relaxFfinite} is iteratively refined by adding additional elements to sets $\Delta_{ij}$, as outlined in Algorithm~\ref{alg:Mmatrix}. First, all sets $\Delta_{ij}$ are initialized to the singleton $\{1\}$ (line~\ref{line:init}). At each iteration of the algorithm, a relaxation of the form \eqref{eq:relaxFfinite} is solved to optimality (line~\ref{line:solve}). Then, for each pair of indexes $i<j$ where the relaxation induced by \eqref{eq:relaxFfinite_cuts} is weak, the set $\Delta_{ij}$ is enlarged to improve the relaxation (line~\ref{line:refine});  Remark~\ref{rem:explicit} and Proposition~\ref{prop:optimalD} below show to efficiently check whether the relaxation needs to be refined and how to do so, respectively.

\begin{algorithm}[h] \small
	\caption{Algorithm to solve formulation \decomp}
	\label{alg:Mmatrix}
	\begin{algorithmic}[1]
		\renewcommand{\algorithmicrequire}{\textbf{Input:}}
		\renewcommand{\algorithmicensure}{\textbf{Output:}}
		\Ensure $(\hat x,\hat z,\hat\Gamma)$ optimal for \decomp
		
		\State $\Delta_{ij}\leftarrow\{1\}$ for all $i<j$ \label{line:init} 
		\While{Stopping criterion not met}\label{line:stopping}
		\State $(\hat x,\hat z,\hat\Gamma)\leftarrow $ Solve \eqref{eq:relaxFfinite}\label{line:solve}
		\For{all $i<j$}
		\If{Constraint \eqref{eq:relaxF_cuts} is not satisfied} 
		\State Compute optimal $d_{ij}^*$ for maximization \eqref{eq:relaxF_cuts}\Comment{See Proposition~\ref{prop:optimalD}}
		\State $\Delta_{ij}\leftarrow \Delta_{ij}\cup \{d_{ij}^*\}$\label{line:refine}
		\EndIf
		
		\EndFor	
		\EndWhile \label{line:endWhile1}
		\State \Return $(\hat x,\hat z,\hat\Gamma)$
	\end{algorithmic}
\end{algorithm}

\begin{proposition}\label{prop:optimalD}
For any $i<j$, the optimal solution of the inner maximization problem \eqref{eq:relaxF_cuts} is obtained as follows:
\begin{enumerate}
	\item If $x_i^2/\Gamma_{ii}\geq x_j^2/\Gamma_{jj}$ then:
	\begin{enumerate}
		\item If $\Gamma_{ii}-x_i^2/z_i=0$, then $d_{ij}\to\infty$ is optimal.
		\item Otherwise, $d_{ij}=\sqrt{\frac{\Gamma_{jj}-\frac{x_j^2}{z_i}}{\Gamma_{ii}-\frac{x_i^2}{z_i}}}$ is optimal.
	\end{enumerate}
\item If $x_i^2/\Gamma_{ii}\leq x_j^2/\Gamma_{jj}$ then:
\begin{enumerate}
	\item If $\Gamma_{ii}-x_i^2/z_j=0$, then $d_{ij}\to\infty$ is optimal.
	\item Otherwise, $d_{ij}=\sqrt{\frac{\Gamma_{jj}-\frac{x_j^2}{z_j}}{\Gamma_{ii}-\frac{x_i^2}{z_j}}}$ is optimal.
\end{enumerate}
\end{enumerate}

\end{proposition}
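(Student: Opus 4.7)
I would parametrize the inner maximization in \eqref{eq:relaxF_cuts} by $d := d_{ij}$ and denote the objective by $\phi(d)$. Using the piecewise definition of $f$ from Section~\ref{sec:M2General}, $\phi$ splits at the breakpoint $d = x_j/x_i$ into two univariate pieces of the form $Ad + B/d + C$: on \emph{Region 1} ($d \geq x_j/x_i$) with $A_1 = x_i^2/z_i - \Gamma_{ii}$, $B_1 = x_j^2/z_i - \Gamma_{jj}$; on \emph{Region 2} ($d \leq x_j/x_i$) with $A_2 = x_i^2/z_j - \Gamma_{ii}$, $B_2 = x_j^2/z_j - \Gamma_{jj}$. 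The \persp constraints \eqref{eq:relaxF_persp} force $A_1 \leq 0$ and $B_2 \leq 0$, and a direct calculation shows that the two pieces agree at the breakpoint, so $\phi$ is continuous on $(0,\infty)$.

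The plan has three steps. First, I would maximize each piece in closed form: when $A<0$ and $B\leq 0$, the function $Ad + B/d + C$ is strictly concave on $(0,\infty)$ with unique stationary point $\sqrt{B/A}$, producing the candidates $d_1^\star = \sqrt{B_1/A_1}$ and $d_2^\star = \sqrt{B_2/A_2}$ that match the formulas of 1(b) and 2(b). Second, I would determine which region contains the global maximizer: a short manipulation (squaring and canceling the common $x_i^2x_j^2/z_i$ terms) shows that $d_1^\star \geq x_j/x_i$ if and only if $\Gamma_{jj} x_i^2 \geq \Gamma_{ii} x_j^2$, which is precisely the case~1 dichotomy. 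Concavity of each piece together with continuity at the breakpoint then implies that whenever the stationary point of a region falls outside that region it cannot be globally optimal, so one of $d_1^\star$, $d_2^\star$ is the global maximizer as dictated by the case condition.

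Finally, for the degeneracies 1(a) and 2(a), the leading coefficient $A_1$ or $A_2$ vanishes. In 1(a), $\Gamma_{ii} = x_i^2/z_i$ collapses Region 1 to $\phi|_{\mathrm{Reg.\,1}}(d) = B_1/d + C_1$; combining this equality with the case~1 condition $x_i^2/\Gamma_{ii} \geq x_j^2/\Gamma_{jj}$ gives $\Gamma_{jj} z_i \geq x_j^2$, hence $B_1 \leq 0$, so $\phi$ is nondecreasing on Region~1 and the supremum is attained as $d \to \infty$. Case 2(a) is symmetric: one combines the case~2 condition with the \persp inequality for index $j$ to pin down the sign of the surviving coefficient and conclude the same $d \to \infty$ behavior. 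The main obstacle is precisely this third step: the degenerate cases force one to chain together the case condition and the \persp constraint for the right index to control the sign of the surviving coefficient, and to argue carefully that the unbounded limiting behavior in a single region corresponds to a genuine global supremum rather than being eclipsed by an interior maximum in the complementary region.
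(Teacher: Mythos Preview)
Your approach is correct and essentially identical to the paper's: both reduce the objective on the branch $d_{ij}x_i\geq x_j$ to the form $A_1 d + B_1/d + \text{const}$ with $A_1=x_i^2/z_i-\Gamma_{ii}$ and $B_1=x_j^2/z_i-\Gamma_{jj}$, locate the stationary point $\sqrt{B_1/A_1}$, and verify via the same algebraic cancellation that this stationary point satisfies $d_{ij}x_i\geq x_j$ exactly when $x_i^2/\Gamma_{ii}\geq x_j^2/\Gamma_{jj}$. The paper treats only case~1 in detail and declares case~2 ``analogous,'' so your explicit two-region setup with the breakpoint continuity check is in fact slightly more thorough than the paper's own write-up.
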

\begin{proof}
Suppose $x_i^2/\Gamma_{ii}\geq x_j^2/\Gamma_{jj}$. Note that $\Gamma_{ii}\geq x_i^2/z_i$ holds from constraints \eqref{eq:relaxF_persp}. Moreover, we find that 
\begin{align*}
\Gamma_{jj}-\frac{x_j^2}{z_i}\geq \Gamma_{ii}\frac{x_j^2}{x_i^2}-\frac{x_j^2}{z_i}=x_j^2\left(\frac{\Gamma_{ii}}{x_i^2}-\frac{1}{z_i}\right)\geq 0.
\end{align*}
We now show that there exists a stationary point of \eqref{eq:relaxF_cuts} satisfying $d_{ij}x_i\geq x_j$. In this case, optimization problem \eqref{eq:relaxF_cuts} reduces to
\small\begin{align}
&0\geq \max_{d_{ij}>0}\frac{d_{ij}x_i^2-2x_ix_j+x_j^2/d_{ij}}{z_i}-\left(d_{ij}\Gamma_{ii}-2\Gamma_{ij}+\frac{1}{d_{ij}}\Gamma_{jj}\right)\notag\\
\Leftrightarrow\;&0\geq 2\left(\Gamma_{ij}-\frac{x_ix_j}{z_i}\right)+\max_{d_{ij}>0}\left\{-d_{ij}\left(\Gamma_{ii}-\frac{x_i^2}{z_i}\right)-\frac{1}{d_{ij}}\left(\Gamma_{jj}-\frac{x_j^2}{z_i}\right)\right\}.\label{eq:optimization_d}
\end{align}\normalsize
If $\Gamma_{ii}-x_i^2/z_i=0$, then $d_{ij}^*\to\infty$ is an optimal solution to \eqref{eq:optimization_d}. If $\Gamma_{jj}-x_j^2/z_i=0$, then $d_{ij}^*=0$ is optimal. Moreover, if both $\Gamma_{ii}-x_i^2/z_i>0$ and $\Gamma_{jj}-x_j^2/z_i>0$, then taking derivatives with respect to $d_{ij}$ we find that 
\begin{equation}\label{eq:optDij}d_{ij}^*=\sqrt{\frac{\Gamma_{jj}-\frac{x_j^2}{z_i}}{\Gamma_{ii}-\frac{x_i^2}{z_i}}} \cdot \end{equation}
Finally, we verify that the condition $d_{ij}^*x_i\geq x_j$ holds. Indeed, this condition reduces to 
\begin{align*}
\left(\frac{\Gamma_{jj}-\frac{x_j^2}{z_i}}{\Gamma_{ii}-\frac{x_i^2}{z_i}}\right)x_i^2\geq x_j^2\Leftrightarrow \left(\Gamma_{jj}-\frac{x_j^2}{z_i}\right)x_i^2\geq \left(\Gamma_{ii}-\frac{x_i^2}{z_i}\right)x_j^2\Leftrightarrow \frac{x_i^2}{\Gamma_{ii}}\geq \frac{x_j^2}{\Gamma_{jj}},
\end{align*}
which is satisfied. The proof for the case $x_i^2/\Gamma_{ii}\leq x_j^2/\Gamma_{jj}$ is analogous.
\end{proof}

\normalsize

\begin{remark}\label{rem:explicit}
	By replacing $d_{ij}$ with its optimal value in \eqref{eq:relaxF_cuts}, we find that this constraint can be written explicitly as the piecewise constraint
	\begin{equation}\label{eq:explicit}
	0\geq \begin{cases}
	 \Gamma_{ij}-\frac{x_ix_j}{z_i}-\sqrt{\left(\Gamma_{ii}-\frac{x_i^2}{z_i}\right)\left(\Gamma_{jj}-\frac{x_j^2}{z_i}\right)}	& \text{if }\frac{x_i^2}{\Gamma_{ii}}\geq \frac{x_j^2}{\Gamma_{jj}}\\
	 \Gamma_{ij}-\frac{x_ix_j}{z_j}-\sqrt{\left(\Gamma_{ii}-\frac{x_i^2}{z_j}\right)\left(\Gamma_{jj}-\frac{x_j^2}{z_j}\right)}	& \text{if }\frac{x_i^2}{\Gamma_{ii}}\leq \frac{x_j^2}{\Gamma_{jj}}.
	\end{cases}
	\end{equation}
However, constraint \eqref{eq:explicit} is not conic quadratic. 
\end{remark}

\begin{remark}
	In our computations, we use the following stopping criterion in line~\ref{line:solve}. Let $\zeta_{\text{old}}$ and $\zeta_{\text{new}}$ be the optimal objective value of the relaxation (line~\ref{line:solve}). 
	The	algorithm is terminated when the relative improvement of the relaxation  $\left(\zeta_{\text{new}}-\zeta_{\text{old}}\right)/\zeta_{\text{new}} \leq 5\times 10^{-5}$.
\end{remark}

\normalsize

	\begin{remark}
		Using Proposition~\ref{prop:cqExtendedZP}, one can extend the ideas discussed in this section to tackle \eqref{eq:relaxG} in a conic quadratic optimization framework as well. However, we prefer formulation \eqref{eq:relaxF} since the conic quadratic representation of function $f$ is simpler and more compact.
	\end{remark}

\subsection{Lagrangian methods for estimation with regularized objective}\label{sec:decomposition} 
The cutting surface method introduced in Section~\ref{sec:conic_quadratic} requires solving a sequence of progressively larger conic quadratic optimization problems. 
 Based on our computations, this method can handle a variety of constraints (encoded by set $\bar C$), and solve 
 the instances with $n\leq 10,000$ within seconds. 
 For better scalability,
  in this section, we develop a Lagrangian relaxation-based method for the estimation problem with regularization objective:
\begin{subequations}\label{eq:1D} \small
\begin{align}
\min_{x, z}\;&\|y-x\|_2^2+\lambda\sum_{i=1}^{n-1}(x_{i+1}-x_i)^2+\mu\sum_{i=1}^n z_i\\
\text{s.t.}\;& 0\leq x_i\leq \|y\|_\infty z_i&i=1,\ldots,n\\
& x\in \R_+^n,\; z\in \{0,1\}^n
\end{align}
\end{subequations} \normalsize
where $\mu\geq 0$ is a regularization parameter controlling the sparsity of the target signal. Let $L=\left\{\ell_1,\ldots,\ell_m,\ell_{m+1}\right\}\subseteq \{1,\ldots,n\}$ be any subset of the indexes such that $1=\ell_1<\ldots< \ell_m<\ell_{m+1}=n+1$. With the introduction of additional variables $w_j=x_{\ell_j}-x_{\ell_j-1}$, problem \eqref{eq:1D} can be equivalently written as
\small\begin{subequations}\label{eq:1Dw}
	\begin{align}
	\min_{x, z}\;&\sum_{j=1}^m\left(\sum_{i=\ell_j}^{\ell_{j+1}-1}(y_i-x_i)^2+\lambda\sum_{i=\ell_j}^{\ell_{j+1}-2}(x_{i+1}-x_i)^2
	+\mu\sum_{i=\ell_j}^{\ell_{j+1}-1}z_i\right)+\sum_{j=2}^m w_j^2\\
	\text{s.t.}\;& w_j=x_{\ell_j}-x_{\ell_j-1}&\hspace{-3cm} j=2,\ldots,m\quad\quad \label{eq:1Dw_lagrang}\\
	& 0\leq x_i\leq \|y\|_\infty z_i&\hspace{-3cm}i=1,\ldots,n\quad\quad \\
	& x\in \R_+^n,\; z\in \{0,1\}^n,\; w\in \R^{m-1}.
	\end{align}
\end{subequations}\normalsize

Without the coupling constraints \eqref{eq:1Dw_lagrang}, problem \eqref{eq:1Dw} decomposes into $m$ independent problems, each with variables indexed in $[\ell_j, \ell_{j+1}-1]$ for $j=1,\ldots,m$. Letting $\gamma_j$ be the Lagrange multiplier for constraint $w_j=x_{\ell_j}-x_{\ell_j-1}$, we obtain the Lagrangian dual problem
\begin{subequations}\label{eq:1DlagrangePre} \small
	\begin{align}
	\max_{\gamma\in \R^{m-1}}\min_{x, z,w}\;&\sum_{j=1}^m\bigg (\sum_{i=\ell_j}^{\ell_{j+1}-1}(y_i-x_i)^2+\lambda \! \! \sum_{i=\ell_j}^{\ell_{j+1}-2}(x_{i+1}-x_i)^2+
	\mu\! \! \sum_{i=\ell_j}^{\ell_{j+1}-1}z_i\bigg)+ \!\sum_{j=2}^m w_j^2\notag\\
	&+\gamma_j(w_j-x_{\ell_j}+x_{\ell_j-1})\\
	\text{s.t.}\;
	& 0\leq x_i\leq \|y\|_\infty z_i&\hspace{-10cm}i=1,\ldots,n\quad \quad\quad \\
	& x\in \R_+^n,\; z\in \{0,1\}^n,\; w\in \R^{m-1}.
	\end{align}
\end{subequations}
\normalsize
Observe that $w_j=-\gamma_j/2$ holds for an optimal solution of the inner minimization problem. Moreover, to obtain a strong convex relaxation, we can reformulate each independent inner minimization problem using the formulations discussed in Section~\ref{sec:Mseparation}, yielding the convex relaxation
\begin{subequations}\label{eq:1Dlagrange}
	\begin{align}
	\max_{\gamma\in \R^{m-1}}\min_{x, z}\;&\sum_{j=1}^m\left(\sum_{i=\ell_j}^{\ell_{j+1}-1}(y_i^2-2y_ix_i)+\theta_j(z,x)+\mu\sum_{i=\ell_j}^{\ell_{j+1}-1}z_i\right)-\sum_{j=2}^m\frac{\gamma_j^2}{4}\notag\\
	&+\gamma_j(x_{\ell_j-1}-x_{\ell_j})\label{eq:1Dlagrange_obj}\\
	\text{s.t.}\;
	& 0\leq x_i\leq \|y\|_\infty z_i&\hspace{-10cm}i=1,\ldots,n\quad \quad\quad \\
	& x\in \R_+^n,\; z\in [0,1]^n,
	\end{align}
\end{subequations}
where $\theta_j(z,x)$ is the convexification of the epigraph of the term $$\sum_{i=\ell_j}^{\ell_{j+1}-1}x_i^2+\lambda\sum_{i=\ell_j}^{\ell_{j+1}-2}(x_{i+1}-x_i)^2.$$

\paragraph{\textbf{Implementation}} Problem \eqref{eq:1Dlagrange} can be solved via a primal-dual method: for any fixed $\gamma$, the inner minimization problem can be solved by solving $m$ independent sub-problems (in parallel), and each sub-problem is solved using Algorithm~\ref{alg:Mmatrix}. We now describe our implementation of the ``main" outer maximization problem.
\begin{description}
	\item [Set $L$] Given a target number of subproblems $m\in \Z_+$, we let $\ell_j=1+(j-1)\lfloor n/m\rfloor$ for $j=1,\ldots,m$. 
	\item [Subgradient method] Given $\gamma\in \R^{m-1}$, a subgradient of the objective \eqref{eq:1Dlagrange_obj} at $\gamma$ is given by 
	$\xi(\gamma)_j=-\frac{\gamma_j}{2}+(x_{\ell_j-1}^*-x_{\ell_j}^*)$, where $x^*$ is an optimal solution of the inner minimization problem at $\gamma$. Thus, letting $\gamma^h$ be the value of $\gamma$ at iteration $h\in Z_+$, we use the update rule $\gamma^{h+1}=\gamma_h+(1/h)\xi(\gamma^h)$. 
	\item[Initial point] We start the algorithm with the initial point $\gamma^0=0$. Note that if $x_{\ell_j-1}=x_{\ell_j}=0$ (which is the case for large $\mu$), then $\gamma_j = 0$ is optimal. 
	\item[Stopping criterion] We terminate the algorithm when $\|\xi(\gamma^h)\|_\infty < \epsilon$ ($\epsilon=10^{-3}$ in our computations) or when the number of iterations reaches $h_{max}$ ($h_{max}=100$ in our computations). 
	\item[Additional considerations] In the first iteration, we need to solve $m$ subproblems. However, the subsequent iterations often require solving fewer subproblems: if $\xi(\gamma^h)_j= \xi(\gamma^{h+1})_j$ and $\xi(\gamma^h)_{j+1}= \xi(\gamma^{h+1})_{j+1}$, then at iteration $h+1$ solution of subproblem $j$ does not change from the previous iteration. For problem instances with large $\mu$, the number of subproblems solved in subsequent iterations reduces considerably. 
\end{description}
}

\new{\paragraph{\textbf{On Lagrangian methods for general adjacency graphs}}We point out that the tailored Lagrangian relaxation-based decomposition can be applied to higher-dimensional adjacency graphs. For example, when the adjacency graph is a two-dimensional mesh, one can divide the mesh into rectangular regions and apply a similar decomposition approach. Theoretically, it is possible to use a similar method for arbitrary adjacency graphs. 
However, key to efficiency of the Lagrangian method, described in ``additional considerations" above, is the observation that if the dual variables corresponding to the border of a subregion do not change, then subproblems need not be resolved. With higher-dimensional graphs, where borders are no longer defined by two points but by rectangles or other objects, more subproblems may need to be resolved \emph{unless regions are defined carefully}, depending on the data $y$.
Developing such methods to partition the adjacency graph is beyond the scope of this paper. 
}

\section{Computations}
\label{sec:computations}
In this section we present experiments with utilizing the strong convex relaxations based on the pairwise convexification methods proposed in the paper. \rev{In Section~\ref{sec:accelerometer}, we perform experiments to evaluate whether the convex model \decomp provides a good approximation to the non-convex problem \eqref{eq:optMmatrix}. In Section~\ref{sec:syntData} we test the merits of formulation \decomp  (with a variety of constraints $\bar C$) compared to the usual $\ell_1$-approximation from an inference perspective. Finally, in Section~\ref{sec:comp_decomposition} we test the Lagrangian relaxation-based method proposed in Section~\ref{sec:decomposition}. We use Mosek 8.1.0 (with default settings) to solve the conic quadratic optimization problems. All computations are performed on a laptop with eight Intel(R) Core(TM) i7-8550 CPUs and 16GB RAM. All data \new{and code} used in the computations are available at \texttt{https://sites.google.com/usc.edu/gomez/data}.}

\subsection{Relaxation quality}\label{sec:accelerometer}
\rev{This section is devoted to testing how well the proposed convex relaxations are able to approximate the $\ell_0$ optimization problems using real data. }

\subsubsection{Data}
Consider the accelerometer data depicted in Figure~\ref{fig:noisyAccelo} (A), used in \cite{casale2011human,casale2012personalization} and downloaded from the UCI Machine Learning Repository \cite{Dua:2017}. The time series corresponds to the ``x acceleration" of participant 2 of the ``Activity Recognition \rev{from} Single Chest-Mounted Accelerometer Dataset".  This participant was ``working at computer" until time stamp 44,149; ``standing up, walking and going upstairs" until time stamp 47,349; ``standing" from time stamp 47,350 to 58,544, from 80,720 to 90,439,  and from time 90,441 to 97,199; ``walking" from 58,545 to 80,719; ``going up or down stairs" from 90,440 to 94,349; ``walking and talking with someone" from 97,200 to 104,300; and ``talking while standing" from 104,569 to 138,000 (status between 104,301 and 104,568 is unknown). 
\begin{figure}[h!]
	\begin{adjustbox}{minipage=1\linewidth,scale=1.45}
		
		\subfloat[Original data]{\includegraphics[width=0.33\textwidth,trim={9.5cm 6cm 9.5cm 6cm},clip]{./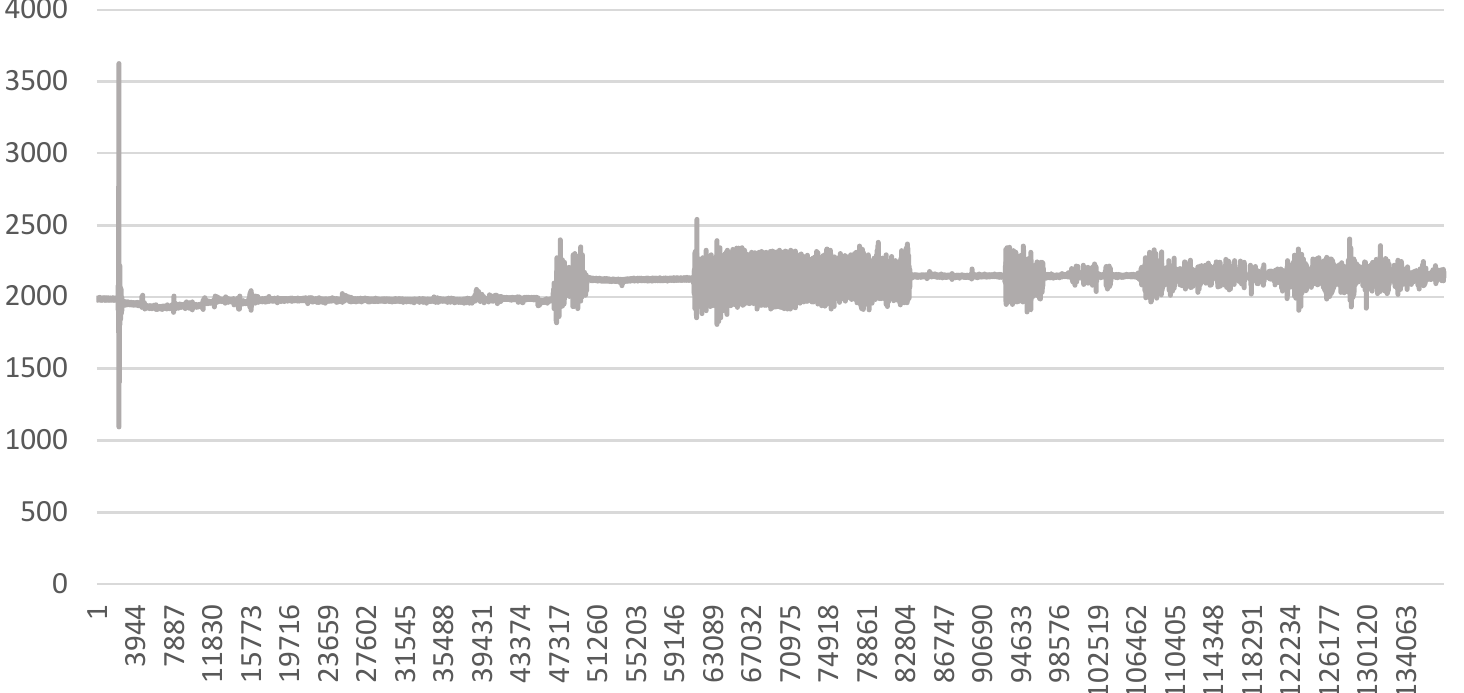}}\quad
		\subfloat[Transformed data]{\includegraphics[width=0.33\textwidth,trim={9.5cm 6cm 9.5cm 6cm},clip]{./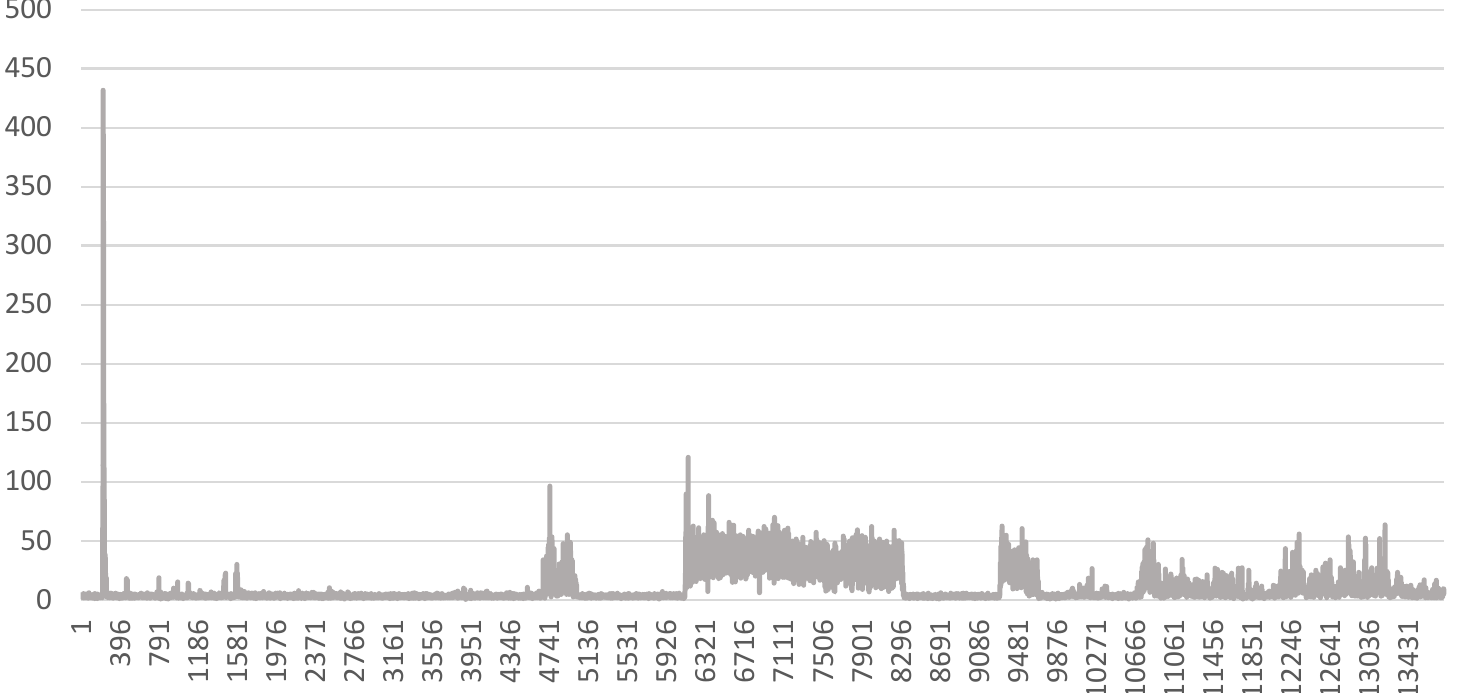}}
	\end{adjustbox}
	\caption{Underlying signals and noisy observations.}
	\label{fig:noisyAccelo}
\end{figure}

Several machine learning methods have been proposed to use accelerometer data to discriminate between activities, e.g., see \cite{bao2004activity} and the references therein. Variations of the acceleration can help to discriminate between activities \cite{casale2011human}. Moreover, as pointed out in \cite{wilson2008prying}, behaviors can be identified (at a simplistic level) from frequencies and amplitudes of wave patterns in a single axis of the accelerometer. Therefore, we consider a rudimentary approach to identify activities from the accelerometer data: we partition the dataset into windows of 10 samples each, and for each window we compute the mean absolute value of the successive differences, obtaining the dataset plotted in Figure~\ref{fig:noisyAccelo} (B)\footnote{One of the key features identified in \cite{casale2011human} for activity recognition are the minmax sums of 52-sample windows, computed as the sums of successive differences of consecutive ``peaks". The time series we obtain follows a similar intuition, but is larger and noisier due to smaller windows.}. \rev{Finally, we scale the data so that $\|y\|_\infty=1$.}

Given an optimal solution $x^*$ of the estimation problem \eqref{eq:denoisingMIO} or a suitable relaxation of it, periods with little or no physical activity can be naturally associated with time stamps $i$ where $x_i^*=0$, and values $x_i^*>0$ can be used as a proxy for the energy expenditure due to physical activity \cite{shepard2008identification}.  

\rev{
\subsubsection{Methods} We compare the following two relaxations of the $\ell_0$-problem 
\begin{subequations}\label{eq:l0Card}
\begin{align}
\min_{x\in \R_+^n}\;&\sum_{i=1}^n(y_i-x_i)^2+\lambda\sum_{i=1}^{n-1}(x_{i+1}-x_i)^2\\
\text{s.t.}\;&\sum_{i=1}^n z_i\leq k\\
& x \leq z, \\ 
& z\in \{0,1\}^n.
\end{align}
\end{subequations}
\begin{description}
	\item[L1] The natural convex relaxation of \eqref{eq:l0Card}, obtained by relaxing the integrality constraints to $z\in [0,1]^n$.
	\item[Decomp] The convex model \decomp  --equivalently, \eqref{eq:relaxF}-- implemented using Algorithm~\ref{alg:Mmatrix}.
\end{description}
}

The convex formulations used are relaxations of the $\ell_0$ problem; so, their optimal objective values $\zeta_{\text{LB}}$ provide lower bounds on the optimal objective value $\zeta$ of \eqref{eq:l0Card}. We use a simple \emph{thresholding} heuristic to construct a feasible solution for \eqref{eq:l0Card}: for a given solution $\hat x$ to a convex relaxation, let $\hat x_{(k)}$ denote the $k$-th largest value, and $\bar{x}$ be the solution given by 
$$\bar{x}_i=\begin{cases}\hat x_i&\text{if }\hat x_i\geq \hat x_{(k)}\\ 0 &\text{otherwise.}\end{cases}$$
By construction $\bar x$ is feasible for \eqref{eq:l0Card}, and its objective value $\zeta_{\text{UB}}$ provides an upper bound on $\zeta$. Thus, the optimality gap of the heuristic is 
\begin{equation}
\label{eq:gap}
\texttt{gap}=100 \times \frac{\zeta_{\text{UB}}-\zeta_{\text{LB}}}{\zeta_{\text{UB}}}\cdot.
\end{equation}

\rev{
	\subsubsection{Results}
We test the convex formulations with the accelerometer data using $\lambda = 0.1t$ and $k=500t$ for $t=1, \ldots,10$
 for all 100 combinations. Figure~\ref{fig:gaps1}(A) presents the optimality gaps obtained by each method for each value of $\lambda$ (averaging over all values of $k$), and Figure~\ref{fig:gaps1}(B) presents the optimality gaps for each value of $k$ (averaging over all values of $\lambda$). We see that \decomp substantially improves upon the natural $\ell_1$ relaxation. Indeed, the gaps from the $\ell_1$ relaxation are 66.7\% on average, and can be very close to 100\%; in contrast, the strong relaxations derived in this paper yields optimality gaps of 0.4\% on average.

\begin{figure}[h!]
	\subfloat[Gap as a function of $\lambda$.]{\includegraphics[width=0.49\textwidth,trim={11cm 6cm 11cm 5.5cm},clip]{./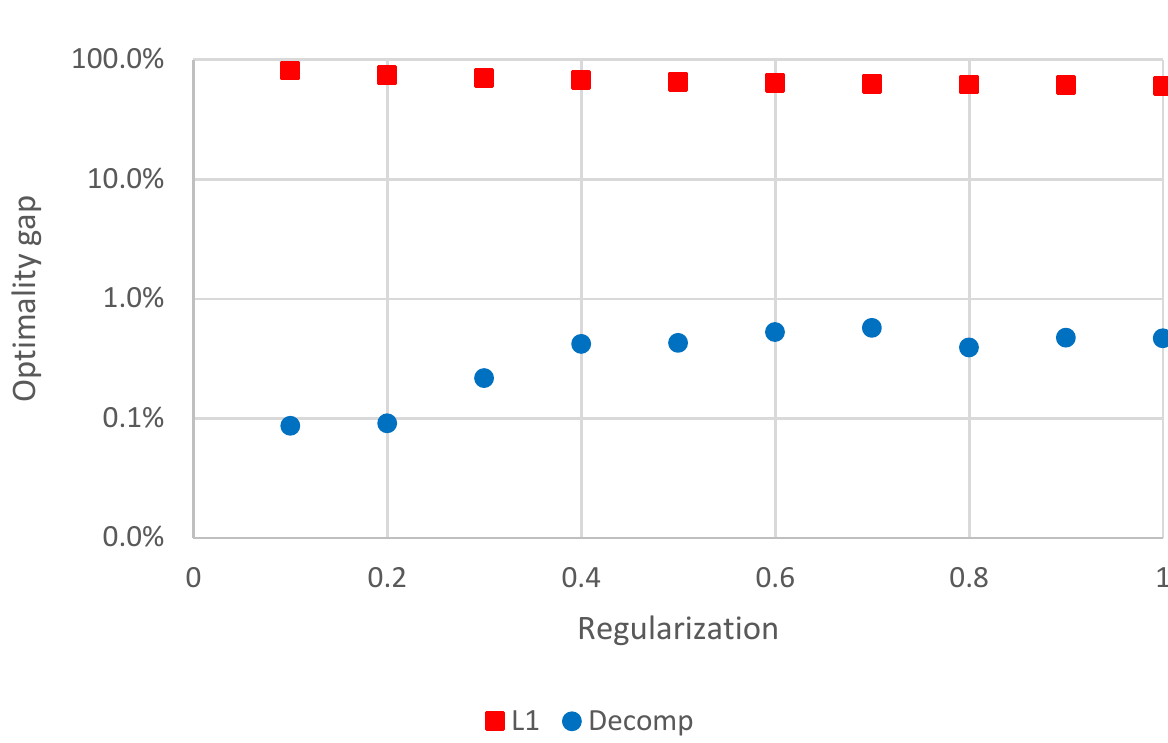}}\hfill		\subfloat[Gap as a function of $k$.]{\includegraphics[width=0.49\textwidth,trim={11cm 6cm 11cm 5.5cm},clip]{./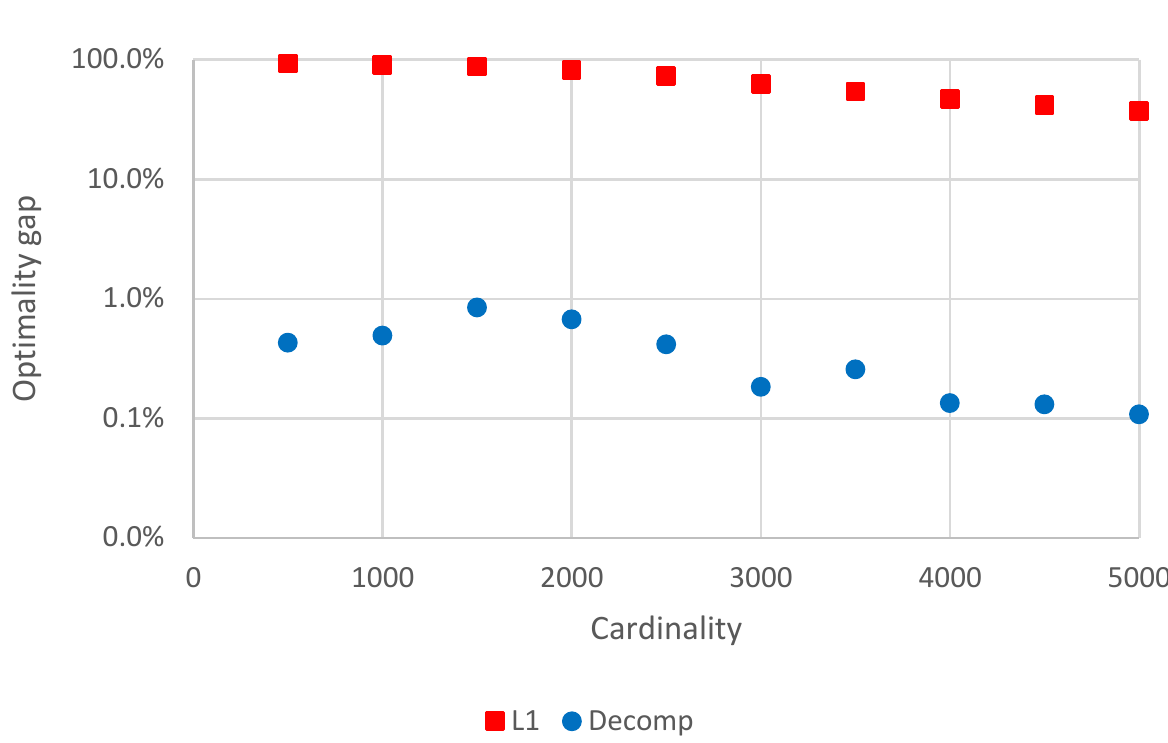}}
	\caption{Optimality gaps of the $\ell_1$ relaxation (red) and the proposed convexification \decomp (blue). }
	\label{fig:gaps1}
\end{figure}

Figure~\ref{fig:time_accelerometer} presents the distribution of the time required for each method to solve the respective convex model. We see that the improvement of relaxation quality of the new relaxations comes at the cost of computational efficiency: while the $\ell_1$ relaxation is solved in approximately one second, the proposed convexification requires on average 54 seconds. Although the vast majority of the instances are solved under 100 seconds using Algorithm~\ref{alg:Mmatrix}, a couple of instances require close to 10 minutes. Nonetheless, an average time of under minute to solve the instances to near-optimality (less than 1\% optimality gap) is adequate for most practical settings. 
Moreover, as shown in Section~\ref{sec:comp_decomposition}, the
computation times can be improved substantially using the decomposition method proposed in Section~\ref{sec:decomposition}.

\begin{figure}[h!]
\includegraphics[width=0.60\textwidth,trim={10cm 5cm 10cm 5cm},clip]{./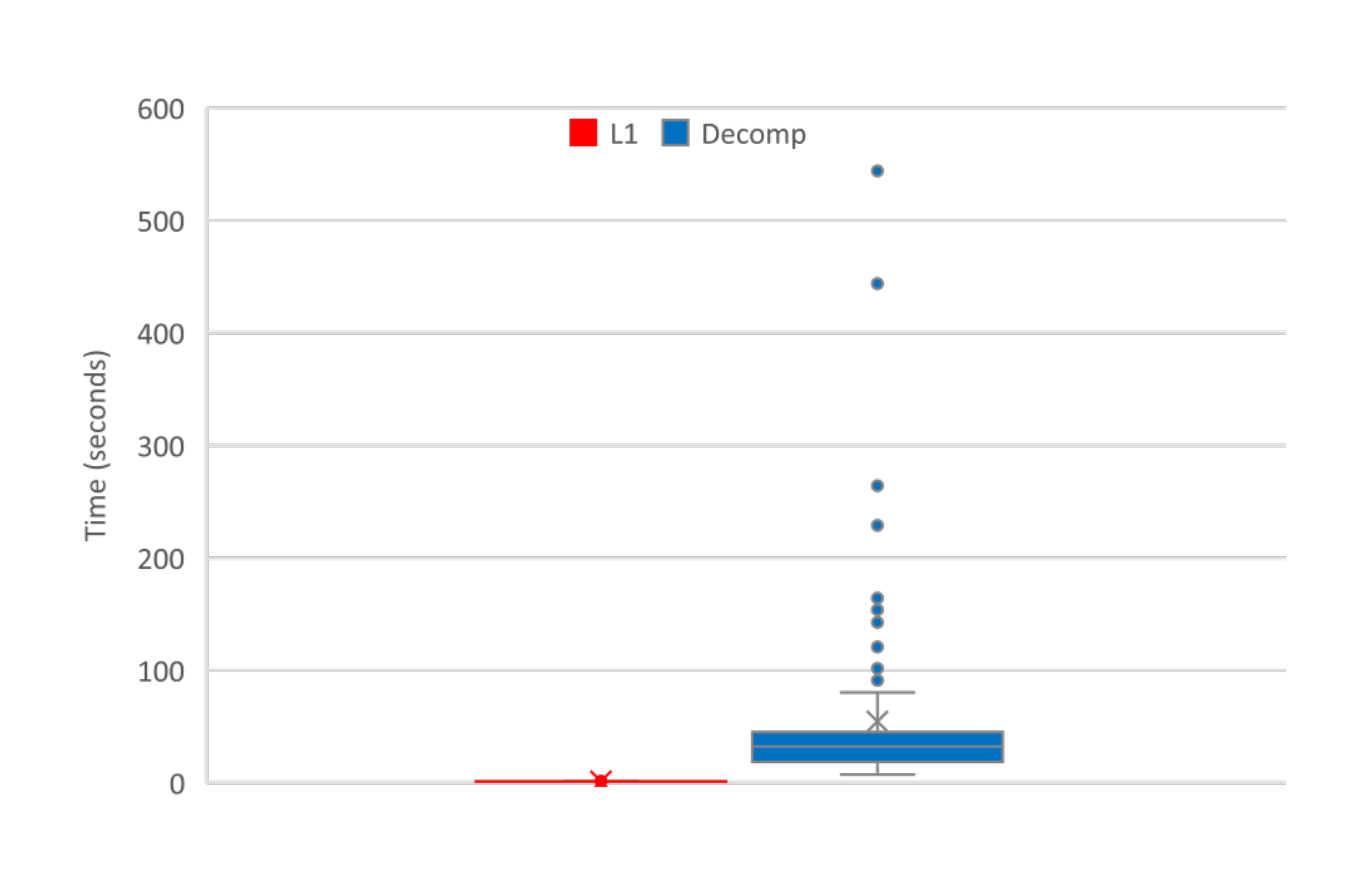}
	\caption{Distribution of CPU times for each method.}
	\label{fig:time_accelerometer}
\end{figure}
}

\new{Finally, we present a brief comparison with mixed-integer optimization methods. Specifically, we use the perspective reformulation of \eqref{eq:l0Card}, i.e., 
\begin{align*}
\min_{x\in \R_+^n}\;&\sum_{i=1}^n\left(y_i^2-2y_ix_i+\frac{x_i^2}{z_i}\right)+\lambda\sum_{i=1}^{n-1}(x_{i+1}-x_i)^2\\
\text{s.t.}\;&\sum_{i=1}^n z_i\leq k\\
& x \leq z, \\ 
& z\in \{0,1\}^n
\end{align*}
and solve the problems using Gurobi 8.0 with a one hour time limit. Table~\ref{tab:mio} presents the results: it shows, for $k\in \{2000,4000\}$ and $\lambda\in \{0.1,0.2\}$, the gaps and solution times for \texttt{$\ell_1$-approx}, \decomp, and the branch-and-bound solver (\texttt{b\&b})as well as the number of branch-and-bound nodes explored. The mixed-integer optimizer is unable to solve to optimality instances of this size within the time limit of one hour. While Gurobi is able to produce good optimality gaps, improving substantially upon the simple $\ell_1$-relaxation (in part due to the use of the perspective reformulation), \decomp produces better optimality gaps by an order-of-magnitude and requires only a small fraction of the computational effort.}

\begin{table}[h!]
	\caption{\new{Comparison with exact branch-and-bound method.}}
	\label{tab:mio}
	\begin{tabular}{c c |c c | c c | c c c}
		\hline
		\multirow{2}{*}{$k$}&\multirow{2}{*}{$\lambda$}&\multicolumn{2}{c|}{{\lasso}}&\multicolumn{2}{c|}{{\decomp}}&\multicolumn{3}{c}{{\texttt{b\&b}}}\\
		&&gap&time(s)&gap&time(s)&gap&time(s)&nodes\\
		\hline
		\multirow{2}{*}{2000}& 0.1 & 91.2 & 1 & 0.3 & 60 & 2.7 & 3,600 & 4,047\\
		& 0.2 & 87.0& 1 & 0.6 & 42 & 7.4 & 3,600 & 1,445\\
		&&&&&&&&\\
		\multirow{2}{*}{4000}& 0.1 & 68.0& 1 &0.0 & 7 & 3.4 & 3,600 & 1,350\\
		& 0.2 & 56.7& 1 &0.1 & 19 & 2.8 & 3,600 & 4,268\\
		\hline
	\end{tabular}
\end{table}

\rev{
\subsection{Statistical performance -- modeling with priors}\label{sec:computations_synt} In Section~\ref{sec:accelerometer} we established that the convex model derived in this paper indeed provides a much closer approximation for $\ell_0$ signal estimation problem 
than the usual $\ell_1$ relaxation. In this section we demonstrate that using the proposed convexification leads to better statistical performance than relying on the $\ell_1$-approximation alone. We also show how additional priors other than sparsity can be seamlessly integrated into the new convex models, and the benefits of doing so.
\label{sec:syntData}

\subsubsection{Data}
We now describe how we generate test instances.  First, the ``true" sparse signal $\hat y$ is generated as follows. Let $n$ be the number of time epochs, let $s$ be a parameter controlling the number of ``spikes" of the signal and let $h$ be a parameter controlling the length of each spike. Initially, the true signal is fully sparse, $\hat y=0$. Then we iteratively repeat the following process to generate $s$ spikes of non-zero values:
\begin{enumerate}
	\item We select an index $\ell$ uniformly between $1$ and $n+1-h$, corresponding to the start of a given spike.
	\item We sample an $h$-dimensional vector $v$ for a multivariate Gaussian distribution with mean $0$ and covariance matrix $B$, where $B_{ij}=\frac{i(h+1-j)}{h+1}$ for $i\leq j$. Thus $v$ is a realization of a \emph{Brownian bridge} process.
	\item We update $\hat y_{\ell+i}\leftarrow \hat y_{\ell+i}+|v_i|$.  
\end{enumerate}
Note that two different spikes may overlap, in which case the true signal $\hat y$ would have a single spike with larger intensity. Also note that the true signal $\hat y$ generated in this way has at most $hs$ non-zeros and at most $s$ spikes, but may have fewer if overlaps occur. Then, given a noise parameter $\sigma$, we generate the noisy observations $y_i=\hat y_i+\varepsilon_i$, where $\varepsilon_i$ follows a truncated normal distribution with mean $0$, variance $\sigma_i^2$ and lower bound $-\hat y_i$. Finally, we scale the data so that $\|y\|_\infty=1$.

\subsubsection{Methods} We compare the following methods:
\begin{description}
	\item[L1] Corresponds to solving the \lasso problem 
	\begin{align*}
	\min_{x\in \R_+^n}\|y-x\|_2^2+\lambda\sum_{i=1}^{n-1}(x_{i+1}-x_i)^2+\mu\|x\|_1.
	\end{align*}
	
	\item[Decomp-sparse] 
	Enforces the prior that the signal has a most $hs$ non-zeros, by solving the convex optimization problem
	\begin{align*}
	\min_{x\in \R_+^n, z\in [0,1]^n}\;&\|y\|_2^2-2\sum_{i=1}^ny_ix_i+\theta(z,x)+\mu\|x\|_1\\
	\text{s.t.}\;& \sum_{i=1}^n z_i\leq hs\\
	& 0\leq x\leq \|y\|_\infty z
	\end{align*}
	using Algorithm~\ref{alg:Mmatrix}.
	\item[Decomp-prior] In addition to the sparsity prior as before, it incorporates the information that the underlying signal has a most $s$ spikes and that each spike has at least $h$ non-zeros. These two priors can be enforced by solving the optimization problem
	\begin{subequations}\label{eq:L0prior}
	\begin{align}
	\min_{x\in \R_+^n, z\in [0,1]^n}\;&\|y\|_2^2-2\sum_{i=1}^ny_ix_i+\theta(z,x)+\mu\|x\|_1\\
	\text{s.t.}\;& \sum_{i=1}^n z_i\leq hs\\
	& \sum_{i=1}^{n-1}|z_{i+1}-z_i|\leq 2s\label{eq:L0prior_spike}\\
	& \sum_{i=\max\{1,\ell-h\}}^{\min\{n,\ell+h\}}z_i\geq h z_\ell&\ell=1,\ldots,n\label{eq:L0prior_dense}\\
	& 0\leq x\leq  \|y\|_\infty z. 
	\end{align}
	\end{subequations}
Constraint \eqref{eq:L0prior_spike} states that the process can transition from a zero value to a non-zero value at most $2s$ times, thus can have at most $s$ spikes. Each constraint  \eqref{eq:L0prior_dense} states that, if $z_\ell=1$, then there must be at least $h-1$ neighboring non-zero points, thus non-zero indexes occur in patches of at least $h$ elements.
\end{description}
Observe that, following the results in \cite{mazumder2017subset}, we keep an $\ell_1$-regularization for shrinkage to improve performance in low signal-noise-ratio regimes. 

\subsubsection{Computational setting} For the computations in this section, we generate instances with $n=1,000$, $s=10$, and $h=10$; so, each signal is zero in approximately 90\% of the time. Moreover, we test noise levels 
 $\sigma=0.1t$, $t=1, \ldots, n$, and for each $\sigma$ we generate 10 different instances as follows:
\begin{enumerate}
	\item For each parameter combination, 
	two signals are randomly generated: one signal for training, the other for testing. 
	\item For all methods, we solve the corresponding optimization problem for the training signal with 10 values of the smoothness parameter $\lambda$ and 10 values of the shrinkage parameter $\mu$, a total of 100 combinations. We consider two criteria for choosing a pair $(\lambda,\mu)$:
	\begin{description}
		\item[Error] The pair that best fits the true signal with the respect to the estimation error, i.e.,
		combination minimizing $\|\hat y -x^*\|_2^2$, where $x^*$ is the solution for corresponding optimization. 
		\item[Sparsity] The pair that best matches the sparsity pattern of the true signal\footnote{A point $x_i$ is considered non-zero if $|x_i|>10^{-3}$.}, i.e., combination minimizing  $\sum_{i=1}^n\big||\hat y_i|_0-|x_i^*|_0\big|$. This setting is of practical interest in cases where the training data is partially labeled: the location of the spikes is known but the actual value of the signal is not. 
	\end{description}
		\item We solve the optimization problem for the testing signal with parameters $(\lambda,\mu)$ chosen in (2), and report the results (averaged over the 10 instances). 
\end{enumerate}
For the instances considered, 
Table~\ref{tab:snrs} shows the average Signal-to-Noise Ratio (SNR) as a function of $\sigma$, computed as $\text{SNR}=\frac{\|\hat y\|_2^2}{\|\hat y - y\|_2^2} \cdot$ 
\begin{table}[!h]
	\begin{center}
		\caption{Signal-to-Noise Ratio for different values of the noise.}\label{tab:snrs}
		\begin{tabular}{ c | c c c c c c c c c c}
			\hline
			$\sigma$ & 0.1 & 0.2 & 0.3 & 0.4 & 0.5 & 0.6 & 0.7 & 0.8 & 0.9 & 1.0\\
			\hline
			SNR & 2,200 & 138 & 27 & 8.6 & 3.5 & 1.7 & 0.9 & 0.5 & 0.3 & 0.2\\
			\hline
		\end{tabular}
	\end{center}
\end{table}

\subsubsection{Results with respect to the error criterion}\label{sec:computations_error} We now present the results when the true values of $\hat y$ are known in training. Figure~\ref{fig:error1} depicts the out-of-sample error of each method and SNR, computed as $\text{error}=\frac{\|\hat y_{test}-x^*\|_2^2}{\|\hat y_{test}\|_2^2}$ where $\hat y_{test}$ is the true testing signal, and $x^*$ is the estimator.  Figure~\ref{fig:pattern1} depicts how accurately the estimator obtained in testing matches the sparsity pattern of the true signal. We observe that the standard $\ell_1$-norm approach results in dense signals with a substantial number of false positives, and is outperformed by the approaches that enforce priors in terms of error as well. The inclusion of the sparsity prior results in a notable improvement in terms of the error across all SNRs, and reducing it by half or more for SNR$\geq 3$. This prior also yields an order-of-magnitude improvement in terms of matching the sparsity pattern for SNR$\geq 3$, although for low SNRs the improvement in matching the sparsity pattern is less pronounced (and is worse for SNR=0.5). The inclusion of additional priors for the number and length of each spike yields further improvements (especially for low SNRs), and yields a good match for the sparsity pattern in all cases.

\begin{figure}[h!]
	\begin{center}
		\includegraphics[width=0.80\textwidth,trim={11cm 6cm 11cm 5.5cm},clip]{./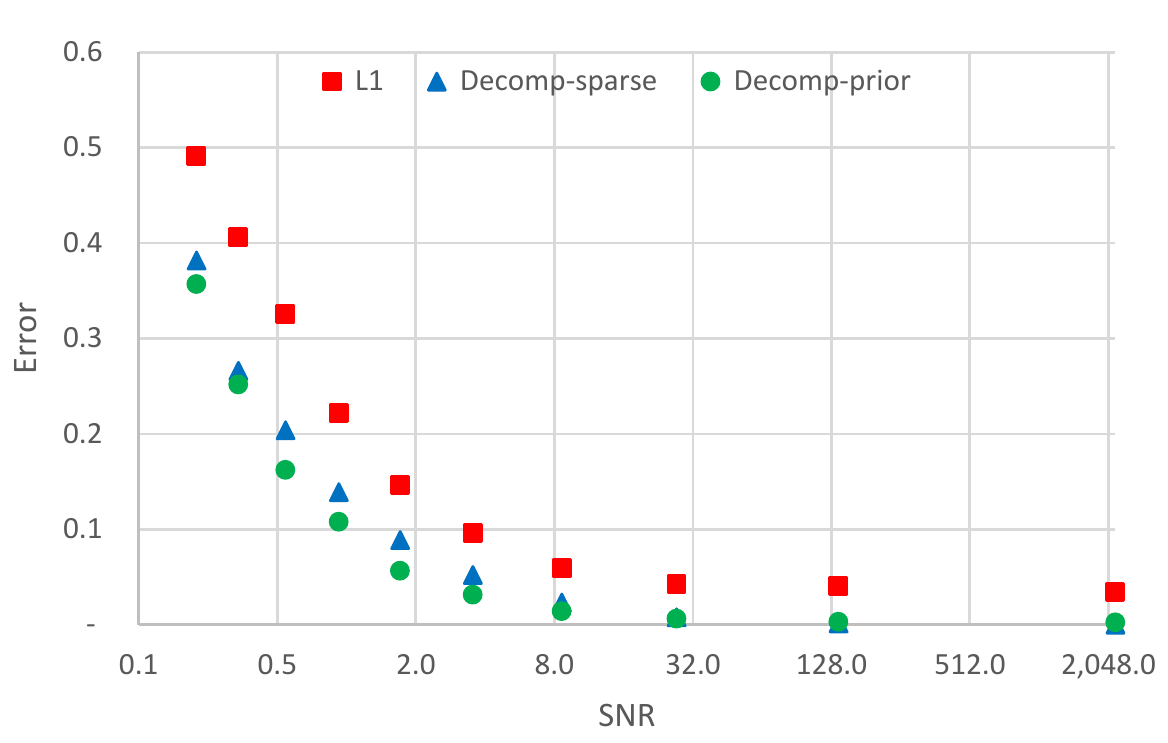}
		\caption{Average out-of-sample error as a function of SNR (in log-scale). }
		\label{fig:error1}
		\end{center}
\end{figure}

\begin{figure}[h!]
	\begin{center}
		\includegraphics[width=0.80\textwidth,trim={11cm 6cm 11cm 5.5cm},clip]{./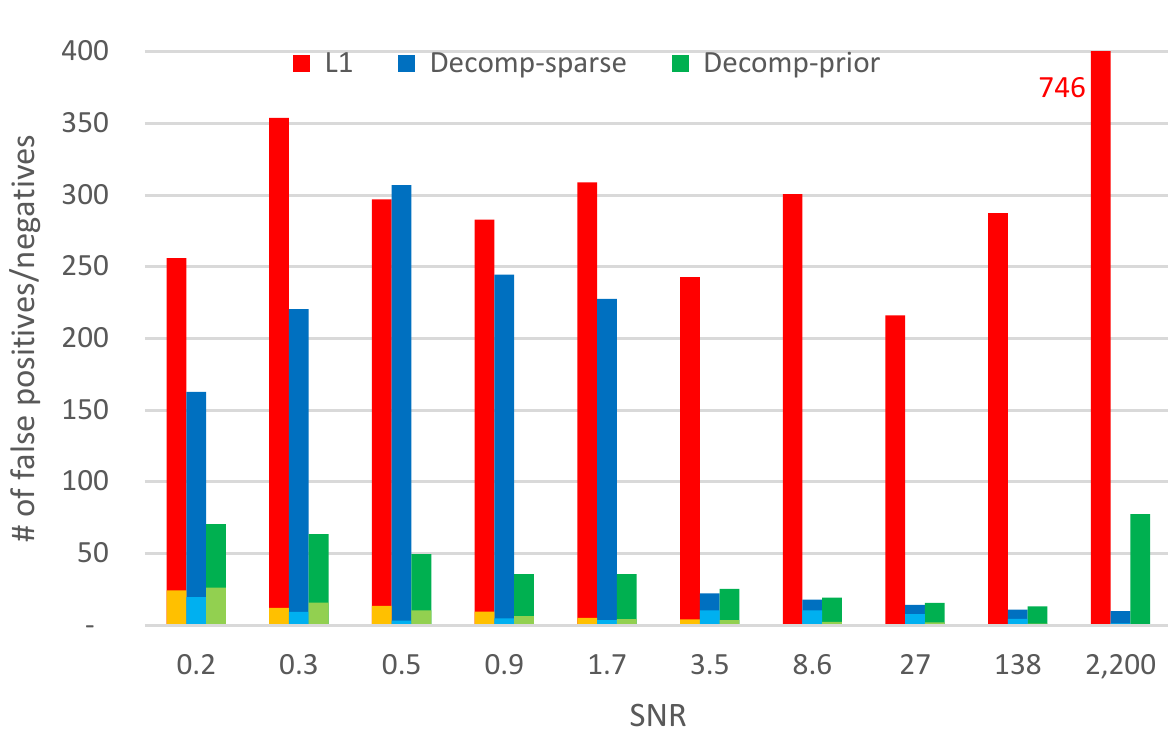}
		\caption{Average out-of-sample number of false positives (red/dark blue/dark green) and false negatives (orange/light blue/light green) as a function of SNR. 
			The number of false positives for L1 with SNR=2,200 is 746.}
		\label{fig:pattern1}
	\end{center}
\end{figure}

Figure~\ref{fig:distribution1} provides detailed information about the distribution of the out-of-sample errors for three different SNRs. We see that in high SNR regimes, the inclusion of the sparsity prior consistently outperforms the $\ell_1$-norm method, and the inclusion of additional priors consistently outperforms using only the sparsity prior. In contrast, in low SNR regimes, while the inclusion of additional priors yields better results on average, the improvement is not as consistent. 

\begin{figure}[h!]
		\subfloat[SNR=0.2.]{\includegraphics[width=0.33\textwidth,trim={10.5cm 5cm 10.5cm 5.5cm},clip]{./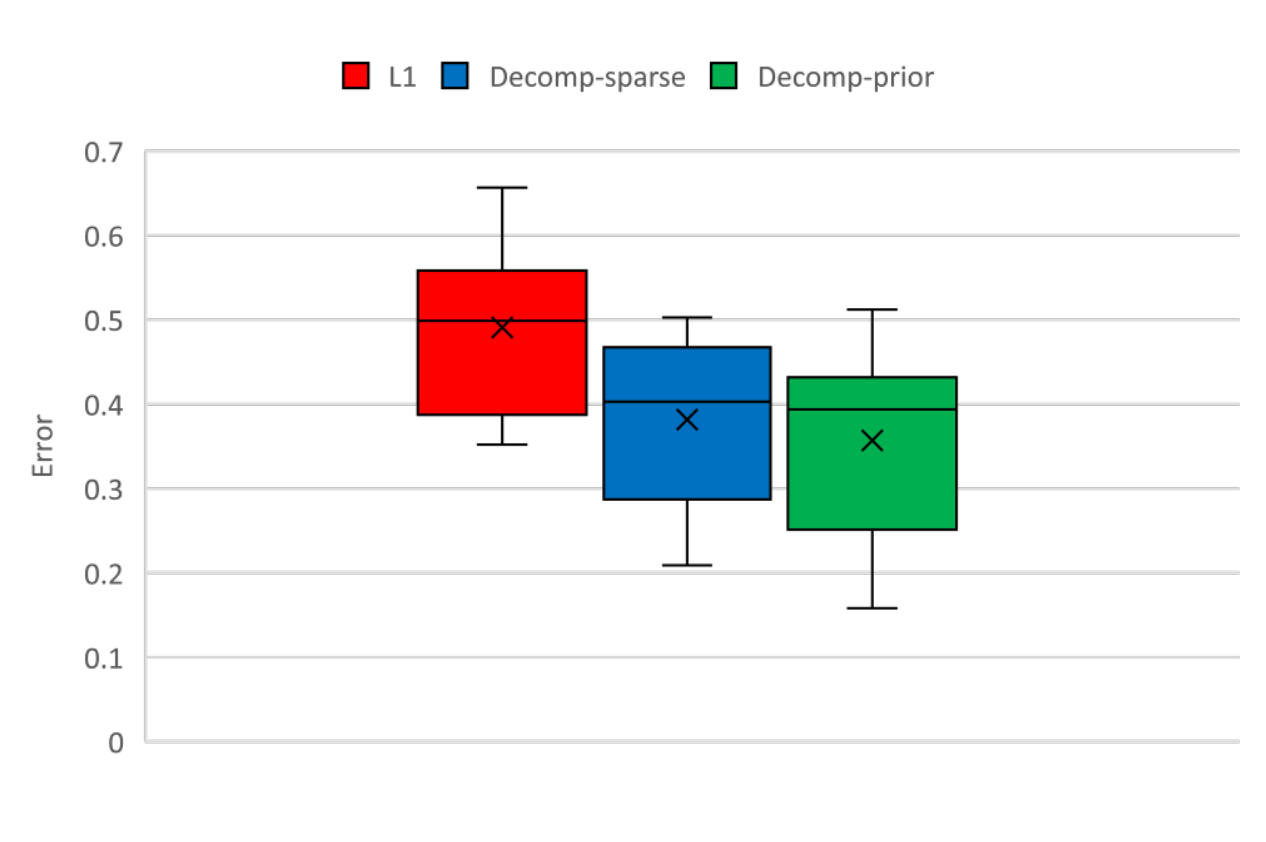}}\hfill		\subfloat[SNR=0.5.]{\includegraphics[width=0.33\textwidth,trim={10.5cm 5cm 10.5cm 5.5cm},clip]{./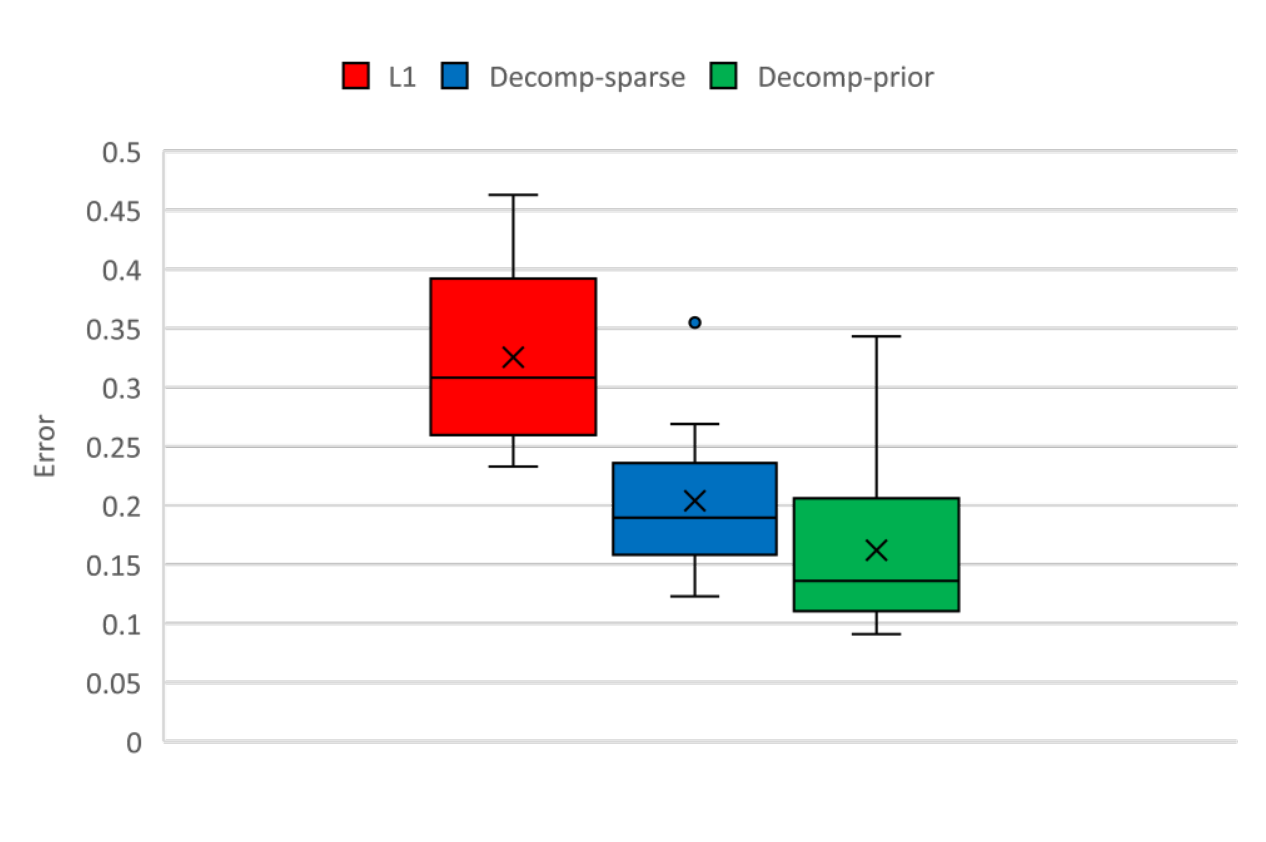}}\hfill
		\subfloat[SNR=3.5.]{\includegraphics[width=0.33\textwidth,trim={10.5cm 5cm 10.5cm 5.5cm},clip]{./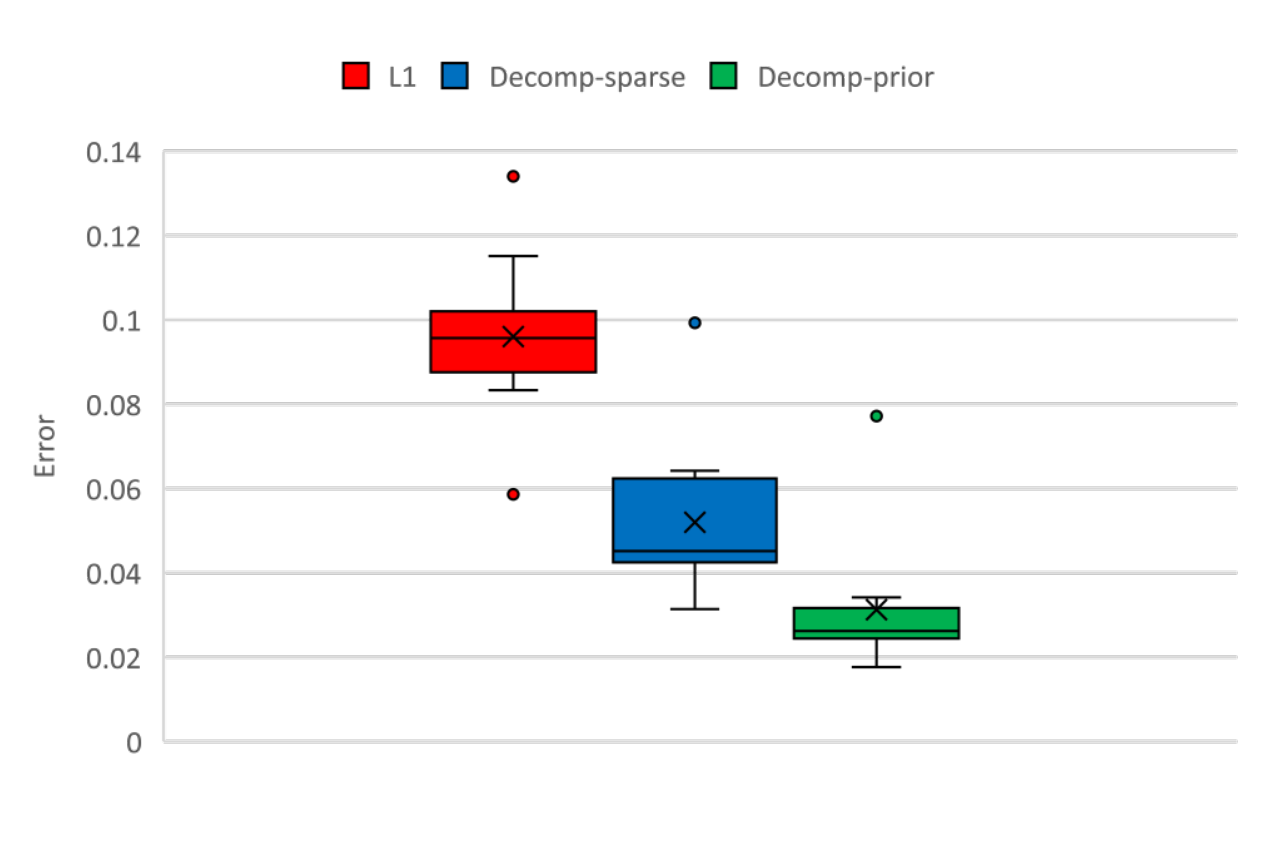}}
		\caption{Distribution of the out-of-sample errors for different SNRs when the true values of the signal used in training are available.}
		\label{fig:distribution1}
\end{figure}

Finally, Figure~\ref{fig:time1} depicts the average time required to solve the optimization problems as a function of the SNR. As expected, the $\ell_1$-norm approximation is the fastest method. Optimization problems with the sparsity prior are solved under two seconds, and optimization problems with all priors are solved under 10 seconds. We see that time required to solve the problems based on the stronger relaxations increases as the SNR decreases. 

\begin{figure}[h!]
	\begin{center}
		\includegraphics[width=0.80\textwidth,trim={11cm 6cm 11cm 5.5cm},clip]{./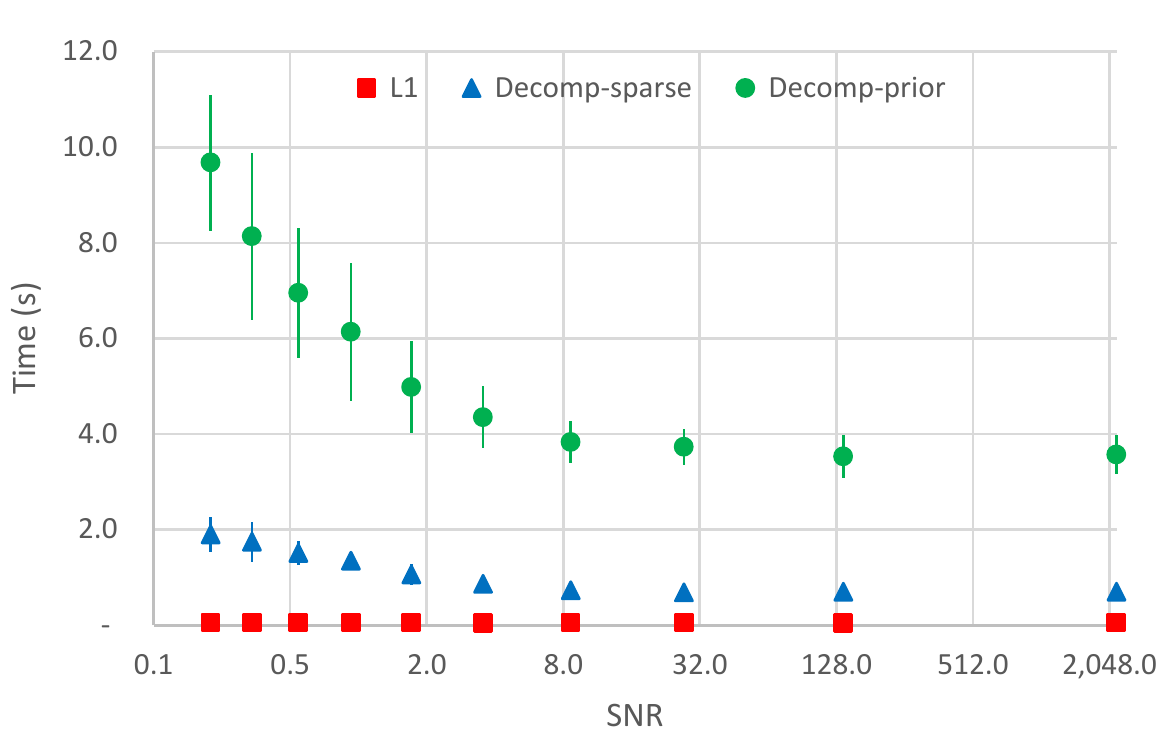}
		\caption{CPU time in seconds as a function of SNR (in log-scale). 
			The error bars correspond to $\pm 1$ stdev.}
		\label{fig:time1}
	\end{center}
\end{figure}

\subsubsection{Results with respect to the sparsity pattern criterion} We now present the results when,
for the training data,
 the true values of $\hat y$ are unknown, but its sparsity pattern is known. Figure~\ref{fig:error2} depicts the out-of-sample error of each method for each SNR and Figure~\ref{fig:pattern2} depicts how accurately the estimator obtained in validation matches the sparsity pattern of the true signal. Naturally, as the true values of the training signal are unknown, all methods perform worse in terms of the out-of-sample error. The $\ell_1$-norm method in particular performs very poorly in low SNR regimes: the estimator is $x\approx 0$, resulting a large error close to one and several false negatives (with no false positives, since few or no indexes are non-zero). In contrast the methods that enforce priors result in significantly reduced error across all SNRs while simultaneously improving the detection of the sparsity in low SNRs regimes, correctly detecting several spikes. In this setting, we did not observe a substantial difference between methods Decomp-sparse and Decomp-prior. From Figure~\ref{fig:distribution2},which depicts the distributions of the errors, we see that the new convexification-based methods consistently outperform the $\ell_1$-method.

\begin{figure}[h!]
	\begin{center}
		\includegraphics[width=0.80\textwidth,trim={11cm 6cm 11cm 5.5cm},clip]{./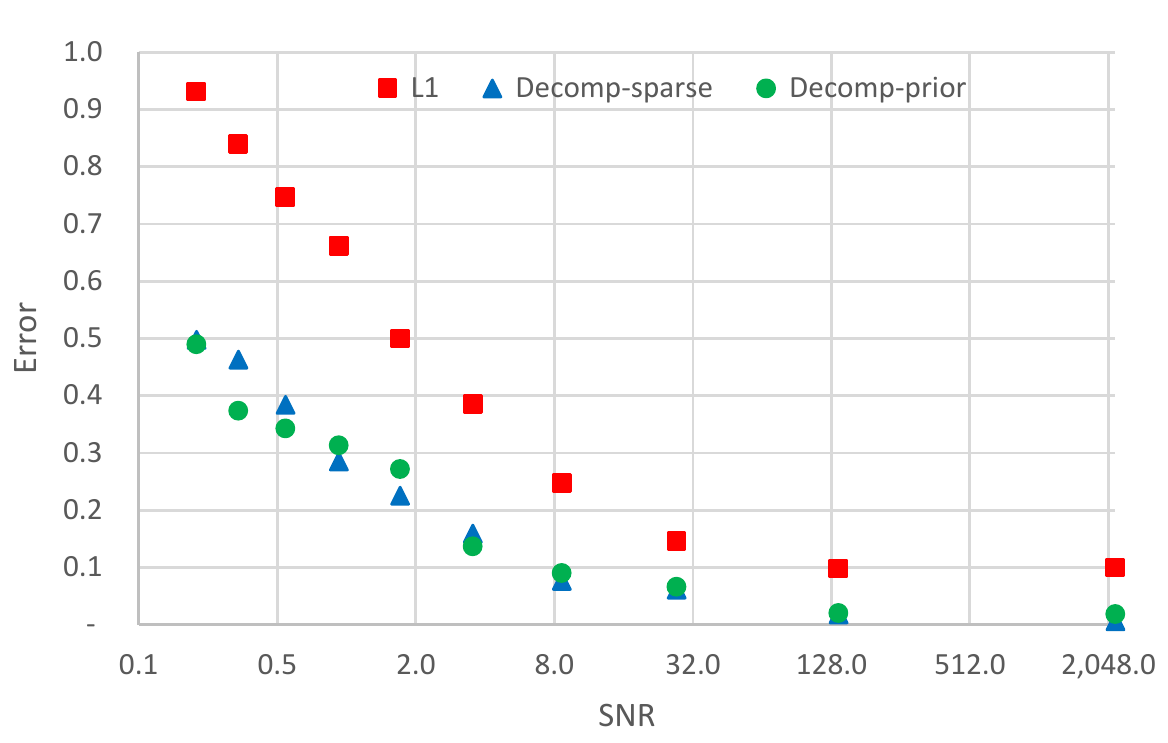}
		\caption{Average out-of-sample error as a function of SNR (in log-scale) when only the sparsity pattern of the training signal is known.}
		\label{fig:error2}
	\end{center}
\end{figure}
\begin{figure}[h!]
	\begin{center}
		\includegraphics[width=0.80\textwidth,trim={11cm 6cm 11cm 5.5cm},clip]{./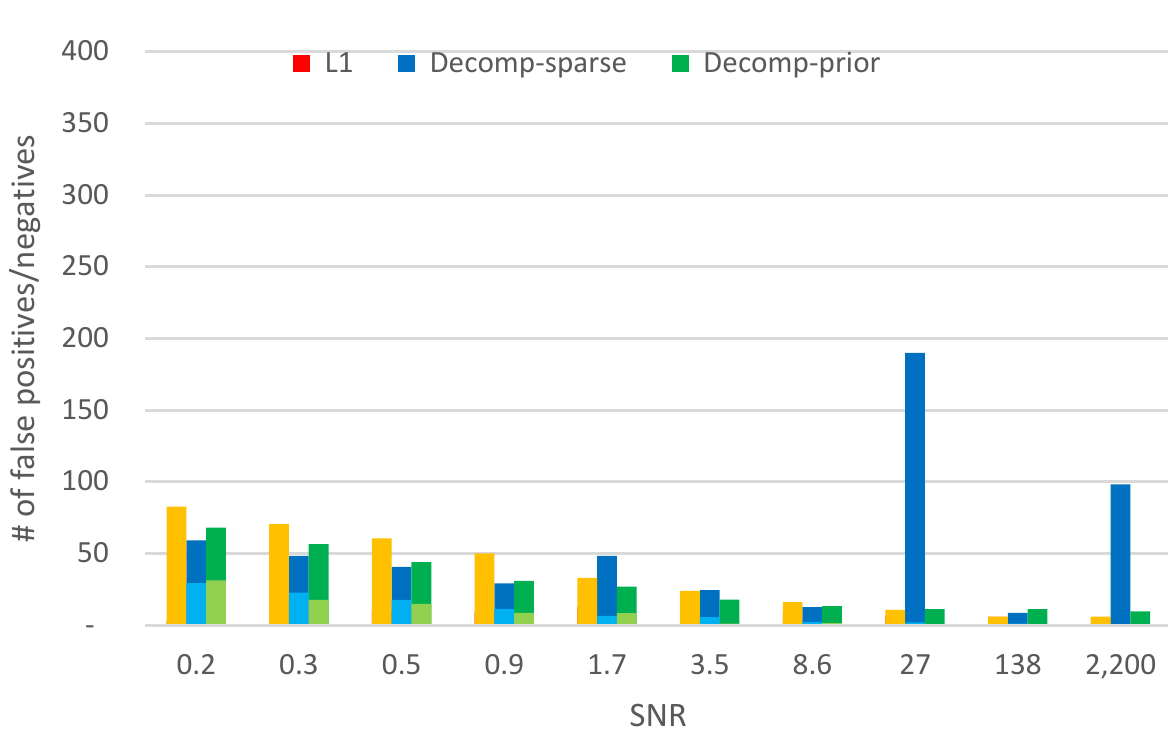}
		\caption{Average out-of-sample number of false positives (red/dark blue/dark green) and false negatives (orange/light blue/light green) as a function of SNR when only the sparsity pattern of the training signal is known. }
		\label{fig:pattern2}
	\end{center}
\end{figure}

\begin{figure}[h!]
	\subfloat[SNR=0.2.]{\includegraphics[width=0.33\textwidth,trim={10.5cm 5cm 10.5cm 5.5cm},clip]{./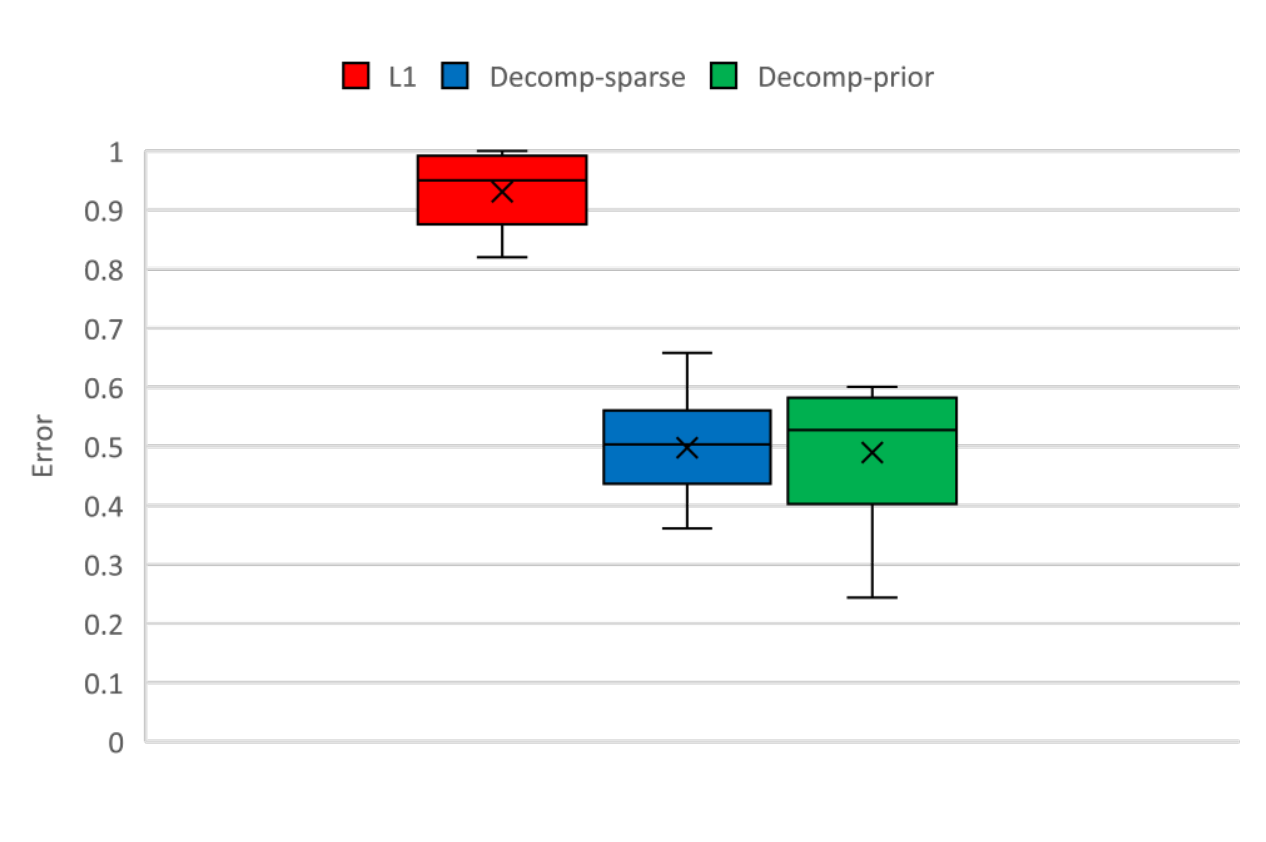}}\hfill		\subfloat[SNR=0.5.]{\includegraphics[width=0.33\textwidth,trim={10.5cm 5cm 10.5cm 5.5cm},clip]{./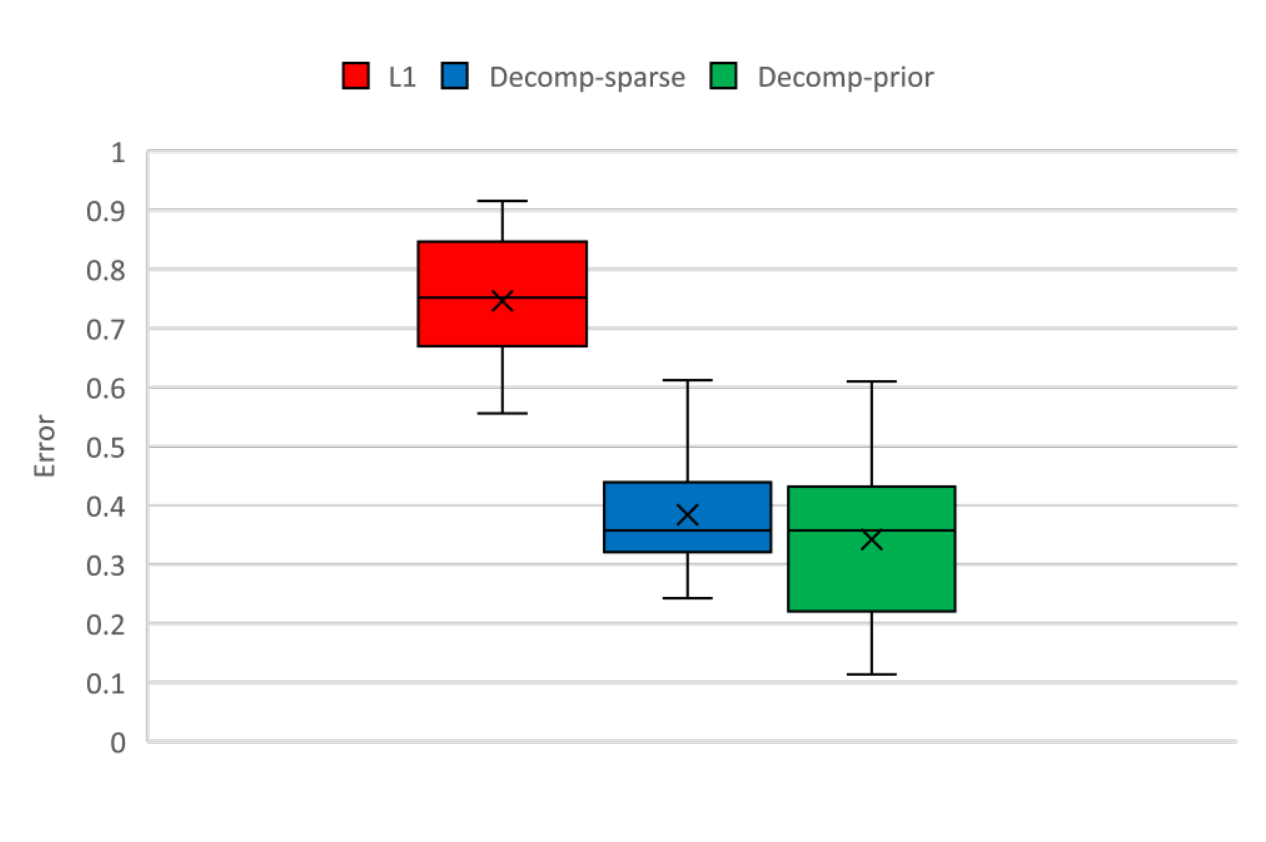}}\hfill
	\subfloat[SNR=3.5.]{\includegraphics[width=0.33\textwidth,trim={10.5cm 5cm 10.5cm 5.5cm},clip]{./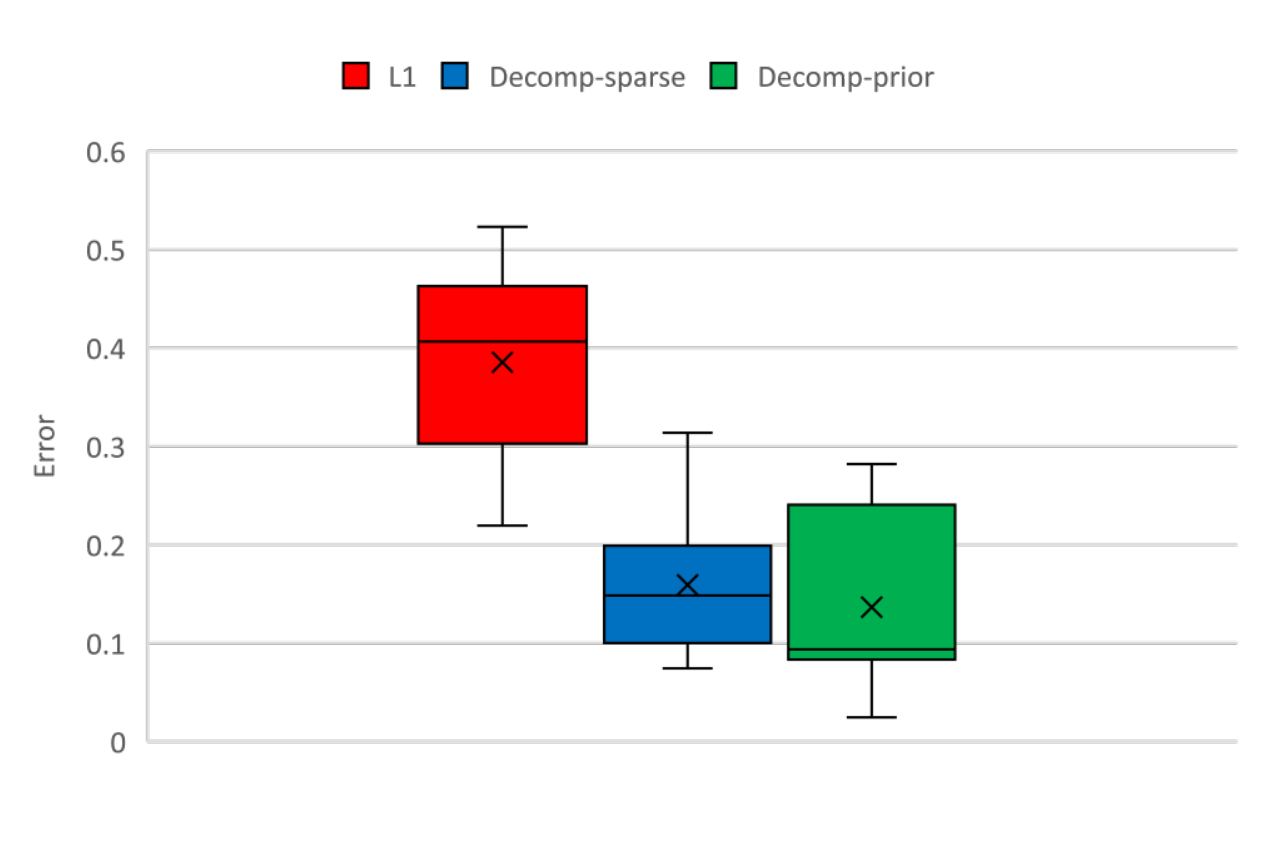}}
	\caption{Distribution of the out-of-sample errors for different SNRs when only the sparsity pattern of the training signal is known.}
	\label{fig:distribution2}
\end{figure}

\subsection{Computational experiments - Lagrangian methods} \label{sec:comp_decomposition}We now report on the performance of the Lagrangian method given in Section~\ref{sec:decomposition} for larger signals with $n=100,000$, $\sigma=0.5$, $s=10$ and $h=100$ (so approximately 1\% of the signal values are non-zero). 
We denoise the signal by solving the optimization problem 
	\begin{subequations}\label{eq:L0lagrange}
	\begin{align}
	\min_{x\in \R_+^n, z\in [0,1]^n}\;&\|y\|_2^2-2\sum_{i=1}^ny_ix_i+\theta(z,x)+\mu\|x\|_1+\kappa\|z\|_1\\
	\text{s.t.}\;& 0\leq x\leq \|y\|_\infty z.
	\end{align}
\end{subequations}
In these experiments we use synthetic instances generated as in Section~\ref{sec:syntData} with $\lambda=0.3$ and $\mu=0$\footnote{In the experiments reported in Section~\ref{sec:computations_error} with $\sigma=0.5$ and method decomp-sparse, the combination $(\lambda,\mu)=(0.32,0)$ was chosen in 4/10 instances and was the combination more often selected in training.} and varying $\kappa\in \{0.0005, 0.001,0.002,0.005,0.01,0.02\}$. We solve \eqref{eq:L0lagrange} using the Lagrangian method with $m\in \{1,10,100,1000\}$ subproblems ($m=1$ corresponds to no decomposition). The independent subproblems are solved in parallel in the same laptop computer.

Table~\ref{tab:decomp} presents the results, both in terms of statistical and computational performance. For each value of $\kappa$ and $m$, it shows the error between the true signal and the estimated signal\footnote{Since we do perform cross-validation, we report the in-sample error.}, the number of non-zero values $\|x^*\|_0$ of the resulting estimator, the time required to solve the problem; the number of subgradient iterations used, and the actual number of subproblems solved.

We observe that for the smallest value of $\kappa=0.0005$ (corresponding to a sparsity of $\|x^*\|_0\approx 30,000$), the method without decomposition ($m=1$) is the fastest and is able to solve the problems in approximately three minutes. However, as the value of the $\ell_0$ regularization parameter $\kappa$ increases, the Lagrangian methods solve the problems increasingly faster. In particular, for values of $\kappa\geq 0.005$ (sparsity of $\|x^*\|_0\leq 900$), the Lagrangian method with $m=1,000$ solves the problems in under one minute whereas a direct implementation via Algorithm~\ref{alg:Mmatrix} may require an hour or more. Indeed, we see that as the $\ell_0$ regularization parameter increases, the number of iterations and number of subproblems solved decreases considerably. In fact, if $\kappa\geq 0.01$, the Lagrangian method with $m=10$ is solved to optimality without performing any subgradient iterations. Finally, we point out that in terms of the estimation error, all methods return comparable errors (except for $\kappa=0.0005$, where the maximum number of 100 iterations is reached and the Lagrangian methods do not solve the problems to optimality). 

Therefore, we conclude that the proposed Lagrangian method is able to efficiently tackle large-scale problems when the target sparsity is small compared to the dimension of the problem, and can solve the problems  by two-orders of magnitude faster compared to default method. The drawback is that the decomposition method is unable to incorporate additional priors using constraints.
}

\begin{table}
	\begin{center}
		\caption{Performance of the Lagrangian method for signals with $n=100,000$. \textbf{Bold entries} correspond to the best estimation error and the fastest solution time.}
		\label{tab:decomp}
		
		\begin{tabular}{c c | l l | l l l}
			\hline
			\multirow{2}{*}{$\kappa$} & \multirow{2}{*}{$m$} &\multicolumn{2}{c|}{\underline{signal quality}}&\multicolumn{3}{c}{\underline{computational performance}}\\
			&& error& $\|x^*\|_0$& time &\# iter & \# sub\\
			\hline
			\multirow{4}{*}{0.0005}&1&\textbf{0.172} & 29,690 & \textbf{172} & 1 & 1\\
			& 10 & 0.188 & 30,650 & 1,825 & 89 & 10+362\\
			&100 & 0.187 & 30,648 & 656 & 100 & 100+3,252\\
			& 1,000 & 0.187 & 30,670 & 631 & 100 & 1,000+31,932\\
			&&&&&&\\
				\multirow{4}{*}{0.001}&1&0.091 & 10,790 & 506 & 1 & 1\\
			& 10 & 0.091 & 10,789 & 3,391 & 62 & 10+257\\
			&100 & 0.091 & 10,781 & 613 & 100 & 100+2,455\\
			& 1,000 & \textbf{0.090} & 10,760 & \textbf{475} & 100 & 1,000+23,174\\
			&&&&&&\\	\multirow{4}{*}{0.002}&1&\textbf{0.027} & 2,523 & 1,390 & 1 & 1\\
			& 10 & \textbf{0.027} & 2,511 & 1,703 & 22 & 10+94\\
			&100 & \textbf{0.027} & 2,510 & 280 & 100 & 100+848\\
			& 1,000 & \textbf{0.027} & 2,502 & \textbf{173} & 100 & 1,000+7,861\\
			&&&&&&\\	\multirow{4}{*}{0.005}&1&\textbf{0.008} & 878 & 5,579 & 1 & 1\\
			& 10 & \textbf{0.008} & 877 & 309 & 12 & 10+20\\
			&100 & \textbf{0.008} & 877 & 51 & 17 & 100+54\\
			& 1,000 & \textbf{0.008} & 878 & \textbf{31} & 61 & 1,000+480\\
			&&&&&&\\	\multirow{4}{*}{0.01}&1&\textbf{0.013} & 758 & 2,141 & 1 & 1\\
			& 10 & \textbf{0.013} & 758 & 174 & 1 & 10+0\\
			&100 & \textbf{0.013} & 759 & 81 & 18 & 100+38\\
			& 1,000 & \textbf{0.013} & 761 & \textbf{49} & 71 & 1,000+347\\
			&&&&&&\\	\multirow{4}{*}{0.02}&1&\textbf{0.028} & 648 & 2,184 & 1 & 1\\
			& 10 & 0.030 & 637 & 185 & 1 & 10+0\\
			&100 & \textbf{0.028} & 646 & 89 & 14 & 100+27\\
			& 1,000 & \textbf{0.028} & 649 & \textbf{44} & 62 & 1,000+275\\
			\hline
		\end{tabular}
	\end{center}
\end{table}

\section{Conclusions}
\label{sec:conclusions}

In this paper we derived strong iterative convex relaxations for quadratic optimization problems with M-matrices and indicators, of which signal estimation with smoothness and sparsity is a special case. The relaxations are based on convexification of quadratic functions on two variables, and optimal decompositions of an M-matrix into pairwise terms. We also gave extended conic quadratic formulations of the convex relaxations, allowing the use of off-the-shelf conic solvers. The approach is general enough to permit the addition of multiple priors in the form of additional constraints. The proposed iterative convexification approach substantially closes the gap between the $\ell_0$-``norm" and its $\ell_1$ surrogate and results in significantly better estimators than the \rev{standard} approaches \rev{using $\ell_1$ approximations}. In fact, near-optimal solution of the $\ell_0$-problems are obtained in seconds for instances with over 10,000 variables, \rev{and the method scales to instances with 100,000 variables using tailored algorithms}. 

In addition to better inference properties, the proposed models and resulting estimators are easily \emph{interpretable}. On the one hand, unlike \rev{$\ell_1$-approximations} and related estimators, the sparsity of the proposed estimators is close to the target sparsity parameter $k$. Thus, a prior on the sparsity of the signal can be naturally fed to the inference problems. On the other hand, the proposed strong convex relaxations compare favorably to \rev{$\ell_1$-approximations} in classification or spike inference purposes: the 0-1 variables can be easily used to assign a category to each observation via simple rounding heuristics, and resulting in high-quality solutions. 

\section*{Acknowledgments}

A. Atamt\"urk is supported, in part, by grant FA9550-10-1-0168 from the Office
of the Assistant Secretary of Defense for Research and Engineering and grant 1807260 from the National Science Foundation. A. G\'omez is supported, in part, by the National Science Foundation under Grant No. 1818700.

\bibliographystyle{apalike}
\bibliography{Bibliography}

\begin{thebibliography}{}

\bibitem[Ahuja et~al., 2004]{ahuja2004cut}
Ahuja, R.~K., Hochbaum, D.~S., and Orlin, J.~B. (2004).
\newblock A cut-based algorithm for the nonlinear dual of the minimum cost
  network flow problem.
\newblock {\em Algorithmica}, 39:189--208.

\bibitem[Akt{\"u}rk et~al., 2009]{akturk2009strong}
Akt{\"u}rk, M.~S., Atamt{\"u}rk, A., and G{\"u}rel, S. (2009).
\newblock A strong conic quadratic reformulation for machine-job assignment
  with controllable processing times.
\newblock {\em Operations Research Letters}, 37:187--191.

\bibitem[Alizadeh and Goldfarb, 2003]{alizadeh2003second}
Alizadeh, F. and Goldfarb, D. (2003).
\newblock Second-order cone programming.
\newblock {\em Mathematical Programming}, 95:3--51.

\bibitem[Atamt\"urk and G\'omez, 2016]{AG:sub-cqmip}
Atamt\"urk, A. and G\'omez, A. (2016).
\newblock Submodularity in conic quadratic mixed 0-1 optimization.
\newblock {\em arXiv preprint arXiv:1705.05918}.
\newblock BCOL Research Report 16.02, UC Berkeley. Forthcoming in
  \textit{Operations Research}.

\bibitem[Atamt{\"u}rk and G{\'o}mez, 2018]{atamturk2018strong}
Atamt{\"u}rk, A. and G{\'o}mez, A. (2018).
\newblock Strong formulations for quadratic optimization with {M}-matrices and
  indicator variables.
\newblock {\em Mathematical Programming}, 170:141--176.

\bibitem[Atamt{\"u}rk and Narayanan, 2007]{cmir-ipco}
Atamt{\"u}rk, A. and Narayanan, V. (2007).
\newblock Cuts for conic mixed-integer programming.
\newblock In Fischetti, M. and Williamson, D.~P., editors, {\em Integer
  Programming and Combinatorial Optimization}, pages 16--29, Berlin,
  Heidelberg. Springer.

\bibitem[Bach, 2016]{bach2016submodular}
Bach, F. (2016).
\newblock Submodular functions: from discrete to continuous domains.
\newblock {\em Mathematical Programming}, 175:1--41.

\bibitem[Bach, 2008]{bach2008}
Bach, F.~R. (2008).
\newblock Consistency of the group lasso and multiple kernel learning.
\newblock {\em Journal of Machine Learning Research}, 9:1179--1225.

\bibitem[Bao and Intille, 2004]{bao2004activity}
Bao, L. and Intille, S.~S. (2004).
\newblock Activity recognition from user-annotated acceleration data.
\newblock In {\em International Conference on Pervasive Computing}, pages
  1--17. Springer.

\bibitem[Berman and Plemmons, 1994]{berman1994nonnegative}
Berman, A. and Plemmons, R.~J. (1994).
\newblock {\em Nonnegative matrices in the mathematical sciences}, volume~9.
\newblock SIAM.

\bibitem[Bertsimas and King, 2015]{bertsimas2015or}
Bertsimas, D. and King, A. (2015).
\newblock {OR} forum -- an algorithmic approach to linear regression.
\newblock {\em Operations Research}, 64:2--16.

\bibitem[Bertsimas et~al., 2016]{bertsimas2016best}
Bertsimas, D., King, A., Mazumder, R., et~al. (2016).
\newblock Best subset selection via a modern optimization lens.
\newblock {\em The Annals of Statistics}, 44:813--852.

\bibitem[Boman et~al., 2005]{boman2005factor}
Boman, E.~G., Chen, D., Parekh, O., and Toledo, S. (2005).
\newblock On factor width and symmetric {H}-matrices.
\newblock {\em Linear Algebra and Its Applications}, 405:239--248.

\bibitem[Bonami et~al., 2015]{bonami2015mathematical}
Bonami, P., Lodi, A., Tramontani, A., and Wiese, S. (2015).
\newblock On mathematical programming with indicator constraints.
\newblock {\em Mathematical Programming}, 151:191--223.

\bibitem[Boykov et~al., 2001]{boykov2001fast}
Boykov, Y., Veksler, O., and Zabih, R. (2001).
\newblock Fast approximate energy minimization via graph cuts.
\newblock {\em IEEE Transactions on pattern analysis and machine intelligence},
  23:1222--1239.

\bibitem[Cand{\`e}s and Wakin, 2008]{candes2008introduction}
Cand{\`e}s, E.~J. and Wakin, M.~B. (2008).
\newblock An introduction to compressive sampling.
\newblock {\em IEEE Signal Processing Magazine}, 25:21--30.

\bibitem[Candes et~al., 2008]{candes2008enhancing}
Candes, E.~J., Wakin, M.~B., and Boyd, S.~P. (2008).
\newblock Enhancing sparsity by reweighted $\ell_1$ minimization.
\newblock {\em Journal of Fourier Analysis and Applications}, 14:877--905.

\bibitem[Casale et~al., 2011]{casale2011human}
Casale, P., Pujol, O., and Radeva, P. (2011).
\newblock Human activity recognition from accelerometer data using a wearable
  device.
\newblock In {\em Iberian Conference on Pattern Recognition and Image
  Analysis}, pages 289--296. Springer.

\bibitem[Casale et~al., 2012]{casale2012personalization}
Casale, P., Pujol, O., and Radeva, P. (2012).
\newblock Personalization and user verification in wearable systems using
  biometric walking patterns.
\newblock {\em Personal and Ubiquitous Computing}, 16:563--580.

\bibitem[Chen et~al., 2001]{chen2001atomic}
Chen, S.~S., Donoho, D.~L., and Saunders, M.~A. (2001).
\newblock Atomic decomposition by basis pursuit.
\newblock {\em SIAM review}, 43:129--159.

\bibitem[Cozad et~al., 2014]{cozad2014learning}
Cozad, A., Sahinidis, N.~V., and Miller, D.~C. (2014).
\newblock Learning surrogate models for simulation-based optimization.
\newblock {\em AIChE Journal}, 60:2211--2227.

\bibitem[Dheeru and Karra~Taniskidou, 2017]{Dua:2017}
Dheeru, D. and Karra~Taniskidou, E. (2017).
\newblock {UCI} machine learning repository.

\bibitem[Dong, 2019]{dong2019integer}
Dong, H. (2019).
\newblock On integer and {MPCC} representability of affine sparsity.
\newblock {\em Operations Research Letters}, 47(3):208--212.

\bibitem[Dong et~al., 2019]{dong2019structural}
Dong, H., Ahn, M., and Pang, J.-S. (2019).
\newblock Structural properties of affine sparsity constraints.
\newblock {\em Mathematical Programming}, 176(1-2):95--135.

\bibitem[Dong et~al., 2015]{dong2015regularization}
Dong, H., Chen, K., and Linderoth, J. (2015).
\newblock Regularization vs. relaxation: A conic optimization perspective of
  statistical variable selection.
\newblock {\em arXiv preprint arXiv:1510.06083}.

\bibitem[Donoho, 2006]{donoho2006compressed}
Donoho, D.~L. (2006).
\newblock Compressed sensing.
\newblock {\em IEEE Transactions on Information Theory}, 52:1289--1306.

\bibitem[Donoho et~al., 2006]{donoho2006stable}
Donoho, D.~L., Elad, M., and Temlyakov, V.~N. (2006).
\newblock Stable recovery of sparse overcomplete representations in the
  presence of noise.
\newblock {\em IEEE Transactions on Information Theory}, 52:6--18.

\bibitem[Frangioni and Gentile, 2006]{frangioni2006perspective}
Frangioni, A. and Gentile, C. (2006).
\newblock Perspective cuts for a class of convex 0--1 mixed integer programs.
\newblock {\em Mathematical Programming}, 106:225--236.

\bibitem[Frank and Friedman, 1993]{frank1993statistical}
Frank, L.~E. and Friedman, J.~H. (1993).
\newblock A statistical view of some chemometrics regression tools.
\newblock {\em Technometrics}, 35:109--135.

\bibitem[Friedrich et~al., 2017]{friedrich2017fast}
Friedrich, J., Zhou, P., and Paninski, L. (2017).
\newblock Fast online deconvolution of calcium imaging data.
\newblock {\em {PLoS} computational biology}, 13.

\bibitem[Gao and Wang, 1992]{gao1992criteria}
Gao, Y.-m. and Wang, X.-h. (1992).
\newblock Criteria for generalized diagonally dominant matrices and
  {M}-matrices.
\newblock {\em Linear Algebra and its Applications}, 169:257--268.

\bibitem[G\'omez and Prokopyev, 2018]{gomez2018mixed}
G\'omez, A. and Prokopyev, O. (2018).
\newblock A mixed-integer fractional optimization approach to best subset
  selection.
\newblock {\em http://www.optimization-online.org/DB\_HTML/2018/08/6791.html}.

\bibitem[G{\"u}nl{\"u}k and Linderoth, 2010]{gunluk2010perspective}
G{\"u}nl{\"u}k, O. and Linderoth, J. (2010).
\newblock Perspective reformulations of mixed integer nonlinear programs with
  indicator variables.
\newblock {\em Mathematical Programming}, 124:183--205.

\bibitem[Hastie et~al., 2001]{hastie2001elements}
Hastie, T., Tibshirani, R., and Friedman, J. (2001).
\newblock {\em The Elements of Statistical Learning: Data Mining, Inference,
  and Prediction}, volume~1.
\newblock Springer series in statistics New York, NY, USA:.

\bibitem[Hastie et~al., 2017]{hastie2017extended}
Hastie, T., Tibshirani, R., and Tibshirani, R.~J. (2017).
\newblock Extended comparisons of best subset selection, forward stepwise
  selection, and the lasso.
\newblock {\em arXiv preprint arXiv:1707.08692}.

\bibitem[Hastie et~al., 2015]{hastie2015statistical}
Hastie, T., Tibshirani, R., and Wainwright, M. (2015).
\newblock {\em Statistical Learning with Sparsity: The Lasso and
  Generalizations}.
\newblock CRC press.

\bibitem[Hazimeh and Mazumder, 2018]{hazimeh2018fast}
Hazimeh, H. and Mazumder, R. (2018).
\newblock Fast best subset selection: Coordinate descent and local
  combinatorial optimization algorithms.
\newblock {\em arXiv preprint arXiv:1803.01454}.

\bibitem[Hebiri et~al., 2011]{hebiri2011smooth}
Hebiri, M., Van De~Geer, S., et~al. (2011).
\newblock The smooth-lasso and other $\ell_1$+ $\ell_2$-penalized methods.
\newblock {\em Electronic Journal of Statistics}, 5:1184--1226.

\bibitem[Hijazi et~al., 2012]{hijazi2012mixed}
Hijazi, H., Bonami, P., Cornu{\'e}jols, G., and Ouorou, A. (2012).
\newblock Mixed-integer nonlinear programs featuring “on/off” constraints.
\newblock {\em Computational Optimization and Applications}, 52:537--558.

\bibitem[Hiriart-Urruty and Lemar{\'e}chal, 2013]{hiriart2013convex}
Hiriart-Urruty, J.-B. and Lemar{\'e}chal, C. (2013).
\newblock {\em Convex analysis and minimization algorithms {I}: Fundamentals},
  volume 305.
\newblock Springer science \& business media.

\bibitem[Hochbaum, 2001]{hochbaum2001efficient}
Hochbaum, D.~S. (2001).
\newblock An efficient algorithm for image segmentation, {M}arkov random fields
  and related problems.
\newblock {\em Journal of the {ACM} ({JACM})}, 48:686--701.

\bibitem[Hochbaum, 2013]{hochbaum2013multi}
Hochbaum, D.~S. (2013).
\newblock Multi-label markov random fields as an efficient and effective tool
  for image segmentation, total variations and regularization.
\newblock {\em Numerical Mathematics: Theory, Methods and Applications},
  6:169--198.

\bibitem[Huang et~al., 2008]{huang2008adaptive}
Huang, J., Ma, S., and Zhang, C.-H. (2008).
\newblock Adaptive lasso for sparse high-dimensional regression models.
\newblock {\em Statistica Sinica}, pages 1603--1618.

\bibitem[Jeon et~al., 2017]{jeon2017quadratic}
Jeon, H., Linderoth, J., and Miller, A. (2017).
\newblock Quadratic cone cutting surfaces for quadratic programs with on--off
  constraints.
\newblock {\em Discrete Optimization}, 24:32--50.

\bibitem[Jewell and Witten, 2017]{jewell2017exact}
Jewell, S. and Witten, D. (2017).
\newblock Exact spike train inference via $\ell_0$ optimization.
\newblock {\em arXiv preprint arXiv:1703.08644}.

\bibitem[Kim et~al., 2009]{kim2009ell}
Kim, S.-J., Koh, K., Boyd, S., and Gorinevsky, D. (2009).
\newblock $\ell_1$ trend filtering.
\newblock {\em SIAM review}, 51:339--360.

\bibitem[Kleinberg and Tardos, 2002]{kleinberg2002approximation}
Kleinberg, J. and Tardos, E. (2002).
\newblock Approximation algorithms for classification problems with pairwise
  relationships: Metric labeling and markov random fields.
\newblock {\em Journal of the {ACM} ({JACM})}, 49:616--639.

\bibitem[Kolmogorov and Zabin, 2004]{kolmogorov2004energy}
Kolmogorov, V. and Zabin, R. (2004).
\newblock What energy functions can be minimized via graph cuts?
\newblock {\em IEEE Transactions on Pattern Analysis and Machine Intelligence},
  26:147--159.

\bibitem[Lin et~al., 2014]{lin2014}
Lin, X., Pham, M., and Ruszczy\'{n}ski, A. (2014).
\newblock Alternating linearization for structured regularization problems.
\newblock {\em Journal of Machine Learning Research}, 15:3447--3481.

\bibitem[Lobo et~al., 1998]{lobo1998applications}
Lobo, M.~S., Vandenberghe, L., Boyd, S., and Lebret, H. (1998).
\newblock Applications of second-order cone programming.
\newblock {\em Linear Algebra and its Applications}, 284:193--228.

\bibitem[Lustig et~al., 2007]{lustig2007sparse}
Lustig, M., Donoho, D., and Pauly, J.~M. (2007).
\newblock Sparse {MRI}: The application of compressed sensing for rapid {MR}
  imaging.
\newblock {\em Magnetic Resonance in Medicine: An Official Journal of the
  International Society for Magnetic Resonance in Medicine}, 58:1182--1195.

\bibitem[Mahajan et~al., 2017]{mahajan2017minotaur}
Mahajan, A., Leyffer, S., Linderoth, J., Luedtke, J., and Munson, T. (2017).
\newblock Minotaur: A mixed-integer nonlinear optimization toolkit.
\newblock Technical report, ANL/MCS-P8010-0817, Argonne National Lab.

\bibitem[Mammen et~al., 1997]{mammen1997locally}
Mammen, E., van~de Geer, S., et~al. (1997).
\newblock Locally adaptive regression splines.
\newblock {\em The Annals of Statistics}, 25:387--413.

\bibitem[Mazumder et~al., 2011]{mazumder2011sparsenet}
Mazumder, R., Friedman, J.~H., and Hastie, T. (2011).
\newblock Sparsenet: Coordinate descent with nonconvex penalties.
\newblock {\em Journal of the American Statistical Association},
  106:1125--1138.

\bibitem[Mazumder et~al., 2017]{mazumder2017subset}
Mazumder, R., Radchenko, P., and Dedieu, A. (2017).
\newblock Subset selection with shrinkage: Sparse linear modeling when the
  {SNR} is low.
\newblock {\em arXiv preprint arXiv:1708.03288}.

\bibitem[Miller, 2002]{miller2002subset}
Miller, A. (2002).
\newblock {\em {Subset Selection in Regression}}.
\newblock CRC Press.

\bibitem[Nemirovski and Todd, 2008]{nemirovski2008interior}
Nemirovski, A.~S. and Todd, M.~J. (2008).
\newblock Interior-point methods for optimization.
\newblock {\em Acta Numerica}, 17(1):191--234.

\bibitem[Nevo and Ritov, 2017]{nevo2017}
Nevo, D. and Ritov, Y. (2017).
\newblock Identifying a minimal class of models for high-dimensional data.
\newblock {\em Journal of Machine Learning Research}, 18:797--825.

\bibitem[Padilla et~al., 2018]{padilla2017}
Padilla, O. H.~M., Sharpnack, J., Scott, J.~G., and Tibshirani, R.~J. (2018).
\newblock The {DFS} fused lasso: Linear-time denoising over general graphs.
\newblock {\em Journal of Machine Learning Research}, 18(176):1--36.

\bibitem[Pilanci et~al., 2015]{pilanci2015sparse}
Pilanci, P., Wainwright, M.~J., and El~Ghaoui, L. (2015).
\newblock Sparse learning via boolean relaxations.
\newblock {\em Mathematical Programming}, 151:63--87.

\bibitem[Plemmons, 1977]{plemmons1977m}
Plemmons, R.~J. (1977).
\newblock M-matrix characterizations. {I} -- nonsingular {M}-matrices.
\newblock {\em Linear Algebra and its Applications}, 18:175--188.

\bibitem[Poggio et~al., 1985]{poggio1985computational}
Poggio, T., Torre, V., and Koch, C. (1985).
\newblock Computational vision and regularization theory.
\newblock {\em Nature}, 317:314.

\bibitem[Qin and Goldfarb, 2012]{qin2012}
Qin, Z. and Goldfarb, D. (2012).
\newblock Structured sparsity via alternating direction methods.
\newblock {\em Journal of Machine Learning Research}, 13:1435--1468.

\bibitem[Rinaldo et~al., 2009]{rinaldo2009properties}
Rinaldo, A. et~al. (2009).
\newblock Properties and refinements of the fused lasso.
\newblock {\em The Annals of Statistics}, 37:2922--2952.

\bibitem[Rudin et~al., 1992]{rudin1992nonlinear}
Rudin, L.~I., Osher, S., and Fatemi, E. (1992).
\newblock Nonlinear total variation based noise removal algorithms.
\newblock {\em Physica D: Nonlinear Phenomena}, 60:259--268.

\bibitem[Shen et~al., 2013]{shen2013constrained}
Shen, X., Pan, W., Zhu, Y., and Zhou, H. (2013).
\newblock On constrained and regularized high-dimensional regression.
\newblock {\em Annals of the Institute of Statistical Mathematics},
  65:807--832.

\bibitem[Shepard et~al., 2008]{shepard2008identification}
Shepard, E.~L., Wilson, R.~P., Quintana, F., Laich, A.~G., Liebsch, N.,
  Albareda, D.~A., Halsey, L.~G., Gleiss, A., Morgan, D.~T., Myers, A.~E.,
  et~al. (2008).
\newblock Identification of animal movement patterns using tri-axial
  accelerometry.
\newblock {\em Endangered Species Research}, 10:47--60.

\bibitem[Tibshirani, 1996]{tibshirani1996regression}
Tibshirani, R. (1996).
\newblock Regression shrinkage and selection via the lasso.
\newblock {\em Journal of the Royal Statistical Society. Series B
  (Methodological)}, pages 267--288.

\bibitem[Tibshirani, 2011]{tibshirani2011regression}
Tibshirani, R. (2011).
\newblock Regression shrinkage and selection via the lasso: A retrospective.
\newblock {\em Journal of the Royal Statistical Society: Series B (Statistical
  Methodology)}, 73:273--282.

\bibitem[Tibshirani et~al., 2005]{tibshirani2005sparsity}
Tibshirani, R., Saunders, M., Rosset, S., Zhu, J., and Knight, K. (2005).
\newblock Sparsity and smoothness via the fused lasso.
\newblock {\em Journal of the Royal Statistical Society: Series B (Statistical
  Methodology)}, 67:91--108.

\bibitem[Tibshirani et~al., 2014]{tibshirani2014adaptive}
Tibshirani, R.~J. et~al. (2014).
\newblock Adaptive piecewise polynomial estimation via trend filtering.
\newblock {\em The Annals of Statistics}, 42:285--323.

\bibitem[Tibshirani and Taylor, 2011]{tibshirani2011solution}
Tibshirani, R.~J. and Taylor, J. (2011).
\newblock The solution path of the generalized lasso.
\newblock {\em Annals of Statistics}, 39:1335--1371.

\bibitem[Varga, 1976]{varga1976recurring}
Varga, R.~S. (1976).
\newblock On recurring theorems on diagonal dominance.
\newblock {\em Linear Algebra and its Applications}, 13(1-2):1--9.

\bibitem[Vogel and Oman, 1996]{vogel1996iterative}
Vogel, C.~R. and Oman, M.~E. (1996).
\newblock Iterative methods for total variation denoising.
\newblock {\em SIAM Journal on Scientific Computing}, 17:227--238.

\bibitem[Vogelstein et~al., 2010]{vogelstein2010fast}
Vogelstein, J.~T., Packer, A.~M., Machado, T.~A., Sippy, T., Babadi, B., Yuste,
  R., and Paninski, L. (2010).
\newblock Fast nonnegative deconvolution for spike train inference from
  population calcium imaging.
\newblock {\em Journal of Neurophysiology}, 104:3691--3704.

\bibitem[Wilson et~al., 2008]{wilson2008prying}
Wilson, R.~P., Shepard, E., and Liebsch, N. (2008).
\newblock Prying into the intimate details of animal lives: Use of a daily
  diary on animals.
\newblock {\em Endangered Species Research}, 4:123--137.

\bibitem[Wilson and Sahinidis, 2017]{wilson2017alamo}
Wilson, Z.~T. and Sahinidis, N.~V. (2017).
\newblock The {ALAMO} approach to machine learning.
\newblock {\em Computers \& Chemical Engineering}, 106:785--795.

\bibitem[Wu et~al., 2017]{wu2017quadratic}
Wu, B., Sun, X., Li, D., and Zheng, X. (2017).
\newblock Quadratic convex reformulations for semicontinuous quadratic
  programming.
\newblock {\em {SIAM} Journal on Optimization}, 27:1531--1553.

\bibitem[Yang et~al., 2010]{yang2010fast}
Yang, A.~Y., Sastry, S.~S., Ganesh, A., and Ma, Y. (2010).
\newblock Fast $\ell_1$-minimization algorithms and an application in robust
  face recognition: A review.
\newblock In {\em Image Processing (ICIP), 2010 17th IEEE International
  Conference on}, pages 1849--1852. IEEE.

\bibitem[Zhang et~al., 2010]{zhang2010nearly}
Zhang, C.-H. et~al. (2010).
\newblock Nearly unbiased variable selection under minimax concave penalty.
\newblock {\em The Annals of Statistics}, 38:894--942.

\bibitem[Zhang et~al., 2014]{zhang2014lower}
Zhang, Y., Wainwright, M.~J., and Jordan, M.~I. (2014).
\newblock Lower bounds on the performance of polynomial-time algorithms for
  sparse linear regression.
\newblock In {\em Conference on Learning Theory}, pages 921--948.

\bibitem[Zheng et~al., 2014]{zheng2014high}
Zheng, Z., Fan, Y., and Lv, J. (2014).
\newblock High dimensional thresholded regression and shrinkage effect.
\newblock {\em Journal of the Royal Statistical Society: Series B (Statistical
  Methodology)}, 76:627--649.

\bibitem[Zou, 2006]{zou2006adaptive}
Zou, H. (2006).
\newblock The adaptive lasso and its oracle properties.
\newblock {\em Journal of the American Statistical Association},
  101:1418--1429.

\bibitem[Zou and Hastie, 2005]{zou2005regularization}
Zou, H. and Hastie, T. (2005).
\newblock Regularization and variable selection via the elastic net.
\newblock {\em Journal of the Royal Statistical Society: Series B (Statistical
  Methodology)}, 67:301--320.

\end{thebibliography}

\end{document}